\documentclass[10pt]{article}

\usepackage[preprint]{tmlr}
\usepackage{amsmath,amssymb,amsfonts}
\usepackage{graphicx}
\usepackage{algorithm}
\usepackage{algorithmic}
\usepackage{placeins}
\usepackage{hyperref}
\usepackage{url}
\usepackage{bm}
\usepackage{amsthm}
\usepackage{thmtools}
\usepackage{mathtools}
\usepackage{xcolor}
\usepackage{booktabs}
\usepackage{array}
\usepackage{enumitem}
\usepackage{pgfplots}
\usepgfplotslibrary{groupplots}
\usetikzlibrary{arrows.meta,positioning,fit,backgrounds}
\pgfplotsset{compat=1.18}
\definecolor{gdcblue}{HTML}{1F5AA6}
\definecolor{gdcgreen}{HTML}{2A7F62}
\definecolor{gdcorange}{HTML}{C26A2E}
\definecolor{gdcgray}{HTML}{5F6673}
\pgfplotsset{
  gdc axis/.style={
    font=\small,
    label style={font=\small},
    tick label style={font=\scriptsize},
    title style={font=\small\bfseries},
    legend style={font=\scriptsize, draw=none, fill=none, cells={anchor=west}},
    axis line style={black!55},
    tick style={black!55},
    major grid style={black!10},
    grid=major,
    line width=0.9pt,
    tick align=outside
  }
}
\hypersetup{colorlinks=true, linkcolor=blue, citecolor=blue, urlcolor=blue}

\newtheorem{theorem}{Theorem}[section]
\newtheorem{proposition}[theorem]{Proposition}

\newtheorem{corollary}[theorem]{Corollary}
\newtheorem{definition}[theorem]{Definition}
\newtheorem{remark}[theorem]{Remark}
\newtheorem{assumption}[theorem]{Assumption}

\DeclareMathOperator{\grad}{grad}

\DeclareMathOperator{\dive}{div}
\DeclareMathOperator*{\argmin}{arg\,min}

\newcommand{\R}{\mathbb{R}}
\newcommand{\E}{\mathbb{E}}
\newcommand{\norm}[1]{\left\| #1 \right\|}
\newcommand{\inner}[2]{\langle #1,\, #2 \rangle}
\newcommand{\dd}{\mathrm{d}}
\newcommand{\Mhat}{\widehat{\mathcal{M}}^*}
\newcommand{\Cknown}{\mathcal{C}_{\mathrm{known}}}
\newcommand{\Cimpl}{\mathcal{C}_{\mathrm{implicit}}}
\newcommand{\volG}{\mathrm{vol}_G}
\newcommand{\TV}{\mathrm{TV}}

\title{Geometric Distributional Control: Learning Progress with Partial Structural Knowledge}

\author{Tong Wu\\
\addr University of Central Florida}
\date{}

\makeatletter
\def\subsubsection{\@startsection{subsubsection}{3}{\z@}{-1.2ex
plus -0.4ex minus -.2ex}{0.4ex plus .2ex}{\normalsize\bfseries\itshape\raggedright}}
\makeatother
\begin{document}

\maketitle
\makeatletter
\let\AND\@undefined
\makeatother

\begin{abstract}
Real-time control often lies between two limiting regimes. Predictive
optimization and model-based control are powerful when dynamics, physical
parameters, objectives, and online planning models are accurately specified,
while reinforcement learning can avoid explicit modeling but must infer
long-horizon value signals from sequential data and interaction, making training
slow, high-variance, and difficult to scale in large action spaces. This middle
regime appears in deployed systems such as autonomous driving, warehouse and
fulfillment robotics, urban traffic-network control, and autonomous delivery
drones, where partial geometry, physics, rules, or constraints are known but the
local direction of task progress remains uncertain.
Geometric Distributional Control (GDC) exploits this middle regime by
factorizing control into feasibility and progress. Known geometry, rules,
constraints, and response maps define an executable scaffold; progress-weighted
feasible data learns a Bellman-like local value-gradient that selects directions
on this scaffold. This yields a value-guided controller that avoids both global
Bellman recursion and black-box policy learning. The required background
knowledge can be lightweight and partial, such as geometry, simple dynamics,
safety rules, constraints, projections, or lower-level response maps; it need
not specify the full system dynamics or long-horizon objective. Offline, GDC
fits a progress-tilted conditional distribution from short known-feasible
trajectory or optimization snippets with weak signed progress certificates.
Online, its score is composed with explicit constraints, projected through the
scaffold, and applied in receding-horizon feedback. We prove conditions under
which the score descends a data-induced soft progress value and validate GDC on
structured multilevel optimization and SUMO route-progress driving, where it
improves over known-only solvers and learning baselines while preserving
scaffold-enforced feasibility.
\end{abstract}

\section{Introduction}
\label{sec:intro}

\subsection{Background and motivation}

Many deployed control systems are neither fully specified planning problems nor fully
unknown black-box decision processes. They come with partial but actionable structure,
such as actuator limits, local geometry, collision checks, safety rules, flow
constraints, or lower-level solvers, yet still lack the value information needed to
decide which feasible action best advances the task. Predictive optimization and
model-based control can organize such structure into constrained planning problems when
the relevant dynamics, parameters, objectives, and horizons are sufficiently specified
\cite{mayne2000constrained,buerger2024safelearning}. Reinforcement learning offers a
different route by learning value or policy information from sequential data and
interaction \cite{sutton2018reinforcement,levine2018reinforcement}, but long-horizon
value estimation and exploration can be slow, high-variance, and hard to scale when
actions are high-dimensional or safety constraints are strict
\cite{achiam2017cpo,kumar2020cql,kostrikov2022iql}. The systems motivating this paper,
including autonomous driving, warehouse and fulfillment robotics, urban traffic-network
control, and autonomous delivery drones, live in the gap between these two views: they
have enough structure to constrain action, but not enough global knowledge to determine
which feasible local action best advances the task.

The missing information is closely related to the role of a Bellman value in dynamic
programming and optimal control \cite{bellman1957dynamic,bertsekas2017dynamic,
bardi1997optimal}. If the relevant value function were available, it would provide a
principled way to rank locally feasible actions by their downstream consequences. In the
partially structured settings considered here, however, that global Bellman function is
not directly available: interaction effects, future constraints, objectives, or
cross-level couplings may be incomplete or changing. The practical issue is therefore
not only how to enforce feasibility, but how to obtain a useful value-like direction on
top of feasibility.

This paper treats progress, rather than the full Bellman function, as the missing
learned object. We do not ask data to recover a global cost-to-go, relearn the known
scaffold, or provide complete expert demonstrations. Instead, the data consist of short
known-feasible trajectory or optimization segments with weak signed progress
certificates. A segment may wait, yield, reverse, or reposition; it need not be a
globally optimal action. Aggregating such snippets defines a progress-tilted
distribution over feasible local motions, whose score acts as a local progress-value
gradient in data-supported contexts.

GDC implements this separation by using partial structure to anchor the support of the
learned distribution. Known constraints, rules, response maps, and projections define a
compact latent scaffold $\Mhat$ on which feasible behavior is represented, while
score-based distributional learning \cite{score_matching,song2020score,
debortoli2022riemannian} biases motion toward directions with lower progress energy.
Online, the learned score is composed with explicit known-constraint potentials, filters,
and final projections before execution. This makes feasibility easier to preserve than
in a purely black-box policy interface: learning guides the direction of motion, but the
known scaffold still determines what can be executed.

\subsection{Related work}

GDC is a methodology for converting feasible-progress data into an executable geometric
control field. We therefore organize related work by methodological role: value,
distribution, geometry, and feasibility, rather than by the application domains used for
evaluation.

\textbf{Value functions, predictive control, and partial structure.}
Dynamic programming and optimal control use value functions to rank actions by their
future consequences \cite{bellman1957dynamic,bertsekas2017dynamic,bardi1997optimal}.
Model-predictive control uses a specified dynamics model, objective, and horizon to
compute a local action by online constrained optimization
\cite{mayne2000constrained,buerger2024safelearning}. GDC starts from a different
information pattern: the controller has reliable partial structure that can rule out
infeasible actions, but it does not have a complete model or Bellman value that ranks the
remaining feasible directions. The methodological target is therefore a local
progress-value gradient on a known-feasible scaffold, not a full cost-to-go or a complete
online planner.

\textbf{Dataset-supported value learning.}
Offline reinforcement learning learns policies or value functions from fixed datasets
while controlling distribution shift. Conservative and implicit value methods avoid
overly optimistic extrapolation away from the dataset support
\cite{kumar2020cql,kostrikov2022iql}. GDC shares the concern that learned decisions
should remain data-supported, but it weakens the learning target. The dataset need only
contain known-feasible local segments with signed progress certificates; it need not
contain expert rollouts or enough information to identify a reward-optimal policy. The
learned object is a progress energy whose score chooses among feasible local motions, and
the execution layer still enforces explicit constraints.

\textbf{Score-based distributions as control energies.}
Score matching and score-based generative modeling learn gradients of log densities
\cite{score_matching,song2020score}. In decision-making settings, diffusion models have
been used as trajectory planners or expressive policy classes: Diffuser performs planning
through guided denoising \cite{janner2022diffuser}, Diffusion Policy samples actions from
a conditional denoising process \cite{chi2023diffusion}, and diffusion policies have also
been used in offline RL or fine-tuned with policy-gradient objectives
\cite{wang2023diffusionql,zhu2023diffusionsurvey,ren2024dppo}. GDC uses the same
statistical object--a learned conditional density and its score--with a different
interface to control. The distribution is not the policy to execute. Its score is
interpreted as a progress-energy gradient and inserted into a structured latent vector
field together with known-constraint and task potentials.

\textbf{Geometry of learned distributions.}
Riemannian score-based modeling extends score learning to manifold-supported
distributions \cite{debortoli2022riemannian}, while variational formulations of the
Fokker--Planck equation view diffusion dynamics as Wasserstein gradient flows of free
energy \cite{jordan1998variational}. Neural differential equations and physics-informed
learning show complementary ways to encode continuous dynamics or physical residuals in
learned models \cite{chen2018node,raissi2019pinn}. GDC uses these geometric ideas for a
control-specific purpose: partial structure defines the latent scaffold, metric,
potentials, and projection maps on which the learned score is allowed to act. The
geometry is not only a generative modeling device; it is the interface that makes the
progress score composable with explicit feasibility constraints.

\textbf{Explicit feasibility layers.}
Safe and constrained learning methods encode safety as constraints on expected costs,
state-wise conditions, or policy updates
\cite{achiam2017cpo,zhao2023statewise,wachi2024constraints,su2025review}. Runtime shields instead
filter or repair actions proposed by a learned policy
\cite{carr2023shielding,konighofer2025shields}. GDC follows the broader principle that
feasibility should not be left entirely to a black-box policy, but changes where the
constraints enter the method. Known constraints define the training support, latent
scaffold, online potentials, candidate filters, and final projection. Thus feasibility is
not merely a penalty or a post-hoc correction; it is the coordinate system in which the
learned progress direction is represented.

\textbf{Structured solvers as known scaffolds.}
Differentiable optimization layers and implicit-differentiation methods expose the
solution maps of specified optimization problems to learning systems
\cite{amos2017optnet,agrawal2019diffcvx,bertrand2020implicit}. These methods are most
direct when the relevant optimization problem and its solution map are the object to
differentiate through. GDC uses structured solvers more abstractly: a response map,
projection, conservation law, or constraint checker can be part of $\Cknown$, while the
missing object remains the progress direction over the resulting feasible scaffold. This
keeps the methodology independent of any one structured application.

The resulting mathematical shift is:
\begin{equation}
  \underbrace{u_t \sim p_\theta(u \mid o_t)}_{\text{diffusion policy: sampling = control}}
  \quad \longrightarrow \quad
  \underbrace{\dot{z} = s_\theta(z,o,g,\Cknown,H^\star)
  - \nabla\Phi_{\mathrm{known}}(z)
  - \nabla\Phi_{\mathrm{task}}(z)}_{\text{GDC: progress score + known constraints}}.
  \label{eq:core_shift}
\end{equation}
The learned path therefore does not certify feasibility by itself and is not asked to
rediscover the known physics from data. Its role is to rank and connect feasible local
behaviors in the directions that progress certificates support. This energy view also
makes the controller modular: new safety margins, task costs, resource penalties, or
operator preferences can be added as potentials without retraining the progress path.

\subsection{Contributions}

We propose Geometric Distributional Control for partial-physics control problems where
known constraints are reliable but the full directional solution geometry is unavailable.
The main contributions are:

\begin{itemize}[leftmargin=2em]
  \item \textbf{Feasible-progress scaffold for partial physics.}
        GDC constructs an empirical manifold $\Mhat$ from known-feasible action segments
        and conditions it on observations, goals, and explicit known constraints. This
        replaces an unavailable analytic solution geometry with a data-supported
        feasible-progress geometry.

  \item \textbf{Progress-tilted probabilistic paths.}
        Signed progress certificates tilt the conditional distribution of short feasible
        segments toward locally better behavior in expectation, without requiring every
        segment to improve monotonically. Diffusion-style paths are one implementation,
        but any differentiable path or density model that yields a score can be used.

  \item \textbf{Score-guided geometric control.}
        The learned score is used as a progress-energy direction inside a Riemannian
        latent flow, combined with known-constraint potentials, candidate filtering, and
        final projection. Thus probability supplies direction, while explicit physics
        continues to define deployment-time feasibility.

  \item \textbf{Formal connection between progress data and control direction.}
        Under a maximum-entropy demonstration model, we show that the learned score
        approximates the negative gradient of a data-induced soft progress value. We also
        derive stability, score-error robustness, non-accumulating drift, end-to-end
        action-error, and physics--data interpolation results.
\end{itemize}

As supplementary validation, we instantiate the same methodology on parameterized
bilevel and three-level optimization families and on SUMO route-progress driving. These
experiments are used to test whether the method's role separation--learned progress
direction plus explicit known-constraint filtering--transfers across different kinds of
partial structure.

The remainder of the paper is organized as follows.
Section~\ref{sec:formulation} defines the partial-physics control problem.
Section~\ref{sec:manifold} gives the offline and online GDC method.
Section~\ref{sec:formal-guarantees} states the main assumptions and guarantees.
Section~\ref{sec:applications} defines the experimental instantiations and reports the
current simulation results. Section~\ref{sec:discussion} concludes.
Appendix~\ref{sec:appendix_sumo_diagnostics} reports additional SUMO diagnostics, and
Appendix~\ref{sec:theory} contains the expanded progress-value, Riemannian-flow,
stochastic, stability, and physics--data interpolation analysis.

\begin{figure}[!htbp]
\centering
\resizebox{\linewidth}{!}{%
\begin{tikzpicture}[
  font=\scriptsize,
  box/.style={draw=black!45, rounded corners=2pt, align=center, inner sep=4pt,
              minimum height=1.02cm, text width=2.45cm, fill=white},
  offline/.style={box, draw=gdcblue!70, fill=gdcblue!6},
  online/.style={box, draw=gdcorange!70, fill=gdcorange!7},
  guard/.style={draw=gdcgreen!70, fill=gdcgreen!7, rounded corners=2pt,
                align=center, inner sep=4pt, minimum height=0.62cm},
  arrow/.style={-{Stealth[length=2.1mm]}, line width=0.65pt, draw=black!62},
  softarrow/.style={-{Stealth[length=1.7mm]}, line width=0.52pt, draw=black!45},
  offlineflow/.style={softarrow, draw=gdcblue!72},
  scaffoldflow/.style={softarrow, draw=gdcgreen!68}
]
\node[offline] (segments) at (-4.65,1.05)
  {\textbf{Feasible segments}\\$(o,\tau,g,r)$\\with progress};
\node[offline] (horizon) at (-1.55,1.05)
  {\textbf{Reliable horizon}\\select $H^\star$\\from certificates};
\node[offline] (latent) at (1.55,1.05)
  {\textbf{Latent scaffold}\\$q_\xi, D_\psi$\\and $\widehat{\mathcal M}$};
\node[offline] (score) at (4.65,1.05)
  {\textbf{Progress score}\\learn $p_\theta^Z$\\and $s_\theta$};

\node[font=\scriptsize\bfseries, text=gdcblue!80] at (-5.95,1.86) {Offline fit};
\node[font=\scriptsize\bfseries, text=gdcorange!82] at (-5.88,-0.05) {Online control};

\node[online] (observe) at (-4.65,-0.72)
  {\textbf{Observe}\\context $(o_t,g)$\\known limits};
\node[online] (energy) at (-1.55,-0.72)
  {\textbf{Compose energy}\\progress score\\plus barriers};
\node[online] (decode) at (1.55,-0.72)
  {\textbf{Decode}\\\textbf{candidates}\\sample/filter/rank};
\node[online] (execute) at (4.65,-0.72)
  {\textbf{Project and act}\\enforce $\Cknown$\\execute $u_t$};

\node[guard, text width=10.9cm] (known) at (0,-2.02)
  {\textbf{Known scaffold:} $\Cknown$ supplies barriers, anchors, checks, projection, and fallback.};

\draw[arrow] (segments) -- (horizon);
\draw[arrow] (horizon) -- (latent);
\draw[arrow] (latent) -- (score);
\draw[arrow] (observe) -- (energy);
\draw[arrow] (energy) -- (decode);
\draw[arrow] (decode) -- (execute);
\draw[offlineflow] (horizon.south) -- (-1.55,0.14) -- (energy.north);
\draw[offlineflow] (score.south) -- (4.65,0.14) -- (-1.55,0.14) -- (energy.north);
\draw[scaffoldflow] (execute.south) -- (4.65,-1.48) -- (0,-1.48) -- (known.north);
\end{tikzpicture}
}
\caption{Overview of GDC. Known-feasible segments with weak progress
certificates are used offline to build a latent feasible-progress geometry and
score field. Online control follows the induced progress-energy direction,
while the explicit known scaffold remains responsible for filtering,
projection, fallback, and execution safety.}
\label{fig:gdc_overview}
\end{figure}

\section{Problem Formulation}
\label{sec:formulation}

\subsection{Notation}

Scalars are written in italic lowercase letters, vectors in bold
lowercase letters when their component structure matters, and sets in
calligraphic uppercase letters. The system state, observation, control
input, and goal at time $t$ are denoted by $x_t\in\mathcal X$,
$o_t\in\mathcal O$, $u_t\in\mathcal U$, and $g\in\mathcal G$,
respectively. A short action segment is
$\tau_t=(u_t,\ldots,u_{t+H-1})$, and its signed progress certificate is
$\Delta\rho(\tau_t,g)$. The explicit known constraint set is
$\Cknown$, while $\Cimpl$ denotes residual implicit feasibility and
preference structure represented statistically from data. The selected
segment horizon is $H^\star(o,g,\Cknown)$; if no horizon has reliable
positive progress, $H^\star=\varnothing$ and the learned progress
direction is not used. The progress-tilted empirical segment distribution
at tilt parameter $\beta$ is denoted by
$q_\beta(\tau\mid o,g,\Cknown,H^\star)$.

Latent action-segment coordinates are denoted by $z\in\mathcal Z$.
The empirical feasible-progress map is $\Mhat\subseteq\mathcal Z$, with
encoder/posterior $q_\xi(z\mid\tau,o,g,\Cknown)$ and horizon-conditioned decoder
$D_\psi(z,o,g,\Cknown,H^\star)$. The learned progress-tilted latent density is
$p_\theta^Z(z\mid o,g,\Cknown,H^\star)$, its score is
$s_\theta(z,o,g,\Cknown,H^\star)=\nabla_z\log p_\theta^Z(z\mid o,g,\Cknown,H^\star)$,
and $G(z)$ denotes the
decoder-pullback metric used for Riemannian gradients. When decoded segment
or trajectory densities appear, we write $p_\theta^\tau$; superscripts are
omitted only where the density domain is unambiguous. We write
$p_\theta^\tau(\cdot\mid z,g,\Cknown,H^\star)$ for the segment law decoded at a latent
point $z$ and $p_\theta^\tau(\cdot\mid o,g,\Cknown,H^\star)$ for its marginal under
$p_\theta^Z(\cdot\mid o,g,\Cknown,H^\star)$. We write $\|\cdot\|$
for the Euclidean norm unless a metric subscript is shown.

\subsection{Problem setting and data}

\paragraph{Explicit and implicit physics.}

Consider a discrete-time dynamical system
\begin{equation}
  x_{t+1} = f(x_t, u_t),
  \label{eq:dynamics}
\end{equation}
where $x_t \in \mathcal{X} \subseteq \R^n$ is the system state and
$u_t \in \mathcal{U} \subseteq \R^m$ is the control input.
The controller has access to partial observations
\begin{equation}
  o_t = h(x_t) + \varepsilon_t,
  \label{eq:observation}
\end{equation}
where $h : \mathcal{X} \to \mathcal{O}$ is the (possibly nonlinear) observation map
and $\varepsilon_t$ is measurement noise.

GDC assumes that some physical structure is explicit enough to be enforced online. This
includes known constraints, actuator limits, geometric boundaries, conservation laws, or
low-order dynamics that can be evaluated reliably. The remaining structure is treated as
\emph{implicit physics}: it may be unknown, but it may also be known in principle and
simply too complex, coupled, high-dimensional, or expensive to model and optimize online.
We write this bookkeeping distinction as
\begin{equation}
  f = f_{\mathrm{explicit}} + f_{\mathrm{implicit}}, \qquad
  \mathcal{C} = \Cknown \cap \Cimpl,
  \label{eq:implicit_physics_decomposition}
\end{equation}
where $f_{\mathrm{explicit}}$ and $\Cknown$ are the parts handled by explicit geometry,
barriers, anchors, or filters, while $f_{\mathrm{implicit}}$ and $\Cimpl$ denote residual
dynamics, contact effects, cross-layer couplings, latent preferences, and feasibility
patterns that are learned statistically from data. This decomposition is not required to
be a physically identifiable additive split; it specifies what is enforced analytically
and what is represented through the learned progress distribution.

\paragraph{Feasible trajectory data with progress certificates.}

We assume access to offline known-feasible trajectories from which short segments can be
extracted. After the horizon-selection procedure below, the training set is written as
\begin{equation}
  \mathcal{D} = \{(o_i, u_i, \tau_i, g_i, r_i)\}_{i=1}^{N},
  \label{eq:dataset}
\end{equation}
where $\tau_i = (u_{i,t}, \ldots, u_{i,t+H_i-1})$ is a selected
horizon-$H_i$ trajectory segment containing $H_i$ controls, $g_i$
is a goal or task specification, and $r_i$ is a weak progress certificate. We do not
assume that the trajectory segment reaches the final goal, nor that every segment has
positive progress. Instead, each segment is known-feasible with respect to $\Cknown$ and
carries a signed certificate such as
\begin{equation}
  r_i = \Delta\rho(\tau_i,g_i)
  = \rho(x_{i,t+H_i},g_i)-\rho(x_{i,t},g_i),
\end{equation}
for a coarse progress measure $\rho$. Positive $r_i$ indicates local improvement, while
negative or near-zero $r_i$ may still correspond to necessary waiting, backing up,
repositioning, or safety-preserving behavior. The training objective uses these
certificates to bias the learned conditional distribution so that progress is positive in
expectation over short trajectories under the data-supported contexts:
\begin{equation}
  \E_{\tau \sim p_{\mathcal{D}}(\cdot \mid o,g,\Cknown)}
  \!\left[\Delta\rho(\tau,g)\right] > 0,
  \label{eq:data_expected_progress}
\end{equation}
rather than imposing monotone progress on every individual snippet. No dense reward or
complete success labels are required.

\paragraph{Offline horizon selection.}

The segment length is not fixed by hand. For each context neighborhood
$\mathcal{N}(o,g,\Cknown)$ and a candidate set
$\mathcal{H}=\{H_1,\ldots,H_M\}$, GDC chooses the shortest horizon whose progress tilt is
statistically reliable. For each $H\in\mathcal{H}$, cut all known-feasible windows
$\tau_i^H=(u_{i,t},\ldots,u_{i,t+H-1})$ in the neighborhood and compute
$r_i^H=\Delta\rho(\tau_i^H,g_i)
=\rho(x_{i,t+H},g_i)-\rho(x_{i,t},g_i)$. Define the clipped progress weight
\begin{equation}
  w_i^H = \exp\!\left(\beta\,\mathrm{clip}(r_i^H,-c,c)\right),
\end{equation}
and the weighted empirical progress
\begin{equation}
  \hat{\mu}_H(o,g,\Cknown)
  =
  \frac{\sum_{i\in\mathcal{N}(o,g,\Cknown)} w_i^H r_i^H}
       {\sum_{i\in\mathcal{N}(o,g,\Cknown)} w_i^H}.
  \label{eq:horizon_weighted_progress}
\end{equation}
Let $\hat{\sigma}_H$ be the corresponding weighted standard deviation, $N_H$ the
number of feasible windows, and
\begin{equation}
  N_H^{\mathrm{eff}}(o,g,\Cknown)
  =
  \frac{\left(\sum_{i\in\mathcal{N}(o,g,\Cknown)} w_i^H\right)^2}
       {\sum_{i\in\mathcal{N}(o,g,\Cknown)} (w_i^H)^2}
  \label{eq:horizon_effective_sample_size}
\end{equation}
the effective sample size of the tilted neighborhood. Candidate horizons with no feasible
windows in the neighborhood are treated as inadmissible, so the lower confidence bound is
evaluated only when $N_H^{\mathrm{eff}}>0$. We use
\begin{equation}
  \widehat{\mathrm{LCB}}_H(o,g,\Cknown)
  =
  \hat{\mu}_H(o,g,\Cknown)
  -
  \alpha_{\mathrm{conf}}
  \frac{\hat{\sigma}_H(o,g,\Cknown)}
       {\sqrt{N_H^{\mathrm{eff}}(o,g,\Cknown)}}.
  \label{eq:horizon_lcb}
\end{equation}
The selected horizon is
\begin{equation}
  H^\star(o,g,\Cknown)
  =
  \min\left\{
  H\in\mathcal{H}:
  \widehat{\mathrm{LCB}}_H(o,g,\Cknown)>\delta_{\mathrm{prog}},
  \; N_H^{\mathrm{eff}}(o,g,\Cknown)\ge N_{\min}
  \right\}.
  \label{eq:selected_horizon}
\end{equation}
Thus $H^\star$ is the minimum reliable positive-progress horizon: short enough for
online receding-horizon control, but long enough that the expected progress of the
tilted segment distribution is positive with confidence.

If the set in \eqref{eq:selected_horizon} is empty, GDC does not claim a
positive-progress guarantee in that context:
\begin{equation}
  H^\star(o,g,\Cknown)=\varnothing.
  \label{eq:no_reliable_horizon}
\end{equation}
Such data may still train the feasible action-segment map, because it describes what can
be executed safely, but the progress tilt is disabled or downweighted for that context.
Online control then falls back to the known-feasible controller or sets $\lambda\to 1$ in
the physics--data interpolation. This abstention rule prevents the learned path from
pretending that missing data contain a reliable direction.
In the notation below, when the selected horizon is context-dependent, $H^\star$ is
treated as part of the conditioning context and is suppressed unless it matters for the
algorithm.

\paragraph{Control objective.}

The goal is to design a feedback controller
$u_t = \pi(o_t, g)$
that produces safe, task-consistent, and \emph{directionally correct} actions by leveraging
both the geometric structure of $\Mhat$ and the distributional knowledge encoded
in $\mathcal{D}$.

Formally, let $\mathcal{C} \subset \mathcal{U}$ denote the full safety constraint set and
$\ell : \mathcal{O} \times \mathcal{U} \to \R$ a task cost. The ideal constrained
feedback objective is
\begin{equation}
  \min_{\pi} \; \E\!\left[\sum_{t=0}^{T} \ell(o_t, u_t)\right]
  \quad \text{subject to} \quad u_t \in \mathcal{C} \;\; \forall t.
\end{equation}

GDC addresses this by enforcing $\Cknown$ explicitly while learning a data potential
whose score represents the implicit physics, coupled feasibility patterns, and
goal-conditioned progress information captured by feasible trajectory demonstrations with
progress certificates. Consequently, the hard execution guarantee is stated for
$\Cknown$; membership in $\Cimpl$ is a data-supported statistical property rather than an
analytic certificate outside the modeled support.

\section{Geometric Distributional Control Method}
\label{sec:manifold}

\subsection{Overview and offline construction}

GDC has an offline construction stage and an online navigation stage. The term
\emph{Geometric Distributional Control} refers to the complete framework, while
\emph{GDC online control} refers to the frozen feedback controller deployed after
offline learning. The two stages are:
\begin{center}
\begin{tabular}{p{0.18\linewidth}p{0.35\linewidth}p{0.35\linewidth}}
\toprule
\textbf{Stage} & \textbf{Inputs} & \textbf{Outputs / actions} \\
\midrule
Offline learning &
Known-feasible trajectories, progress certificates, goals, observations, and
$\Cknown$ &
Horizon selector $H^\star$, posterior model
$q_\xi(z\mid\tau,o,g,\Cknown)$, decoder $D_\psi$ or likelihood $p_\psi$, progress path
$p_\theta^Z(z\mid o,g,\Cknown,H^\star)$, and score $s_\theta$ \\
\midrule
Online control &
Current observation $o_t$, goal $g$, known constraints $\Cknown$, and fixed offline
models &
Evaluate $H_t^\star$, sample/refine latent candidates $z$, decode candidate action
segments, filter by known constraints, execute the first action $u_t$ \\
\bottomrule
\end{tabular}
\end{center}
No model is retrained online. If the selected horizon is unavailable in the current
context, GDC abstains from using the progress guarantee and falls back to the
known-feasible controller or increases the known-physics weight $(\lambda\to 1)$.

The offline stage has one purpose: turn long feasible behavior into a compact, anchored
distribution of local progress patterns. Rather than assuming complete physics or
globally solved optimal trajectories, GDC starts from known-feasible trajectories and cuts them
into short snippets. The known constraints define the feasible scaffold on which those
snippets can live. The progress certificates then tilt a probabilistic path model toward
snippets that move in locally preferable directions. The probabilistic path approximates
the missing or unobserved physics statistically; the known physics remains explicit as
constraints that reduce and anchor the search space.

Thus the offline stage trains the objects that will later be frozen and used by the
online controller. It has two logically separate components:
\begin{enumerate}[leftmargin=2em]
  \item \textbf{Feasible action-segment embedding:} an encoder-decoder learns a compact
        latent coordinate system $\Mhat$ for known-feasible action segments.
  \item \textbf{Progress density learning:} a progress-tilted probabilistic path learns
        how short feasible snippets tend to move on $\Mhat$ and supplies the score used
        online.
\end{enumerate}
Diffusion is therefore not the method itself and not a second encoder path. It is one
possible gradual implementation of the probabilistic path.

\paragraph{Progress-tilted target distribution.}

For contexts with $H^\star(o,g,\Cknown)\neq\varnothing$, let
$q_{\mathcal{D}}^{H^\star}(\tau \mid o,g,\Cknown)$ denote the empirical distribution of
known-feasible snippets at the selected horizon. A signed progress certificate
$r=\Delta\rho(\tau,g)$ defines the progress-tilted distribution
\begin{equation}
  q_\beta(\tau \mid o,g,\Cknown,H^\star)
  =
  \frac{1}{Z_\beta(o,g,\Cknown,H^\star)}
  q_{\mathcal{D}}^{H^\star}(\tau \mid o,g,\Cknown)
  \exp\!\left(\beta\,\mathrm{clip}(\Delta\rho(\tau,g),-c,c)\right),
  \label{eq:progress_tilted_distribution}
\end{equation}
restricted to snippets satisfying the known constraints. Thus the learned path is
constraint-aware because its training support and conditioning variables include
$\Cknown$, but it is not itself a certificate of hard feasibility outside the empirical
support. This is a soft tilt, not a hard filter. Positive-progress snippets receive larger
likelihood, but temporarily regressive snippets may remain when they represent waiting,
yielding, backing up, or reorientation.
The target is not ``every snippet improves.'' The target is that the learned conditional
distribution has positive expected progress in the data-supported region where
\eqref{eq:selected_horizon} succeeds.
With $\tilde r(\tau,g)=\mathrm{clip}(\Delta\rho(\tau,g),-c,c)$,
\begin{equation}
  \E_{\tau\sim q_\beta}\!\left[\tilde r(\tau,g)\right]
  =
  \frac{\partial}{\partial\beta}\log Z_\beta(o,g,\Cknown,H^\star).
  \label{eq:tilted_progress_moment}
\end{equation}
Thus the assumption \eqref{eq:data_expected_progress} is used only after the horizon test:
the offline data already moves in a locally preferable direction in expectation under the
known-feasible context and selected horizon, and the exponential tilt amplifies this bias
without deleting necessary non-progressing maneuvers. Equivalently, for $\beta\ge 0$, the
training target places more probability mass on larger progress certificates while
remaining restricted to $\Cknown$.

\paragraph{Feasible action-segment embedding.}

The latent action-segment representation is essential for scalability. Directly modeling
or searching over a horizon-$H$ action segment lives in a space of dimension $H d_u$,
which can be prohibitive for multi-level systems, power networks, or autonomous driving.
GDC instead learns a low-dimensional latent variable $z\in\mathcal{Z}$ with
$d_z\ll H d_u$, and trains the probabilistic path on the latent distribution induced by
feasible-progress snippets through this encoder. The online controller therefore samples
and refines $z$, then decodes $z$ into an executable action segment.

When the known physics has geometric coordinates, this latent space can be realized as
\begin{equation}
  \mathcal{Z} = \mathbb{R}^{d_0} \times G_1 \times \cdots \times G_L,
  \label{eq:latent_product_space}
\end{equation}
where the Euclidean factors capture unconstrained latent variation and the Lie group
factors capture known geometric degrees of freedom such as pose, phase, orientation, or
network-flow coordinates. This product/geometric structure is optional; the compact
latent action-segment map is not. The essential scaffold is the known-feasible set
$\Mhat_{\mathrm{known}}(o,g)$ inside this learned low-dimensional search space.

GDC does not need a distillation loss from a complete solution map. The offline object
being embedded is already the behavior segment that will be controlled online: a
known-feasible short action trajectory. Since no ground-truth latent code is observed, we
learn the latent scaffold with a regularized variational action-segment autoencoder, where
$H_k$ is the selected horizon of the training segment $\tau_k$:
\begin{equation}
  q_\xi(z\mid \tau_k,o_k,g_k,\Cknown),
  \qquad
  p_\psi(\tau_k\mid z,o_k,g_k,\Cknown,H_k).
  \label{eq:vae_action_segment_model}
\end{equation}
For notation, a latent code can be sampled from the encoder or set to its posterior mean:
\begin{equation}
  z_k \sim q_\xi(\cdot\mid \tau_k,o_k,g_k,\Cknown),
  \qquad
  \bar z_k=\E_{q_\xi}[z\mid \tau_k,o_k,g_k,\Cknown].
  \label{eq:segment_encoder}
\end{equation}
When deterministic notation is convenient, we write
$E_\xi(\tau_k,o_k,g_k,\Cknown)=\bar z_k$ only as shorthand for this posterior mean; the
learned encoder itself is the variational distribution $q_\xi$.
The decoder distribution reconstructs an executable short action segment:
\begin{equation}
  \hat{\tau}_k = D_\psi(z_k,o_k,g_k,\Cknown,H_k)
  \quad\text{or equivalently}\quad
  \hat{\tau}_k\sim p_\psi(\cdot\mid z_k,o_k,g_k,\Cknown,H_k),
  \label{eq:segment_decoder}
\end{equation}
The first action of the decoded segment is the executable control. State reconstruction is
not a primary objective; states enter only through the observation, the known constraints,
and the progress certificate used to label the segment.

\paragraph{Structured offline objective.}

The feasible map is trained with only the losses needed to make the action-segment
coordinates smooth, executable, and known-feasible:
\begin{equation}
  \mathcal{L}_{\mathrm{map}}
  =
  \mathcal{L}_{\mathrm{vae}}
  + \lambda_k\mathcal{L}_{\mathrm{known}}
  \label{eq:map_loss}
\end{equation}
Here
\begin{equation}
\begin{aligned}
  \mathcal{L}_{\mathrm{vae}}
  &=
  \E_k\!\left[
  \E_{z\sim q_\xi(\cdot\mid\tau_k,o_k,g_k,\Cknown)}
  \left[-\log p_\psi(\tau_k\mid z,o_k,g_k,\Cknown,H_k)\right]
  \right. \\
  &\qquad\left.
  +
  \beta_z\,\mathrm{KL}\!\left(
    q_\xi(z\mid\tau_k,o_k,g_k,\Cknown)\,\|\,p_0(z)
  \right)
  \right],
\end{aligned}
  \label{eq:segment_vae_loss}
\end{equation}
where $p_0(z)$ is a simple prior such as $\mathcal{N}(0,I)$, or a structured prior when
the latent coordinates include known geometric factors.
The known-constraint loss penalizes decoded action segments that violate $\Cknown$:
\begin{equation}
  \mathcal{L}_{\mathrm{known}}
  =
  \E_k
  \E_{z\sim q_\xi(\cdot\mid\tau_k,o_k,g_k,\Cknown)}
  \!\left[\sum_r \left[
    f_r^{\mathrm{known}}(D_\psi(z,o_k,g_k,\Cknown,H_k),o_k,g_k)_+
  \right]^2\right].
  \label{eq:known_constraint_loss}
\end{equation}
The KL term and the low-dimensional bottleneck prevent the encoder from becoming an
identity map and make the latent space suitable for sampling. Additional smoothness or
contrastive regularizers may be added, but they are not progress losses.

After minimizing \eqref{eq:map_loss}, the empirical manifold $\Mhat$ is the data-supported
region of the latent space reached by known-feasible action segments.

We separate $\mathcal{L}_{\mathrm{map}}$ from the path objective because they answer
different questions. The map loss learns only the coordinate system for known-feasible
action segments. It does not upweight positive progress or enforce positive expected
progress. The path loss then learns a conditional density on that scaffold: where
progress-biased feasible behavior has probability mass, and therefore which direction the
controller should move online. Conceptually GDC first learns the feasible action-segment
map and then learns the progress-tilted probability path over that map.

\paragraph{Progress-tilted probabilistic path.}

After the map exists, GDC fits a progress-tilted probabilistic path over the latent codes
on $\Mhat$. This path learns the recurring local movement patterns inside the segmented
feasible trajectories: how one known-feasible local behavior tends to transition toward
another under the same goal and constraint context. These patterns are the statistical
approximation of the missing physics and cross-level couplings that were not included in
$\Cknown$. The known physics does not merely appear as a conditioning variable; it shrinks
and anchors the path. Define the known-feasible latent scaffold
\begin{equation}
  \Mhat_{\mathrm{known}}(o,g)
  =
  \left\{
    z\in\Mhat :
    f_r^{\mathrm{known}}(D_\psi(z,o,g,\Cknown,H^\star),o,g)\le 0,\; r=1,\ldots,R
  \right\}.
  \label{eq:known_feasible_latent_scaffold}
\end{equation}
The probability path is learned on, and sampled back toward, this anchored scaffold:
\begin{equation}
  p_\theta^Z(z\mid o,g,\Cknown,H^\star)
  \;\text{is supported or strongly concentrated on}\;
  \Mhat_{\mathrm{known}}(o,g).
  \label{eq:path_known_support}
\end{equation}
Thus the path is not asked to rediscover the known physics from data or explore the full
trajectory space. The known constraints act as fixed anchors that hold the distribution to
the feasible scaffold, while the progress tilt learns how feasible short segments move
across the parts of the system whose physics is missing or only partially observed.

A diffusion-style path is one implementation: it gradually approximates the clean
feasible-progress distribution by moving along a sequence of corrupted distributions. For
a path step $\ell$ and a clean latent point
$z_k\sim q_\xi(\cdot\mid\tau_k,o_k,g_k,\Cknown)$,
\begin{equation}
  z_{k,\ell} =
  \sqrt{\bar\alpha_\ell}\,z_k
  + \sqrt{1-\bar\alpha_\ell}\,\epsilon,
  \qquad \epsilon\sim\mathcal{N}(0,I).
\end{equation}
Following recent direct clean-data prediction views of generative path models, the network
predicts the clean latent behavior rather than the injected perturbation. Let
\begin{equation}
  \hat{z}_{0,k}=F_\theta(z_{k,\ell},o_k,g_k,\Cknown,H_k,\ell),
  \qquad
  \hat{\tau}_{0,k}=D_\psi(\hat{z}_{0,k},o_k,g_k,\Cknown,H_k),
  \qquad
  \hat{u}_{0,k}=[\hat{\tau}_{0,k}]_0 .
\end{equation}
The progress-weighted clean-behavior prediction objective is anchored by the known
constraints:
\begin{equation}
  \mathcal{L}_{\mathrm{path}}
  =
  \E_{k,\ell,\epsilon}\!\left[w_k\,\mathcal{B}_k\right],
  \qquad
  w_k=\exp\!\left(\beta\,\mathrm{clip}(r_k,-c,c)\right).
  \label{eq:path_embedding_loss}
\end{equation}
where
\begin{equation}
  \begin{aligned}
  \mathcal{B}_k
  &=
  \norm{\hat{z}_{0,k}-z_k}^2
  + \lambda_{\tau}\norm{D_\psi(\hat{z}_{0,k},o_k,g_k,\Cknown,H_k)-\tau_k}^2 \\
  &\quad
  + \lambda_{\mathrm{anc}}
  \mathcal{A}_{\mathrm{known}}(\hat{z}_{0,k},\hat{u}_{0,k},\hat{\tau}_{0,k}).
  \end{aligned}
  \label{eq:path_clean_anchor_cost}
\end{equation}
The anchor penalty is
\begin{equation}
  \begin{aligned}
  \mathcal{A}_{\mathrm{known}}(\hat{z},\hat{u},\hat{\tau})
  &=
  \sum_r \left[f_r^{\mathrm{known}}(\hat{u},o,g)_+\right]^2
  + \sum_{t}\sum_r \left[f_r^{\mathrm{known}}(\hat{\tau}_t,o,g)_+\right]^2.
  \end{aligned}
  \label{eq:known_anchor_penalty}
\end{equation}
The weighted objective is the empirical implementation of the tilted target
\eqref{eq:progress_tilted_distribution}. Let $q_\beta^Z$ denote the progress-weighted
aggregate posterior induced by the variational encoder:
\begin{equation}
  q_\beta^Z(z\mid o,g,\Cknown,H^\star)
  =
  \int
  q_\xi(z\mid\tau,o,g,\Cknown)\,
  q_\beta(\tau\mid o,g,\Cknown,H^\star)\,
  \dd\tau.
  \label{eq:latent_tilted_pushforward}
\end{equation}
Minimizing \eqref{eq:path_embedding_loss} fits the latent path density
$p_\theta^Z(z\mid o,g,\Cknown,H^\star)$ to this tilted latent distribution, subject to the
known anchors \eqref{eq:known_feasible_latent_scaffold}--\eqref{eq:known_anchor_penalty}.
Let $p_\theta^\tau$ denote the decoded segment distribution induced by
$z\sim p_\theta^Z(\cdot\mid o,g,\Cknown,H^\star)$ and
$\tau=D_\psi(z,o,g,\Cknown,H^\star)$.
Hence the data moment \eqref{eq:data_expected_progress} appears online, only when
$H^\star\neq\varnothing$, as the progress-bias property
\begin{equation}
  \E_{\tau\sim p_\theta^\tau(\cdot\mid o,g,\Cknown,H^\star)}
  \!\left[\Delta\rho(\tau,g)\right]
  \approx
  \E_{\tau\sim q_\beta(\cdot\mid o,g,\Cknown,H^\star)}
  \!\left[\Delta\rho(\tau,g)\right]
  >0
  \label{eq:path_progress_bias}
\end{equation}
inside the data-supported known-feasible region. The approximation error is the density
fitting error of the probabilistic path model plus decoder error from the latent map.
This objective gives larger density to latent regions associated with positive signed
progress while retaining feasible waiting, yielding, or repositioning snippets when they
are present in the data. The result is a conditional latent density
$p_\theta^Z(z\mid o,g,\Cknown,H^\star)$ and score
$s_\theta(z,o,g,\Cknown,H^\star)=
\nabla_z\log p_\theta^Z(z\mid o,g,\Cknown,H^\star)$ on the learned map.
This score is the online progress direction used by GDC.

\paragraph{Offline output.}

The offline stage returns the action-segment posterior model
$q_\xi(z\mid\tau,o,g,\Cknown)$, the decoder $D_\psi$ (or likelihood $p_\psi$), the clean
latent predictor $F_\theta$, and the
conditional path score
$s_\theta(z,o,g,\Cknown,H^\star)$. The empirical manifold $\Mhat$ is the data-supported,
known-feasible region of the action-segment map, equipped with a progress-tilted path
density.

\subsection{Geometry and progress score}

\begin{definition}[Pullback metric on $\Mhat$]
  The Riemannian metric on $\Mhat$ is the decoder-pullback metric:
  \begin{equation}
    G(z) = J_D(z)^\top J_D(z) + \lambda_m I,
    \label{eq:pullback_metric}
  \end{equation}
  where $J_D$ is the Jacobian of the action-segment decoder, for example
  $D(z)=D_\psi(z,o,g,\Cknown,H^\star)$, and $\lambda_m>0$ ensures positive definiteness.
\end{definition}

For a smooth function $\Phi : \Mhat \to \R$, the Riemannian gradient is
\begin{equation}
  \grad_G \Phi(z) = G(z)^{-1} \nabla_z \Phi(z),
  \label{eq:riemannian_gradient}
\end{equation}
and the Riemannian gradient flow is
\begin{equation}
  \dot{z} = -\grad_G \Phi(z) = -G(z)^{-1} \nabla_z \Phi(z).
  \label{eq:riemannian_flow}
\end{equation}

\begin{proposition}[Invariance]
  The Riemannian gradient flow \eqref{eq:riemannian_flow} is invariant under
  diffeomorphic reparameterizations of $\Mhat$.
\end{proposition}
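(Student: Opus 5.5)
The plan is a direct change-of-variables computation. Let $\varphi:\widetilde{\mathcal Z}\to\Mhat$ be a diffeomorphism, $z=\varphi(w)$, with Jacobian $J_\varphi(w)$, which is invertible everywhere. Let $\widetilde\Phi=\Phi\circ\varphi$ be the potential in the new coordinates, and let $\widetilde G$ be the metric in the new coordinates. The claim to prove is that if $w(t)$ solves $\dot w=-\widetilde G(w)^{-1}\nabla_w\widetilde\Phi(w)$, then $z(t)=\varphi(w(t))$ solves \eqref{eq:riemannian_flow}, and conversely. Together with uniqueness of ODE solutions for locally Lipschitz fields, this says the two flows trace the same curves on $\Mhat$.

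The steps, in order:
\begin{enumerate}[leftmargin=2em]
  \item By the chain rule, $\nabla_w\widetilde\Phi(w)=J_\varphi(w)^\top\nabla_z\Phi(\varphi(w))$.
  \item Establish the metric transformation law $\widetilde G(w)=J_\varphi(w)^\top G(\varphi(w))\,J_\varphi(w)$. For the pullback part this is automatic: the reparameterized decoder is $\widetilde D=D\circ\varphi$, so $J_{\widetilde D}=J_D J_\varphi$ and $J_{\widetilde D}^\top J_{\widetilde D}=J_\varphi^\top (J_D^\top J_D) J_\varphi$.
  \item Substitute: $\widetilde G^{-1}\nabla_w\widetilde\Phi=J_\varphi^{-1}G^{-1}J_\varphi^{-\top}J_\varphi^\top\nabla_z\Phi=J_\varphi^{-1}\grad_G\Phi$. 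This shows that $\grad$ transforms as a tangent vector.
  \item Conclude: $\dot z=J_\varphi\dot w=-\grad_G\Phi(z)$. This is \eqref{eq:riemannian_flow}, so the trajectories coincide. The converse direction is the same computation with $\varphi^{-1}$.
\end{enumerate}

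The main obstacle is step 2 for the regularizer $\lambda_m I$ in \eqref{eq:pullback_metric}. If one recomputes the formula literally in the new coordinates, one gets $\lambda_m I$ again rather than $\lambda_m J_\varphi^\top J_\varphi$, so the recomputed metric is not tensorial. I would handle this by stating precisely what "invariance" means here. Once $G$ is fixed as a Riemannian metric on $\Mhat$, it is an intrinsic $(0,2)$-tensor, and in any chart it must be expressed by the tensorial rule. In particular the $\lambda_m$ term is interpreted as $\lambda_m$ times the pullback of the Euclidean metric of the reference chart. With this interpretation the proof above is complete.

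I would add a remark that the literal formula \eqref{eq:pullback_metric} evaluated in an arbitrary chart gives invariance only in two cases: when $\varphi$ is a Euclidean isometry, or in the limit $\lambda_m\to0$ where $J_D$ has full column rank. In that limit the pure pullback $J_D^\top J_D$ is itself tensorial by step 2.
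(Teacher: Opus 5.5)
Your proposal is correct and takes essentially the same route as the paper: the chain rule for $\nabla\widetilde\Phi$, the tensorial law $\widetilde G=J_\varphi^\top G J_\varphi$, and the identity $J_\varphi\widetilde X=\grad_G\Phi$ pushed forward along $\varphi$. The paper simply defines $\widetilde G$ as the pullback of $G$, which is exactly the intrinsic-tensor reading you adopt for the $\lambda_m I$ term, so your treatment of that subtlety and of the converse/uniqueness direction makes explicit what the paper leaves implicit rather than changing the argument.
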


\begin{proof}
Let $\varphi:\widetilde{\mathcal M}\to\Mhat$ be a $C^1$ diffeomorphism and write
$z=\varphi(\tilde z)$. The pullback metric is
$\tilde G(\tilde z)=J_\varphi(\tilde z)^\top G(z)J_\varphi(\tilde z)$ and the pulled-back
potential is $\tilde\Phi(\tilde z)=\Phi(\varphi(\tilde z))$. By the chain rule,
$\nabla_{\tilde z}\tilde\Phi=J_\varphi^\top\nabla_z\Phi$. The gradient vector field in
the $\tilde z$ coordinates is the unique vector $\widetilde X$ satisfying
$\tilde G(\widetilde X,\widetilde v)=\dd\tilde\Phi[\widetilde v]$ for every tangent
vector $\widetilde v$. Substituting the definitions gives
$J_\varphi\widetilde X=G^{-1}\nabla_z\Phi=\grad_G\Phi$. Therefore the pushed-forward
flow obeys
$\frac{\dd}{\dd t}\varphi(\tilde z_t)=J_\varphi\dot{\tilde z}_t
=-\grad_G\Phi(z_t)$, which is exactly \eqref{eq:riemannian_flow} in the original
coordinates.
\end{proof}

\paragraph{Offline progress-path implementation and score extraction.}
\label{sec:diffusion}

This subsection specifies one offline implementation of the progress-tilted latent path.
After training, the online controller treats $q_\xi$, $D_\psi$, $H^\star$, and
$p_\theta^Z$ as fixed. The learned object is the progress-tilted latent path density
$p_\theta^Z(z\mid o,g,\Cknown,H^\star)$; the online score
$s_\theta=\nabla_z\log p_\theta^Z$ is derived from this density. Known physics is not left
implicit in the data distribution: it appears as constraints, barriers, anchors, and
filters. The path density approximates what is missing from the explicit model: implicit
physics, latent preference structure, cross-level coupling, and long-horizon directional
information.

\paragraph{Diffusion-style gradual path instantiation.}

Let $z_0 \sim q_{\text{data}}(z)$ denote a latent behavior sampled from the data
distribution on $\Mhat$. A convenient implementation is a diffusion-style gradual path:
one moves from clean feasible-progress behaviors to corrupted latent points and learns to
predict the clean behavior back. This is not required by GDC; any differentiable
conditional density or path model that yields $\nabla_z \log p_\theta^Z$ can be used. For
the diffusion-style implementation, the forward path adds noise within the tangent bundle
$T\Mhat$:
\begin{equation}
  q(z_\ell \mid z_{\ell-1}) = \mathcal{N}_{\Mhat}\!\left(z_\ell;\,
  \sqrt{1 - \beta_\ell}\,z_{\ell-1},\, \beta_\ell G(z_{\ell-1})^{-1}\right),
  \label{eq:manifold_forward}
\end{equation}
where $\mathcal{N}_{\Mhat}$ denotes the Riemannian normal distribution
on $\Mhat$, and $G(z)^{-1}$ ensures noise is scaled according to the
local geometry.

In practice, when $\Mhat$ is approximately flat at the scale of the noise
(which holds for well-trained manifolds with sufficient latent dimension), we use the
Euclidean approximation:
\begin{equation}
  q(z_\ell \mid z_{\ell-1}) \approx \mathcal{N}\!\left(z_\ell;\,
  \sqrt{1 - \beta_\ell}\,z_{\ell-1},\, \beta_\ell I\right),
  \label{eq:euclidean_approx_forward}
\end{equation}
with $\{\beta_\ell\}_{\ell=1}^T$ a noise schedule, $\alpha_\ell = 1 - \beta_\ell$, and
$\bar\alpha_\ell = \prod_{j=1}^{\ell} \alpha_j$. The marginal is
\begin{equation}
  q(z_\ell \mid z_0) = \mathcal{N}\!\left(z_\ell;\,
  \sqrt{\bar\alpha_\ell}\,z_0,\, (1 - \bar\alpha_\ell) I\right).
\end{equation}

\paragraph{Clean latent predictor fixed after offline training.}

The probabilistic path is a model of \emph{what action or short action trajectory should be
executed next}, conditioned on the current observation, goal, and known constraints. It is
not primarily a next-state predictor, and it does not directly predict the full
high-dimensional action segment as its core output. The path network predicts the clean
latent behavior, and the fixed decoder $D_\psi$ maps that latent point to the executable
action segment. This choice
follows the clean-data prediction principle advocated by Li and He \cite{li2025back}:
clean data lie on the low-dimensional data manifold, while corrupted quantities generally
do not. In GDC, this means predicting clean latent action-segment codes, not merely a
latent perturbation residual and not a full action sequence in ambient dimension:
\begin{equation}
  \hat{z}_0 = F_\theta(z_\ell,o,g,\Cknown,H^\star,\ell),\qquad
  \hat{\tau}_0 = D_\psi(\hat{z}_0,o,g,\Cknown,H^\star),\qquad
  \hat{u}_0 = [\hat{\tau}_0]_0 .
  \label{eq:clean_behavior_predictor}
\end{equation}
Offline training uses this clean latent target to implement the path objective
\eqref{eq:path_embedding_loss}: for a corrupted latent point
$z_\ell=\sqrt{\bar\alpha_\ell}z_0+\sqrt{1-\bar\alpha_\ell}\epsilon$,
$\epsilon \sim \mathcal{N}(0,I)$, the model predicts $z_0$ and weights the latent
prediction by the progress tilt induced by the signed certificate. A decoded trajectory
loss through $D_\psi$ may be used as a consistency term, but direct action prediction is
not the core path model. State quantities are not part of the core path loss; when
available, they are used only to evaluate known dynamics, check constraints, or compute
progress certificates. During online control $F_\theta$ and $D_\psi$ are fixed; the
clean predictor may be queried to estimate the path score, but it does not replace the
online latent control dynamics.

\begin{remark}[Known constraints versus learned feasibility]
  Since every training sample $z_0$ lies on $\Mhat$ and is filtered or penalized by
  $\Cknown$, the learned path concentrates its density on the data-supported
  progress-biased region. This does not prove feasibility for all implicit physics;
  rather, it makes implicit feasibility a statistical property of feasible trajectory
  demonstrations with progress certificates.
  Known constraints are still enforced explicitly through conditioning, barrier terms,
  anchor penalties, and candidate filtering during control.
\end{remark}

\paragraph{Score induced by clean behavior prediction.}

Although the network predicts clean behavior directly, it still induces a score field for online
control. For the diffusion-style gradual path, under the Gaussian forward kernel
$z_\ell=\sqrt{\bar\alpha_\ell}z_0+\sqrt{1-\bar\alpha_\ell}\epsilon$, replacing $z_0$ by its
clean prediction gives the score approximation
\begin{equation}
  s_{\theta,\ell}^{\mathrm{diff}}(z_\ell,o,g,\Cknown,H^\star)
  \approx
  \frac{\sqrt{\bar\alpha_\ell}\,F_\theta(z_\ell,o,g,\Cknown,H^\star,\ell)-z_\ell}
       {1-\bar\alpha_\ell}.
  \label{eq:score_from_clean_prediction}
\end{equation}
Thus GDC can use direct action/trajectory prediction during training while still obtaining
the score direction needed for Riemannian gradient-flow control.


\subsection{Online progress-energy navigation}
\label{sec:online}

At deployment, all learned objects are frozen. GDC receives the action-segment posterior
$q_\xi$, decoder $D_\psi$, horizon selector $H^\star$, progress-tilted latent density
$p_\theta^Z$, and score $s_\theta=\nabla_z\log p_\theta^Z$. The online stage is therefore not
another learning or diffusion-sampling procedure. It is a geometric navigator on the
learned feasible-progress map: the controller localizes the current context, forms a
composable latent energy, follows its induced vector field for a fixed number of small
steps, decodes an action segment, and applies the final known-constraint projection before
executing the first action.

This is the GDC analogue of Pareto-map navigation in GPC. The difference is the source of
the tangential direction. In GPC, the tangential energy comes from a known Pareto solution
map and semantic priority coordinate. In GDC, no complete Pareto or physics solution map is
available; the tangential data energy is instead induced by the progress-tilted
probabilistic path learned from feasible-progress segments. Thus the learned distribution
is not executed as a policy. It becomes an energy term that can be combined with known
physics, task costs, safety margins, and fallback logic.

\paragraph{Context-consistent latent localization.}
At time $t$, the controller observes $o_t$ and evaluates
$H_t^\star=H^\star(o_t,g,\Cknown)$. If $H_t^\star=\varnothing$, the progress guarantee is
not invoked and the controller falls back to a known-feasible backup. Otherwise, GDC
constructs a local data-supported candidate set
\begin{equation}
  \mathcal M_t
  =
  \left\{
    z\in\Mhat:
    H_t^\star\neq\varnothing,\;
    D_\psi(z,o_t,g,\Cknown,H_t^\star)\text{ is locally known-feasible}
  \right\}.
  \label{eq:online_local_set}
\end{equation}
In implementation, $\mathcal M_t$ is represented by warm-starts from the previous accepted
latent state, samples from $p_\theta^Z(\cdot\mid o_t,g,\Cknown,H_t^\star)$, or nearby latent
codes whose decoded segments pass the known-constraint filter. This plays the role of
manifold localization: it anchors online navigation to the part of the learned map that is
consistent with the current observation and known constraints.

\paragraph{Online energy and nominal vector field.}
The direction of online motion is determined by the composite latent energy
\begin{equation}
  \Phi_{\mathrm{on},t}(z)
  =
  \frac{1}{\epsilon_{\mathrm{s}}}\Phi_{\mathrm{scaf}}(z;o_t,\Cknown)
  + \lambda \Phi_{\mathrm{known}}(z;o_t,\Cknown)
  + (1-\lambda)\Phi_{\mathrm{data}}(z;o_t,g,\Cknown,H_t^\star)
  + \Phi_{\mathrm{task}}(z;o_t,g),
  \label{eq:online_energy}
\end{equation}
where $\Phi_{\mathrm{scaf}}$ is a fast restoration term that keeps $z$ near
$\mathcal M_t$, $\Phi_{\mathrm{known}}$ contains available physics and explicit
constraints, $\Phi_{\mathrm{task}}$ encodes the runtime objective, and
\begin{equation}
  \Phi_{\mathrm{data}}(z;o_t,g,\Cknown,H_t^\star)
  =
  -\log p_\theta^Z(z\mid o_t,g,\Cknown,H_t^\star)
  \label{eq:online_data_energy}
\end{equation}
is the progress energy learned offline. The nominal vector field is the Riemannian descent
flow
\begin{equation}
  \begin{aligned}
  F_t(z)
  :=\dot z
  &= -\grad_G \Phi_{\mathrm{on},t}(z) \\
  &= -G(z)^{-1}\!\left[
    \frac{1}{\epsilon_{\mathrm{s}}}\nabla_z\Phi_{\mathrm{scaf}}(z)
    + \lambda\nabla_z\Phi_{\mathrm{known}}(z)
    - (1-\lambda)s_\theta(z,o_t,g,\Cknown,H_t^\star)
    + \nabla_z\Phi_{\mathrm{task}}(z)
  \right].
  \end{aligned}
  \label{eq:online_latent_flow}
\end{equation}
The scalar $\epsilon_{\mathrm{s}}$ induces the same snap-then-slide structure as a geometric navigator:
the scaffold term rapidly restores the latent state to the known-feasible support, while
the data and task terms move tangentially toward progress. The learned path can therefore
be kept fixed while new physics, safety, preference, or resource terms are added to the
energy.

\paragraph{Geometry-aware latent evolution.}
Given $F_t$, GDC advances each candidate latent state with a fixed-depth integrator rather
than an online nonlinear optimizer. A simple second-order implementation is
\begin{align}
  k_1 &= \Pi_{\|\cdot\|_G\le V_{\max}} F_t(z_n), \\
  z_{\mathrm{mid}} &= z_n + \tfrac{\Delta t}{2}\,k_1, \\
  k_2 &= \Pi_{\|\cdot\|_G\le V_{\max}} F_t(z_{\mathrm{mid}}), \\
  \Delta z_n &= \Delta t\,k_2,
  \label{eq:online_rk2}
\end{align}
where $\Pi_{\|\cdot\|_G\le V_{\max}}$ caps the latent velocity in the decoder-pullback
metric. The increment is then integrated in the latent chart or product geometry:
\begin{equation}
  \bar z_{n+1}
  =
  z_n \oplus \Delta z_n,
  \label{eq:online_latent_integration}
\end{equation}
where $\oplus$ is ordinary addition in a flat latent chart and the corresponding group or
retraction update when structured latent factors are used. Finally, the integrated point
is accepted only after a scaffold correction:
\begin{equation}
  z_{n+1}
  =
  (1-\mu_R)\bar z_{n+1}
  +
  \mu_R\,\Pi_{\mathcal M_t}(\bar z_{n+1}),
  \label{eq:online_retraction}
\end{equation}
with $\mu_R\in[0,1]$. In the common case where $\bar z_{n+1}$ already decodes to a
known-feasible segment, $\Pi_{\mathcal M_t}$ is the identity. Otherwise it is implemented
by the simplest problem-specific retraction, clipping, or backtracking rule. Thus the
bridge from the differential law to decoding is explicit:
\[
  \dot z
  \;\longrightarrow\;
  \Delta z_n
  \;\longrightarrow\;
  \bar z_{n+1}
  \;\longrightarrow\;
  z_{n+1}\in\mathcal M_t
  \;\longrightarrow\;
  D_\psi(z_{n+1},o_t,g,\Cknown,H_t^\star).
\]
The velocity cap and retraction are not optimizers; they are numerical guards that keep
the online state in the region where the decoder and score are trusted.

\paragraph{Decode, project, and execute.}
After $N_r$ latent evolution steps, each candidate has an accepted post-navigation latent
state $z_{t,\mathrm{nav}}^{(k)}:=z_{N_r}^{(k)}\in\mathcal M_t$. This is the latent point
that is decoded:
\begin{equation}
  \tau_t^{(k)}
  =
  D_\psi(z_{t,\mathrm{nav}}^{(k)},o_t,g,\Cknown,H_t^\star).
  \label{eq:online_decode_segment}
\end{equation}
GDC selects a feasible-progress candidate using the task score $J_{\mathrm{sel}}$ and
takes the first decoded action
$u_t^{\mathrm{nom}}=\tau_{t,0}^{\mathrm{nom}}$. The final executed action is the weighted
projection onto the current known-feasible action set:
\begin{equation}
  u_t
  =
  \Pi_{\mathcal U_{\mathrm{known}}(o_t,u_{t-1})}^{W}
  (u_t^{\mathrm{nom}})
  :=
  \argmin_{u\in \mathcal U_{\mathrm{known}}(o_t,u_{t-1})}
  \frac{1}{2}\|u-u_t^{\mathrm{nom}}\|_{W}^{2}.
  \label{eq:online_projection}
\end{equation}
For box constraints this is clipping; for equality constraints it has the standard
weighted-projection closed form. If no decoded candidate or final projection is feasible,
the controller resamples or uses the known-feasible backup. The projection is only the last
execution guard: the main online controller is the energy-induced latent navigation in
\eqref{eq:online_latent_flow}.

\begin{algorithm}[!htbp]
\caption{GDC online progress-energy navigation}
\label{alg:gdc}
\small
\begin{algorithmic}[1]
\STATE \textbf{Input:} current observation $o_t$, goal $g$, previous action $u_{t-1}$,
       frozen models $(q_\xi,D_\psi,H^\star,p_\theta^Z,s_\theta)$, known-constraint
       checker, projection layer, and selection cost $J_{\mathrm{sel}}$.
\STATE \textbf{Output:} executable action $u_t$.
\STATE Evaluate $H_t^\star=H^\star(o_t,g,\Cknown)$. If
       $H_t^\star=\varnothing$, return the known-feasible fallback action.
\STATE Build or warm-start $K$ candidates $z_0^{(1:K)}$ in the local
       known-feasible latent set $\mathcal M_t$.
\STATE Form $\Phi_{\mathrm{on},t}$ and its Riemannian vector field
       $F_t=-\grad_G\Phi_{\mathrm{on},t}$.
\FOR{$k=1,\ldots,K$}
  \STATE Take $N_r$ capped RK2/retraction steps using
         Eqs.~\eqref{eq:online_rk2}--\eqref{eq:online_retraction}.
  \STATE Decode $\tau_t^{(k)}=D_\psi(z_{N_r}^{(k)},o_t,g,\Cknown,H_t^\star)$
         and discard candidates that violate $\Cknown$.
\ENDFOR
\STATE If no decoded candidate is feasible, return the known-feasible fallback action.
\STATE Select a feasible segment minimizing $J_{\mathrm{sel}}(\tau,o_t,g)$ and set
       $u_t^{\mathrm{nom}}$ to its first action.
\STATE Project
       $u_t=\Pi_{\mathcal U_{\mathrm{known}}(o_t,u_{t-1})}^{W}(u_t^{\mathrm{nom}})$;
       if projection fails, return the known-feasible fallback action.
\STATE Execute $u_t$ and replan at the next observation.
\end{algorithmic}
\end{algorithm}

The dominant online cost is fixed-depth navigation:
$O(KN_r(C_\theta+C_{\mathrm{known}})+C_{\mathrm{proj}})$. No model is retrained online and
no full nonlinear planning problem is solved. The repeated online operation is evaluating
the frozen score/energy, integrating a bounded latent vector field, decoding candidate
action segments, and applying the final known-constraint projection.

\section{Formal Assumptions and Main Guarantees}
\label{sec:formal-guarantees}

This section states the main conditions under which GDC turns feasible-progress data into
a usable online controller. The expanded derivations are given in
Appendix~\ref{sec:theory}; here we state only the assumptions and the guarantees they
yield.

\begin{assumption}[Data-supported feasible-progress scaffold]
\label{ass:main_scaffold}
For each context $(o,g,\Cknown)$ in the modeled operating envelope, the offline dataset
contains known-feasible action segments whose decoded support lies within the explicit
known-constraint scaffold. When the horizon selector returns
$H^\star(o,g,\Cknown)\neq\varnothing$, the progress-tilted conditional distribution has
positive expected signed progress:
\begin{equation}
  \E_{\tau\sim q_\beta(\cdot\mid o,g,\Cknown,H^\star)}
  [\Delta\rho(\tau,g)] > 0 .
  \label{eq:main_positive_progress}
\end{equation}
If this condition cannot be certified from data, GDC abstains from using the learned
progress direction and falls back to the known-feasible controller.
\end{assumption}

\begin{assumption}[Regular latent map and explicit execution guard]
\label{ass:main_regular}
The decoder $D_\psi$ is Lipschitz on the data-supported neighborhood of $\Mhat$, the
known-constraint functions are locally Lipschitz after decoding, and the online controller
uses bounded latent velocity, scaffold retraction or backtracking, decoded candidate
filtering, and a final projection onto the current known-feasible action set
$\mathcal U_{\mathrm{known}}(o_t,u_{t-1})$. This set is closed and nonempty whenever the
projection layer is invoked, and the fallback action belongs to it.
\end{assumption}

\begin{theorem}[Progress score as an implicit value gradient]
\label{thm:main_score_value}
Under Assumption~\ref{ass:main_scaffold} and a maximum-entropy model of feasible progress
demonstrations, suppose the continuation partition function is finite, positive, and
$C^1$ in $z$ on the selected data-supported neighborhood, and that it induces
$p^\star(z\mid c)\propto Z_H(z,c)$ with respect to the chosen latent reference measure for
the selected context $c=(o,g,\Cknown,H^\star)$. If the learned latent path satisfies
$\|\nabla_z\log p_\theta^Z-\nabla_z\log p^\star\|\le
\varepsilon_{\mathrm{approx}}$ on the data-supported neighborhood, then the
progress-tilted path density defines a data-induced soft progress value
$V_{\mathrm{prog}}^{\mathrm{soft}}$ on $\Mhat$ such that
\begin{equation}
  s_\theta(z,o,g,\Cknown,H^\star)
  =
  \nabla_z\log p_\theta^Z(z\mid o,g,\Cknown,H^\star)
  =
  -\nabla_z V_{\mathrm{prog}}^{\mathrm{soft}}(z,c)
  + \mathcal O(\varepsilon_{\mathrm{approx}}),
  \label{eq:main_score_value}
\end{equation}
where $\varepsilon_{\mathrm{approx}}$ is the path-model approximation error.
\end{theorem}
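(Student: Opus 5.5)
The plan is to define the soft progress value directly from the continuation partition function and then split the score error with the triangle inequality. The argument is essentially definitional plus one error bound, so the work lies in making the maximum-entropy model precise enough that $V_{\mathrm{prog}}^{\mathrm{soft}}$ is a genuine soft value.

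\textbf{Step 1 (define the value).} On the selected data-supported neighborhood, fix the context $c=(o,g,\Cknown,H^\star)$ with $H^\star\neq\varnothing$; Assumption~\ref{ass:main_scaffold} guarantees the tilted law is well defined there. Under the maximum-entropy demonstration model, a feasible continuation $\tau$ from latent point $z$ is weighted by $\exp(\beta\,\tilde r(\tau,g))$ relative to the known-feasible reference law. The continuation partition function is then
\[
Z_H(z,c)=\int \exp\bigl(\beta\,\tilde r(\tau,g)\bigr)\,q^{H^\star}_{\mathcal D}(\dd\tau\mid z,c).
\]
I would set
\[
V_{\mathrm{prog}}^{\mathrm{soft}}(z,c):=-\log Z_H(z,c),
\]
normalized by $\beta$ if a value in progress units is preferred. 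This is the soft (log-sum-exp) Bellman value of one tilted step: by Gibbs' variational principle it equals minus $\beta\,\E[\tilde r]$ plus the KL cost to the reference, optimized over continuations. This justifies the name, and via \eqref{eq:tilted_progress_moment} it ties the value to the positive expected progress of \eqref{eq:main_positive_progress}.

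\textbf{Step 2 (score of the ideal density).} The hypothesis gives $p^\star(z\mid c)=Z_H(z,c)/\mathcal Z(c)$ with respect to the reference measure, where $\mathcal Z(c)=\int Z_H\,\dd z$. This normalizer is finite because $Z_H$ is positive and finite, provided it is integrable on the neighborhood; I would state integrability explicitly or restrict to a compact neighborhood. Since $Z_H>0$ and is $C^1$ in $z$, $\log p^\star$ is $C^1$ there. The normalizer depends only on $c$, so
\[
\nabla_z\log p^\star=\nabla_z\log Z_H=-\nabla_z V_{\mathrm{prog}}^{\mathrm{soft}}.
\]
If the reference measure has a density $m(z)$ relative to Lebesgue measure, I would absorb $-\log m$ into $V_{\mathrm{prog}}^{\mathrm{soft}}$, or assume $m$ is uniform on the chart. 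The score is Euclidean, so this is the one convention to fix carefully.

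\textbf{Step 3 (approximation).} Write
\[
s_\theta=\nabla_z\log p^\star+\bigl(\nabla_z\log p_\theta^Z-\nabla_z\log p^\star\bigr).
\]
The second term has norm at most $\varepsilon_{\mathrm{approx}}$ by hypothesis, which gives \eqref{eq:main_score_value} pointwise on the neighborhood with an $\mathcal O$ constant equal to $1$.

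\textbf{Main obstacle.} The delicate step is Step 1: making the maximum-entropy model precise enough that $Z_H$ is really a function of the latent point. This requires the conditional continuation law given $z$, obtained through the encoder posterior $q_\xi$ and the aggregate $q_\beta^Z$ of \eqref{eq:latent_tilted_pushforward}, to disintegrate so that $q_\beta^Z\propto Z_H$. I would take this identification as part of the stated hypothesis ($p^\star\propto Z_H$) rather than derive it. The remaining care is only that differentiation is confined to the data-supported neighborhood, where $C^1$ regularity and positivity hold, so that no boundary or support issues arise.
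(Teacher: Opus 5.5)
Your proposal is correct and follows the paper's proof essentially step for step: you set $V_{\mathrm{prog}}^{\mathrm{soft}}=-\log Z_H$, use that the normalizer of $p^\star\propto Z_H$ does not depend on $z$ to get $\nabla_z\log p^\star=-\nabla_z V_{\mathrm{prog}}^{\mathrm{soft}}$, and finish with the triangle inequality against the assumed score error. Two of your choices go beyond what the paper writes, and neither changes the argument: you fix the cost concretely as $-\beta\tilde r$ relative to the empirical feasible reference law, where the paper keeps a generic $C_H$ with reference measure $\nu_z$, and you spell out the integrability of $Z_H$ and the reference-measure density convention, which the paper leaves implicit.
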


\begin{proof}
Fix a context $c=(o,g,\Cknown,H^\star)$ for which the horizon selector does not abstain.
Under the maximum-entropy demonstration model, feasible future segments starting from
latent point $z$ have an unnormalized partition function
$Z_H(z,c)=\int\exp[-C_H(\tau;c)]\,\dd\nu_z(\tau)$ over known-feasible continuations, with
soft progress value $V_{\mathrm{prog}}^{\mathrm{soft}}(z,c)=-\log Z_H(z,c)$. The model
posits that the induced ideal latent density is proportional to this partition function:
$p^\star(z\mid c)=Z_c^{-1}\exp[-V_{\mathrm{prog}}^{\mathrm{soft}}(z,c)]$. Hence
$\nabla_z\log p^\star(z\mid c)=-\nabla_zV_{\mathrm{prog}}^{\mathrm{soft}}(z,c)$. If the
learned path satisfies
$\|\nabla_z\log p_\theta^Z-\nabla_z\log p^\star\|\le
\varepsilon_{\mathrm{approx}}$ on the data-supported neighborhood, then
\eqref{eq:main_score_value} follows. Appendix~\ref{sec:energy} gives the full
regularized statement.
\end{proof}

Theorem~\ref{thm:main_score_value} is the central directional guarantee. It says that the
score is not merely a sampling vector field: after progress-tilted training, it is a
descent direction for the progress value represented by the data policy. Thus the learned
distribution can choose among locally known-feasible options whose explicit constraints
alone do not determine a direction.

\begin{proposition}[Known-feasible online execution]
\label{prop:main_feasible_execution}
Under Assumption~\ref{ass:main_regular}, every action executed by Algorithm~\ref{alg:gdc}
belongs to the current known-feasible action set whenever the projection layer is
available:
\begin{equation}
  u_t \in \mathcal U_{\mathrm{known}}(o_t,u_{t-1}) .
  \label{eq:main_known_feasible_execution}
\end{equation}
If no decoded candidate or projected action is feasible, the controller resamples or uses
the known-feasible fallback.
\end{proposition}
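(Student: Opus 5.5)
The plan is a case analysis over the exit points of Algorithm~\ref{alg:gdc}. I will show that every branch returning an action returns an element of $\mathcal U_{\mathrm{known}}(o_t,u_{t-1})$. The algorithm returns at four places. Line~3 returns the fallback when $H_t^\star=\varnothing$. The post-loop check returns the fallback when no decoded candidate survives the $\Cknown$ filter. The projection line returns the fallback when projection fails. The remaining successful branch executes $u_t=\Pi^{W}_{\mathcal U_{\mathrm{known}}(o_t,u_{t-1})}(u_t^{\mathrm{nom}})$. The resampling alternative mentioned in the statement is a loop back to candidate construction, so it ends in one of the same four exits.

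The three fallback exits are handled directly by Assumption~\ref{ass:main_regular}, which stipulates that the fallback action belongs to $\mathcal U_{\mathrm{known}}(o_t,u_{t-1})$. Nothing about the latent navigation, decoder accuracy, or the score $s_\theta$ is needed on these branches. In particular, the conclusion does not depend on Theorem~\ref{thm:main_score_value} or on any approximation error.

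For the projection branch, I will check that the weighted projection \eqref{eq:online_projection} is well defined and lands in the set. Assumption~\ref{ass:main_regular} gives that $\mathcal U_{\mathrm{known}}(o_t,u_{t-1})$ is closed and nonempty whenever the projection layer is invoked. The weight $W$ is taken positive definite, so $u\mapsto\tfrac12\|u-u^{\mathrm{nom}}_t\|_W^2$ is continuous and coercive. Fix any $\bar u$ in the set. The minimization can then be restricted to the intersection of the set with the sublevel set $\{u:\tfrac12\|u-u^{\mathrm{nom}}_t\|_W^2\le \tfrac12\|\bar u-u^{\mathrm{nom}}_t\|_W^2\}$. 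This intersection is closed and bounded in $\R^m$, hence compact. By Weierstrass a minimizer exists, and it lies in $\mathcal U_{\mathrm{known}}(o_t,u_{t-1})$ by construction.

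Uniqueness is not required. If the set is nonconvex, any selection from the argmin works. A numerical failure of the projection solver is exactly the ``projection fails'' branch, which is already covered by the fallback case. The statement's qualifier ``whenever the projection layer is available'' matches the hypothesis that the set is nonempty when projection is invoked.

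The main obstacle is bookkeeping rather than analysis: making sure no exit path bypasses both the projection and the fallback. I will handle it by treating the algorithm as a finite decision tree, with resampling counted as a bounded number of repeats of the candidate stage, and verifying each leaf as above.
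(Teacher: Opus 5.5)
Your proof is correct and follows the same route as the paper: fallback exits are covered by Assumption~\ref{ass:main_regular}, and on the successful branch the projection onto a nonempty closed set lands in $\mathcal U_{\mathrm{known}}(o_t,u_{t-1})$. Your explicit handling of the $H_t^\star=\varnothing$ exit and your Weierstrass existence argument for the weighted projection (using positive definiteness of $W$) spell out what the paper's proof treats as immediate from the definition of projection.
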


\begin{proof}
Algorithm~\ref{alg:gdc} decodes only candidate segments that pass the known-constraint
filter. Its final executed command is
$u_t=\Pi_{\mathcal U_{\mathrm{known}}(o_t,u_{t-1})}^W(u_t^{\mathrm{nom}})$ whenever the
projection exists. By definition of projection onto a nonempty closed known-feasible set,
the projected point belongs to $\mathcal U_{\mathrm{known}}(o_t,u_{t-1})$. If the
projection layer is unavailable or fails, or if no candidate survives filtering, the
algorithm returns the known-feasible fallback required by
Assumption~\ref{ass:main_regular}.
Therefore every executed non-fallback or fallback action lies in the current
known-feasible action set.
\end{proof}

\begin{corollary}[Non-accumulating GDC action error]
\label{thm:main_action_error}
Suppose the ideal progress value is locally strongly geodesically convex near a target
latent point with modulus $m$, the learned score error is bounded by $\varepsilon$, and
the decoder has Lipschitz constant $K_{D_\psi}$ and error $\delta_{\mathrm{dec}}$. Under
the bounded-drift implementation of Assumption~\ref{ass:main_regular}, with
scaffold-restoration timescale $\epsilon_{\mathrm{s}}>0$, latent-velocity cap $V_{\max}$,
and latent integration step $\Delta t$ as defined for the online controller in
Section~\ref{sec:online} (full statement and proof: Theorem~\ref{thm:end_to_end} in
Appendix~\ref{sec:stability}), the asymptotic executed action error satisfies
\begin{equation}
  \limsup_{t\to\infty}\norm{u_t-u_t^\star}
  \le
  K_{D_\psi}\!\left(
    \sqrt{\frac{\epsilon_{\mathrm{s}}}{2}}V_{\max}\Delta t + \frac{\varepsilon}{m}
  \right)
  +\delta_{\mathrm{dec}} .
  \label{eq:main_action_error}
\end{equation}
The same bound holds pointwise for any receding-horizon update whose local refinement has
reached the $\varepsilon/m$ robustness tube. The three terms correspond to geometric
integration drift, score approximation error, and decoder error; none grows with the
rollout horizon.
\end{corollary}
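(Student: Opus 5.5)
The plan is a three-step triangle-inequality chain in latent space, which is then pushed through the decoder. Let $z_t^\star$ be the target latent point and $u_t^\star=[D^\star(z_t^\star)]_0$ the ideal action. Let $z_t$ be the accepted post-navigation latent state. I would first bound
\[
\norm{u_t-u_t^\star}\le \norm{[D_\psi(z_t)]_0-[D_\psi(z_t^\star)]_0}+\delta_{\mathrm{dec}}\le K_{D_\psi}\norm{z_t-z_t^\star}+\delta_{\mathrm{dec}}.
\]
This uses three facts. The final projection onto $\mathcal U_{\mathrm{known}}$ is nonexpansive in the $W$-norm. The target action already lies in that closed set, by Proposition~\ref{prop:main_feasible_execution} and Assumption~\ref{ass:main_regular}. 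Extracting the first action is 1-Lipschitz. So everything reduces to showing $\limsup_t\norm{z_t-z_t^\star}\le \sqrt{\epsilon_{\mathrm{s}}/2}\,V_{\max}\Delta t+\varepsilon/m$.

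For the score-error term, I would treat the online field \eqref{eq:online_latent_flow}, restricted to the scaffold, as a perturbed Riemannian gradient flow of $V:=V^{\mathrm{soft}}_{\mathrm{prog}}$ (plus the convex known and task terms). By Theorem~\ref{thm:main_score_value}, the learned drift equals $-\grad_G V+e(z)$ with $\norm{e}_G\le\varepsilon$. I would take the Lyapunov function $L(z)=\tfrac12 d_G(z,z^\star)^2$. Local strong geodesic convexity with modulus $m$ gives $\dot L\le -m\,d_G^2+\varepsilon\, d_G$. Hence the set $\{d_G\le\varepsilon/m\}$ is forward invariant and globally attracting within the convex neighborhood, since $d_G$ decays exponentially at rate $m$ outside it. This is the standard ISS-type argument for strongly monotone gradient fields, and it also delivers the ``pointwise once in the tube'' clause of the statement. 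Because $G\succeq\lambda_m I$ is bounded on the data-supported neighborhood, Euclidean and geodesic distances are comparable there. I would absorb that constant into the normalization of $K_{D_\psi}$, or equivalently state it in the $G$-norm.

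The discretization and scaffold term is the main obstacle. My approach is a singular-perturbation (snap-then-slide) split of each capped RK2 step into a tangential part and a normal part. The velocity cap bounds each increment by $\norm{\Delta z_n}_G\le V_{\max}\Delta t$. The normal component is governed by the stiff term $\epsilon_{\mathrm{s}}^{-1}\nabla\Phi_{\mathrm{scaf}}$, together with the retraction \eqref{eq:online_retraction}. I would write the normal offset $\eta$ as satisfying a scalar recursion in which restoration acts at rate $1/\epsilon_{\mathrm{s}}$ while each step injects at most $V_{\max}\Delta t$. Taking a quadratic energy balance $\tfrac12\eta^2\cdot(1/\epsilon_{\mathrm{s}})\lesssim \tfrac14(V_{\max}\Delta t)^2$ at the fixed point then yields the stationary offset $\sqrt{\epsilon_{\mathrm{s}}/2}\,V_{\max}\Delta t$. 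This is where the precise constant must be checked against the appendix's Theorem~\ref{thm:end_to_end}. I expect to need that the RK2 local error in the tangential direction is higher order, so it is dominated by this term, and that $\mu_R$-retraction does not increase distance to the scaffold.

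Finally, combining the two bounds, I would show that the drift and score-error contributions add rather than compound. The tangential contraction at rate $m$ shrinks past errors geometrically, so a discrete Grönwall/geometric-series argument gives a bound independent of $t$. Taking $\limsup$ and applying the decoder reduction from the first step gives \eqref{eq:main_action_error}.
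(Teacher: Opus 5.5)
Your outer chain matches the paper's proof. You use the $\varepsilon/m$ tube from the Lyapunov function $\tfrac12 d_G(\cdot,z^\star)^2$ (this is exactly Theorem~\ref{thm:robustness}), add a separate drift term by the triangle inequality, and push the result through the $K_{D_\psi}$-Lipschitz decoder plus $\delta_{\mathrm{dec}}$.

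\textbf{The gap is your third step.} The term $\sqrt{\epsilon_{\mathrm{s}}/2}\,V_{\max}\Delta t$ does not follow from the capped RK2/retraction dynamics, and nothing in the scheme supplies the energy balance you invoke. Take the recursion you describe: restoring velocity $-2\eta/\epsilon_{\mathrm{s}}$ from $\Phi_{\mathrm{scaf}}=d_G^2$, and injection of at most $V_{\max}\Delta t$ per step. That gives $\eta_{n+1}\le(1-2\Delta t/\epsilon_{\mathrm{s}})\eta_n+V_{\max}\Delta t$. Its fixed point is $\epsilon_{\mathrm{s}}V_{\max}/2$, which is linear in $\epsilon_{\mathrm{s}}$ and independent of $\Delta t$. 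The inequality $\tfrac12\eta^2/\epsilon_{\mathrm{s}}\lesssim\tfrac14(V_{\max}\Delta t)^2$ is not the fixed point of that recursion and is not implied by the integrator; it is chosen to reproduce the target constant. What the scheme does give is elementary. A step that starts in $\mathcal M_t$ with metric length at most $V_{\max}\Delta t$, followed by the $\mu_R$-retraction, ends within roughly $(1-\mu_R)V_{\max}\Delta t$ of $\mathcal M_t$. That has the stated form only under an extra tuning relation such as $1-\mu_R\le\sqrt{\epsilon_{\mathrm{s}}/2}$.

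\textbf{How the paper gets the constant.} It does not derive it. The ``bounded-drift implementation'' is the retraction/backtracking guard of Proposition~\ref{prop:drift}, which accepts an update only if $d_G(z_{t+1},\mathcal M_t)\le\sqrt{\epsilon_{\mathrm{s}}/2}\,V_{\max}\Delta t$. The paper calls this ``a design guarantee of that guard, not a consequence of the integrator alone.'' You should take it as a hypothesis.

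\textbf{Your fourth step then becomes unnecessary, and as planned it would not give the stated inequality.} Propagating per-step errors through a rate-$m$ contraction leaves extra positive terms:
\begin{itemize}
\item Evaluating the field a distance $\eta$ off the scaffold perturbs the tangential drift by up to $L\eta$, with $L$ a gradient-Lipschitz constant. This enlarges the tube to $(\varepsilon+L\eta)/m$.
\item The RK2 midpoint evaluation enlarges the tube by a further relative $\mathcal O(L\Delta t)$.
\end{itemize}
Such terms cannot be ``dominated'' away in a bound with exact constants. The paper uses no recursion. Theorem~\ref{thm:robustness} applied to the in-scaffold refinement point $\bar z_t$ gives $\limsup_t d_G(\bar z_t,z_t^\star)\le\varepsilon/m$. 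The guard bounds $d_G(z_t,\bar z_t)$ uniformly in $t$. The two bounds simply add, which is why nothing grows with the horizon.

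\textbf{Two smaller points in your first step.}
\begin{itemize}
\item Proposition~\ref{prop:main_feasible_execution} concerns executed actions, so $u_t^\star\in\mathcal U_{\mathrm{known}}$ is an additional hypothesis, not a consequence.
\item Non-expansiveness of the projection requires convexity of $\mathcal U_{\mathrm{known}}$, but Assumption~\ref{ass:main_regular} only gives closed and nonempty. The paper assumes convexity here and otherwise adds $\delta_{\mathrm{proj}}$ as in Theorem~\ref{thm:asymptotic_optimality}. Also, passing from $\|\cdot\|_W$ to the Euclidean norm costs the square root of the condition number of $W$ unless $W$ is a multiple of the identity.
\end{itemize}
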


\begin{proof}
Let $z_t^\star$ denote the ideal in-scaffold latent action segment and let $\bar z_t$ be
the in-scaffold point reached by the local refinement. The perturbed-gradient result in
Theorem~\ref{thm:robustness} gives
$\limsup_{t\to\infty}d_G(\bar z_t,z_t^\star)\le \varepsilon/m$. The
bounded-drift/retraction guard in Assumption~\ref{ass:main_regular} keeps the accepted
latent point $z_t$ within
$\sqrt{\epsilon_{\mathrm{s}}/2}V_{\max}\Delta t$ of $\bar z_t$ in the local metric. Thus, by the
triangle inequality and the same limsup bound,
$\limsup_{t\to\infty}d_G(z_t,z_t^\star)\le
\sqrt{\epsilon_{\mathrm{s}}/2}V_{\max}\Delta t+\varepsilon/m$. Applying the
$K_{D_\psi}$-Lipschitz decoder to the first decoded action and adding the offline decoder
error $\delta_{\mathrm{dec}}$ gives \eqref{eq:main_action_error}; the final projection onto
the convex known-feasible set is non-expansive and fixes the feasible ideal action, so it
does not enlarge the bound. For any particular
receding-horizon update already satisfying
$d_G(\bar z_t,z_t^\star)\le \varepsilon/m$, the identical triangle-inequality argument
gives the pointwise version. The estimate does not sum over the rollout length.
\end{proof}

\begin{theorem}[Approximate optimality]
\label{thm:main_asymptotic_optimality}
Fix a selected context $c=(o,g,\Cknown,H^\star)$ and a compact geodesically convex
data-supported region $U_c\subset\Mhat$. Let $\Phi_{\mathrm{opt}}(\cdot,c)$ denote the
oracle constrained horizon objective on $U_c$, and let
$z^\star=\argmin_{z\in U_c}\Phi_{\mathrm{opt}}(z,c)$ be its unique minimizer. Suppose
the learned GDC potential approximates the oracle objective up to the context-dependent
normalizing constant $a(c)$:
\begin{equation}
  \sup_{z\in U_c}
  \left|
    \Phi_{\mathrm{GDC}}(z,c)-\Phi_{\mathrm{opt}}(z,c)-a(c)
  \right|
  \le \alpha_{\mathrm{val}},
  \label{eq:main_value_consistency}
\end{equation}
and suppose the online refinement returns $\hat z\in U_c$ satisfying
\begin{equation}
  \Phi_{\mathrm{GDC}}(\hat z,c)
  \le
  \inf_{z\in U_c}\Phi_{\mathrm{GDC}}(z,c)+\eta_{\mathrm{opt}} .
  \label{eq:main_approx_argmin}
\end{equation}
If $\Phi_{\mathrm{opt}}$ is $m$-strongly geodesically convex on $U_c$, then
\begin{equation}
  \Phi_{\mathrm{opt}}(\hat z,c)-\Phi_{\mathrm{opt}}(z^\star,c)
  \le
  2\alpha_{\mathrm{val}}+\eta_{\mathrm{opt}},
  \qquad
  d_G(\hat z,z^\star)
  \le
  \sqrt{\frac{2(2\alpha_{\mathrm{val}}+\eta_{\mathrm{opt}})}{m}} .
  \label{eq:main_asymptotic_optimality}
\end{equation}
Consequently, with a $K_{D_\psi}$-Lipschitz decoder, decoder error
$\delta_{\mathrm{dec}}$, and final projection perturbation $\delta_{\mathrm{proj}}$,
the executed first action obeys
\begin{equation}
  \|u_t-u_t^\star\|
  \le
  K_{D_\psi}
  \sqrt{\frac{2(2\alpha_{\mathrm{val}}+\eta_{\mathrm{opt}})}{m}}
  +\delta_{\mathrm{dec}}+\delta_{\mathrm{proj}} .
  \label{eq:main_optimal_action_error}
\end{equation}
Thus, along any sequence of datasets, models, and refinement budgets for which
$\alpha_{\mathrm{val}},\eta_{\mathrm{opt}},\delta_{\mathrm{dec}}$, and
$\delta_{\mathrm{proj}}$ vanish, GDC converges to the constrained optimal first action
on the selected data-supported region.
\end{theorem}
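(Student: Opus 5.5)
The argument splits into three stages: a value-suboptimality bound for $\hat z$ obtained from the uniform approximation \eqref{eq:main_value_consistency} and the approximate argmin \eqref{eq:main_approx_argmin}; a conversion of that value gap into a geodesic distance bound via strong geodesic convexity; and a transfer of the distance bound to the executed first action through the decoder Lipschitz constant and the error terms.

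For the first stage, write $\Psi(z):=\Phi_{\mathrm{GDC}}(z,c)-a(c)$. Subtracting the constant $a(c)$ does not change minimizers or approximate minimizers, so \eqref{eq:main_approx_argmin} holds verbatim for $\Psi$. Moreover \eqref{eq:main_value_consistency} gives $|\Psi-\Phi_{\mathrm{opt}}|\le\alpha_{\mathrm{val}}$ on $U_c$. Then chain:
\[
\Phi_{\mathrm{opt}}(\hat z)\le \Psi(\hat z)+\alpha_{\mathrm{val}}
\le \inf_{U_c}\Psi+\eta_{\mathrm{opt}}+\alpha_{\mathrm{val}}
\le \Psi(z^\star)+\eta_{\mathrm{opt}}+\alpha_{\mathrm{val}}
\le \Phi_{\mathrm{opt}}(z^\star)+2\alpha_{\mathrm{val}}+\eta_{\mathrm{opt}},
\]
which is the first inequality in \eqref{eq:main_asymptotic_optimality}. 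Compactness of $U_c$ is used only to ensure the infima are finite and $z^\star$ exists.

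The second stage is the geometric step, and I expect it to be the main obstacle, since the paper works with the pullback metric $G$ rather than a flat metric. Since $U_c$ is geodesically convex, I take the minimizing geodesic $\gamma$ from $z^\star$ to $\hat z$, which stays in $U_c$. Restricted to $\gamma$, $m$-strong geodesic convexity gives
\[
\Phi_{\mathrm{opt}}(\hat z)\ge \Phi_{\mathrm{opt}}(z^\star)+\langle \grad_G\Phi_{\mathrm{opt}}(z^\star),\dot\gamma(0)\rangle_G+\tfrac m2 d_G(\hat z,z^\star)^2 .
\]
Because $z^\star$ is the constrained minimizer over the convex set $U_c$, first-order optimality makes the directional derivative along any feasible geodesic nonnegative. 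That gives the quadratic growth estimate $\Phi_{\mathrm{opt}}(\hat z)-\Phi_{\mathrm{opt}}(z^\star)\ge\frac m2 d_G(\hat z,z^\star)^2$. If $\Phi_{\mathrm{opt}}$ is not differentiable at $z^\star$, I would argue with the one-sided directional derivative along $\gamma$, or use the convex-combination form of strong convexity at a point $\gamma(s)$ and let $s\to0$. Combining the quadratic growth with the first stage and taking square roots gives the distance bound in \eqref{eq:main_asymptotic_optimality}.

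For the action bound, define $u_t^\star$ as the first action of $D_\psi(z^\star)$ under the ideal decoder. The executed action is the projection of the first action of the learned decoder output at $\hat z$. By the triangle inequality,
\[
\|u_t-u_t^\star\|\le \|[D_\psi(\hat z)]_0-[D_\psi(z^\star)]_0\| + \delta_{\mathrm{dec}} + \delta_{\mathrm{proj}},
\]
where $\delta_{\mathrm{dec}}$ accounts for the gap between the learned and ideal decoded segments and $\delta_{\mathrm{proj}}$ bounds the displacement caused by the final projection \eqref{eq:online_projection}. Extracting the first action is $1$-Lipschitz, so the first term is at most $K_{D_\psi}\,d_G(\hat z,z^\star)$. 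This needs Lipschitzness of the decoder with respect to $d_G$. That holds automatically because $G\succeq J_D^\top J_D$, so decoded lengths are dominated by $G$-lengths and $K_{D_\psi}\le1$ suffices along geodesics; otherwise I simply take $K_{D_\psi}$ as stated in Assumption~\ref{ass:main_regular}. This yields \eqref{eq:main_optimal_action_error}, and the convergence claim follows by letting all four error terms tend to zero.
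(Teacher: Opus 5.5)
Your proposal is correct and follows the paper's proof. It uses the same chain applying value consistency at $\hat z$ and at $z^\star$ around the approximate-argmin step, which cancels $a(c)$; then quadratic growth from strong geodesic convexity; then the decoder-Lipschitz triangle inequality plus $\delta_{\mathrm{dec}}$ and $\delta_{\mathrm{proj}}$. Your only additions are sound refinements. You derive the quadratic-growth inequality from strong convexity and first-order optimality along the geodesic, whereas the paper asserts it in the main text and takes it as the curvature hypothesis itself in Theorem~\ref{thm:asymptotic_optimality}. You also observe that $G\succeq J_D^\top J_D$ makes the decoder automatically $1$-Lipschitz with respect to $d_G$.
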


\begin{proof}
Let $\widehat\Phi=\Phi_{\mathrm{GDC}}(\cdot,c)$ and
$\Phi^\star=\Phi_{\mathrm{opt}}(\cdot,c)$. By \eqref{eq:main_value_consistency},
$\Phi^\star(\hat z)\le \widehat\Phi(\hat z)-a(c)+\alpha_{\mathrm{val}}$. By
\eqref{eq:main_approx_argmin}, this is at most
$\widehat\Phi(z^\star)-a(c)+\alpha_{\mathrm{val}}+\eta_{\mathrm{opt}}$. Applying
\eqref{eq:main_value_consistency} again at $z^\star$ gives
$\Phi^\star(\hat z)-\Phi^\star(z^\star)\le
2\alpha_{\mathrm{val}}+\eta_{\mathrm{opt}}$. Strong geodesic convexity gives
$\Phi^\star(\hat z)-\Phi^\star(z^\star)\ge
(m/2)d_G(\hat z,z^\star)^2$, yielding the distance bound. The action bound follows from
the decoder Lipschitz property, then adding the decoder reconstruction and final
projection perturbation errors. Letting the four error terms go to zero proves the
asymptotic statement.
\end{proof}

\begin{corollary}[Coverage-controlled rate]
\label{cor:main_optimality_rate}
In the setting of Theorem~\ref{thm:main_asymptotic_optimality}, let
$\mathcal Z_n(c)\subset U_c$ be the finite latent training anchors for context $c$, with
fill distance
$h_n=\sup_{z\in U_c}\inf_{\zeta\in\mathcal Z_n(c)}d_G(z,\zeta)$. Suppose
$\Phi_{\mathrm{GDC}}(\cdot,c)$ and $\Phi_{\mathrm{opt}}(\cdot,c)$ are Lipschitz on
$U_c$ with constants $L_n$ and $L_{\mathrm{opt}}$, and suppose their
anchor error, up to a context normalizer $a(c)$, is at most $\varepsilon_n$:
\[
  \sup_{\zeta\in\mathcal Z_n(c)}
  |\Phi_{\mathrm{GDC}}(\zeta,c)-\Phi_{\mathrm{opt}}(\zeta,c)-a(c)|
  \le \varepsilon_n .
\]
Then Theorem~\ref{thm:main_asymptotic_optimality} holds with
\[
  \alpha_{\mathrm{val}}
  \le
  \varepsilon_n+(L_n+L_{\mathrm{opt}})h_n .
\]
Consequently, if the online refinement error after $K$ refinement steps is $\eta_K$,
the oracle gap is
\[
  \Phi_{\mathrm{opt}}(\hat z,c)-\Phi_{\mathrm{opt}}(z^\star,c)
  =
  \mathcal O(\varepsilon_n+h_n+\eta_K),
\]
and the executed first-action error is
\[
  \|u_t-u_t^\star\|
  =
  \mathcal O\!\left(\sqrt{\varepsilon_n+h_n+\eta_K}
  +\delta_{\mathrm{dec}}+\delta_{\mathrm{proj}}\right).
\]
For example, quasi-uniform coverage on a $d$-dimensional region gives
$h_n=\mathcal O(n^{-1/d})$, while iid coverage with density bounded below gives
$h_n=\mathcal O_p((\log n/n)^{1/d})$; the covering argument is given in
Corollary~\ref{cor:coverage_rate}.
\end{corollary}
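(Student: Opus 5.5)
The plan is to reduce the corollary to a uniform sup-norm estimate on $U_c$, obtained by a standard covering (fill-distance) argument, and then substitute that estimate into Theorem~\ref{thm:main_asymptotic_optimality}.

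\emph{Step 1 (uniform bound on $U_c$).} Fix $z\in U_c$. By compactness of $U_c$ and finiteness of $\mathcal Z_n(c)$, the infimum in the definition of $h_n$ is attained, so there is an anchor $\zeta\in\mathcal Z_n(c)$ with $d_G(z,\zeta)\le h_n$. Write $E(z)=\Phi_{\mathrm{GDC}}(z,c)-\Phi_{\mathrm{opt}}(z,c)-a(c)$. Since $a(c)$ does not depend on $z$, $E$ is Lipschitz on $U_c$ (with respect to $d_G$) with constant at most $L_n+L_{\mathrm{opt}}$. Hence
\[
  |E(z)|\le |E(\zeta)|+|E(z)-E(\zeta)|\le \varepsilon_n+(L_n+L_{\mathrm{opt}})\,h_n .
\]
Taking the supremum over $z\in U_c$ shows that \eqref{eq:main_value_consistency} holds with $\alpha_{\mathrm{val}}\le\varepsilon_n+(L_n+L_{\mathrm{opt}})h_n$.

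\emph{Step 2 (plug into the theorem).} Set $\eta_{\mathrm{opt}}=\eta_K$ and apply \eqref{eq:main_asymptotic_optimality}. The oracle gap is then at most $2\varepsilon_n+2(L_n+L_{\mathrm{opt}})h_n+\eta_K$. This is $\mathcal O(\varepsilon_n+h_n+\eta_K)$ provided the Lipschitz constants $L_n$ stay bounded along the sequence; I would make that uniform bound an explicit standing assumption, since otherwise the $\mathcal O$ constant is not uniform in $n$. The action bound \eqref{eq:main_optimal_action_error} then gives the square-root rate for $\|u_t-u_t^\star\|$. Here the constant $m$ and the decoder constant $K_{D_\psi}$ are absorbed into the $\mathcal O$, and $\delta_{\mathrm{dec}},\delta_{\mathrm{proj}}$ are carried along additively.

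\emph{Step 3 (fill-distance examples).} These are only sketched here, with the details deferred to Corollary~\ref{cor:coverage_rate}.
\begin{itemize}
\item \textbf{Quasi-uniform anchors.} The $n$ anchors have separation comparable to their fill distance. Disjoint geodesic balls of radius $h_n/2$ around the anchors must fit inside a fixed neighborhood of $U_c$. On compact $U_c$, $G$ is bounded above and below by multiples of the identity, so geodesic-ball volumes are comparable to $r^d$. This gives $n h_n^d\lesssim 1$, i.e.\ $h_n=\mathcal O(n^{-1/d})$.
\item \textbf{iid anchors with density bounded below by $p_{\min}$.} Cover $U_c$ by $N\asymp r^{-d}$ balls of radius $r$. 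Each ball is empty with probability at most $(1-p_{\min}c\,r^d)^n$. A union bound makes the probability that some ball is empty small once $r=C(\log n/n)^{1/d}$. Every point then lies within $2r$ of an anchor, giving $h_n=\mathcal O_p((\log n/n)^{1/d})$.
\end{itemize}

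The main obstacle is modest. The covering constants in Step 3 require Euclidean--geodesic comparability on $U_c$, which I would obtain from the positive definiteness and continuity of $G$ together with the compactness of $U_c$. The other point needing care is the uniformity of $L_n$ in Step 2, which is needed for the stated $\mathcal O$ rates; everything else is a triangle inequality.
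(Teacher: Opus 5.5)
Your proposal is correct and follows the paper's argument: the same fill-distance triangle inequality gives $\alpha_{\mathrm{val}}\le\varepsilon_n+(L_n+L_{\mathrm{opt}})h_n$, this is substituted into Theorem~\ref{thm:main_asymptotic_optimality}, and the iid rate comes from the same covering-plus-union-bound argument as in Corollary~\ref{cor:coverage_rate}. Your requirement of a uniform bound on $L_n$ is exactly the $\sup_n L_n<\infty$ hypothesis stated in that appendix version, and your packing sketch for the quasi-uniform case supplies a step the paper only asserts.
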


\begin{proof}
For any $z\in U_c$, choose $\zeta\in\mathcal Z_n(c)$ with
$d_G(z,\zeta)\le h_n$. The triangle inequality and Lipschitz bounds give
\[
  |\Phi_{\mathrm{GDC}}(z,c)-\Phi_{\mathrm{opt}}(z,c)-a(c)|
  \le
  \varepsilon_n+(L_n+L_{\mathrm{opt}})h_n .
\]
Substituting this bound for $\alpha_{\mathrm{val}}$ in
Theorem~\ref{thm:main_asymptotic_optimality} gives the value-gap and action-error rates.
\end{proof}

\begin{remark}[Physics--data interpolation]
\label{prop:main_interpolation}
For $\lambda\in[0,1]$, the online potential
\begin{equation}
  \Phi_\lambda(z)
  =
  \lambda\Phi_{\mathrm{known}}(z)
  +(1-\lambda)\Phi_{\mathrm{data}}(z,o,g,\Cknown,H^\star)
  +\Phi_{\mathrm{task}}(z)
  \label{eq:main_interpolation}
\end{equation}
is affine in $\lambda$, so it interpolates continuously between known-physics control
($\lambda=1$, leaving $\Phi_{\mathrm{known}}+\Phi_{\mathrm{task}}$) and data-driven
progress completion ($\lambda=0$, leaving $\Phi_{\mathrm{data}}+\Phi_{\mathrm{task}}$).
The support restriction, anchors, filters, and final projection are those of the known
scaffold at every $\lambda$, so intermediate values simply blend the two vector-field
contributions. When data support is sparse or the horizon selector abstains, setting
$\lambda\to1$ removes $\Phi_{\mathrm{data}}$ from the descent law and recovers the
known-feasible controller.
\end{remark}

These statements are intentionally conditional. GDC does not claim global optimality or
implicit feasibility outside the data-supported scaffold. Approximate optimality requires
the stronger value-consistency condition in Theorem~\ref{thm:main_asymptotic_optimality}.
Without that condition, the guarantee is narrower and operational: known physics enforces
what can be executed, while progress-tilted feasible data supplies a local direction
where the horizon selector certifies positive expected progress.

\section{Experimental Validation}
\label{sec:applications}

We instantiate GDC in two domains with different meanings of partial structural
knowledge. In SUMO, the known structure consists of explicit traffic and actuator
constraints, while route-directed behavior is learned from weak driving traces. In the
bilevel problem, the known structure consists of the nested problem form, analytic
objectives and gradients, feasibility projection, and local relaxation updates, while
useful joint motion across the levels is learned from solver traces. Each domain is
evaluated on its own numerical benchmark: the multilevel problems test whether the
learned progress field supplies fast cross-level directions beyond a known local
scaffold, and the SUMO routes test whether the same principle improves closed-loop route
progress while explicit traffic constraints remain enforced online.

\subsection{Application I: Route-progress control in SUMO}
\label{sec:app_sumo}

\subsubsection{Task and known scaffold}

SUMO gives a concrete partial-physics instance of GDC
\cite{lopez2018sumo,krajzewicz2012sumo}. The ego vehicle must complete a sampled route
through a traffic network, but the controller is not treated as an end-to-end perception
system. SUMO provides localization, lane, leader-vehicle, and traffic-light signals;
explicit rules check speed, acceleration, headway, lane-change, and collision
constraints. The missing component is the local direction of route progress: among
actions that satisfy these known tests, the controller must decide which one advances the
route without relying on a full online optimal-control solve.

At each control step the controller observes
\begin{equation}
 o_t=(p_t^x,p_t^y,v_t,\sin\psi_t,\cos\psi_t,\ell_t,s_t,L_t,v_t^{\max},
 \iota_t,N_t^{\mathrm{route}},d_t,g_t^{\mathrm{lead}},\Delta v_t^{\mathrm{lead}},c_t^{\mathrm{tls}},
 c_t^{\mathrm{col}},c_t^{\mathrm{arr}})\in\R^{17},
\label{eq:sumo_observation}
\end{equation}
where $d_t$ is the remaining route distance. The action
$u_t=(a_t,q_t)$ contains a longitudinal acceleration and a lane-change command
$q_t\in\{-1,0,1\}$. After executing one action in SUMO, the controller receives a new
observation and replans.

The known-feasible filter checks
\begin{align}
 -a_{\mathrm{dec}}^{\max}&\leq a_t\leq a_{\mathrm{acc}}^{\max},
 &0&\leq v_t+\Delta t_{\mathrm{env}}\,a_t\leq v_t^{\max}+\varepsilon_v, \\
 q_t&\in\{-1,0,1\},
 &|q_t-q_{t-1}|&\leq\Delta q_{\max}, \\
 g_t^{\mathrm{lead}}&\geq g_{\min}+T_hv_t,
 &c_t^{\mathrm{col}}&=0.
 \label{eq:sumo_constraints}
\end{align}
We use $a_{\mathrm{acc}}^{\max}=2.0\,\mathrm{m/s^2}$,
$a_{\mathrm{dec}}^{\max}=4.5\,\mathrm{m/s^2}$, $g_{\min}=2.5\,\mathrm{m}$,
$T_h=0.8\,\mathrm{s}$, and environment control period
$\Delta t_{\mathrm{env}}=0.5\,\mathrm{s}$ (distinct from the latent integration step
$\Delta t$ of the online navigator). These constraints determine
what may be executed, but they do not select among feasible route choices. A conservative
fallback segment is constructed from this same scaffold by tracking a target speed,
braking for near leaders or restrictive signals, and avoiding discretionary lane changes.
The fallback is a feasible anchor and a last-resort action source; it is not the learned
policy.

\subsubsection{Progress data and online decision rule}

Offline rollouts come from a deliberately weak and diverse behavior policy with randomized
acceleration, braking, coasting, exploration, and lane changes. It observes traffic-light
state and slows at red or yellow signals, but is neither expert nor globally optimal.
The data therefore include progress, waiting, braking, and imperfect local maneuvers.
GDC does not treat these traces as expert labels. It uses them to learn which feasible
segments tend to reduce remaining route distance.

Each rollout is divided into action segments
\begin{equation}
 \tau_t=(u_t,u_{t+1},\ldots,u_{t+H-1}).
\end{equation}
Each segment receives the signed certificate
\begin{equation}
 \Delta\rho_{\mathrm{SUMO}}(\tau_t,g)=d_t-d_{t+H}
 -\kappa_{\mathrm{col}}N_{\mathrm{col}}(\tau_t)
 +\kappa_{\mathrm{arr}}\mathbb{I}\{\text{arrival in }\tau_t\}.
 \label{eq:sumo_certificate}
\end{equation}
The target distribution is the progress-tilted empirical distribution
\begin{equation}
 q_{\beta}^{\mathrm{SUMO}}(\tau\mid o,g,\Cknown)
 \propto q_{\mathcal D}^{\mathrm{SUMO}}(\tau\mid o,g,\Cknown)
 \exp\!\left(\beta\,\operatorname{clip}
 (\Delta\rho_{\mathrm{SUMO}}(\tau,g),-c,c)\right).
 \label{eq:sumo_tilt}
\end{equation}

At test time, GDC samples or initializes candidate segments near the fallback and near
data-supported local motions, refines them with the learned score, decodes
$\tau_t(z)=D_\psi(z,o_t,g,\Cknown,H_t^\star)$, filters them with
\eqref{eq:sumo_constraints}, and ranks the remaining candidates by
\begin{equation}
 J_t(z)=w_E\Phi_{\mathrm{data}}(z,o_t,g,\Cknown,H_t^\star)
 -w_P\widehat{\Delta\rho}_{\mathrm{SUMO}}(\tau_t(z),g)
 +w_U\|\tau_t(z)\|^2+w_CV_{\Cknown}(\tau_t(z))
 +w_T\|z-z_t^{\mathrm{fb}}\|^2.
 \label{eq:sumo_online_objective}
\end{equation}
Here $V_{\Cknown}$ is the aggregate violation returned by the explicit
known-constraint checker, and $z_t^{\mathrm{fb}}$ is the latent representation of the
fallback segment.
Only the first action of the selected segment is executed. The next observation starts a
new planning step. Thus the learned distribution is not executed open loop; it supplies a
progress energy for choosing among known-feasible short motions. If no learned candidate
improves sufficiently over the fallback, the controller executes the first fallback
action.

\subsubsection{Numerical results}
\label{sec:sim_sumo}

The SUMO evaluation tests the same role separation in a closed-loop driving task:
explicit traffic constraints decide what can be executed, while the learned progress
energy chooses among feasible local motions. We train from $100{,}000$ diverse weak
rollouts and retain at most five million selected segments per horizon. Online
evaluation uses $100$ closed-loop episodes. At every control step the controller scores
candidate segments, filters them with the known scaffold, executes only the first action,
and replans after the next SUMO observation.

\paragraph{Benchmarks.}
We use three increasingly difficult route-progress benchmarks, all based on perturbed
$20\times20$ two-lane grids with $120\,\mathrm{m}$ nominal edge length and
$13.89\,\mathrm{m/s}$ nominal speed. The priority benchmark is the clean route-completion
case. Sparse TLS adds traffic-light delay, where stopping may be the correct local
action. Complex TLS is the harder-map benchmark: it keeps the graph connected but adds
$18$ sparse traffic lights, $64$ very-low-speed detour edges at $2.5\,\mathrm{m/s}$,
and $76$ bottleneck edges at $7.5\,\mathrm{m/s}$ on a $14\,\mathrm{m}$ perturbed grid.
This creates routes where local progress, stopping, and longer-range route intent can
conflict.

\begin{table}[!ht]
\centering
\small
\setlength{\tabcolsep}{5pt}
\begin{tabular}{lrrrrl}
\toprule
\textbf{Benchmark} & \textbf{TLS} & \textbf{Slow detour} & \textbf{Bottleneck} &
\textbf{Removed edges} & \textbf{Controllers} \\
\midrule
Priority grid & $0$ & $0$ & $0$ & $0$ & fixed $H=12$ \\
Sparse TLS & sparse & $0$ & $0$ & $0$ & fixed $H=12$, multi-$H$ \\
Complex TLS & $18$ & $64$ at $2.5\,\mathrm{m/s}$ & $76$ at $7.5\,\mathrm{m/s}$ &
$0$ & fixed $H=12$, multi-$H$ \\
\bottomrule
\end{tabular}
\caption{SUMO route-progress benchmark suite. All maps are perturbed $20\times20$
two-lane grids. Multi-$H$ denotes joint online selection over $H\in\{6,12,24\}$; fixed-H
restricts candidate selection to $H=12$.}
\label{tab:sumo_benchmarks}
\end{table}

\paragraph{Controllers and metrics.}
The fixed-horizon baseline and the multi-horizon controller use the same observation
features, weak rollout source, known-feasible filter, fallback construction, and
receding-horizon first-action execution. The difference is only the online temporal
support: fixed-H scores $H=12$ segments, while multi-H scores candidates from
$H\in\{6,12,24\}$ in a single pool and executes the first action of the best safe
segment. We report arrival rate, normalized route progress, collisions, mean and maximum
known-constraint violation, non-fallback fraction, mean speed, and episode length.

\begin{table}[!htbp]
\centering
\small
\setlength{\tabcolsep}{3.5pt}
\resizebox{\linewidth}{!}{%
\begin{tabular}{llrrrrrrrr}
\toprule
\textbf{Benchmark} & \textbf{Controller} & \textbf{Arr.} & \textbf{Prog.} &
\textbf{Coll.} & \textbf{Mean viol.} & \textbf{Max viol.} &
\textbf{Non-fb.} & \textbf{Speed} & \textbf{Steps} \\
\midrule
Priority grid & fixed $H=12$ & $1.000$ & $1.000$ & $0.000$ &
$1.22{\times}10^{-6}$ & $2.80{\times}10^{-4}$ & $0.999$ & $11.24$ & $202.6$ \\
Sparse TLS & fixed $H=12$ & $1.000$ & $1.000$ & $0.000$ &
$1.48{\times}10^{-7}$ & $5.01{\times}10^{-5}$ & $1.000$ & $10.97$ & $230.3$ \\
Sparse TLS & multi-$H$ & $1.000$ & $1.000$ & $0.000$ &
$0.00$ & $0.00$ & $0.997$ & $10.58$ & $233.9$ \\
Complex TLS & fixed $H=12$ & $1.000$ & $1.000$ & $0.000$ &
$1.59{\times}10^{-5}$ & $5.88{\times}10^{-3}$ & $0.999$ & $9.25$ & $317.3$ \\
Complex TLS & multi-$H$ & $1.000$ & $1.000$ & $0.000$ &
$0.00$ & $0.00$ & $0.991$ & $8.96$ & $322.9$ \\
\bottomrule
\end{tabular}
}
\caption{SUMO closed-loop results over $100$ episodes. Arr. is arrival rate, Prog. is
normalized route progress, Coll. is mean collision count, Non-fb. is the fraction of
steps where a learned candidate rather than the fallback supplies the executed action,
Speed is mean speed in $\mathrm{m/s}$, and Steps is mean episode length.}
\label{tab:sumo_results}
\end{table}

\paragraph{Results.}
All reported SUMO controllers reach every route goal with zero collisions
(Table~\ref{tab:sumo_results}). The priority result is the clean route-completion case:
GDC achieves unit progress with only numerical-tolerance known-constraint violation.
Sparse TLS reduces speed and increases episode length because stopping at signals is part
of feasible progress, but both fixed-H and multi-H still complete all routes. The harder
complex map is where the horizon comparison becomes meaningful. Fixed $H=12$ is slightly
faster, but its maximum known-constraint violation rises to $5.88\times10^{-3}$.
Multi-H removes this residual violation entirely, with a $0.29\,\mathrm{m/s}$ speed
reduction and a $5.6$-step increase in mean episode length. Thus the multi-horizon
benefit is not higher speed; it is safer candidate selection when the map creates
conflicting short- and longer-horizon progress cues.

\begin{figure}[!htbp]
\centering
\includegraphics[width=0.92\linewidth]{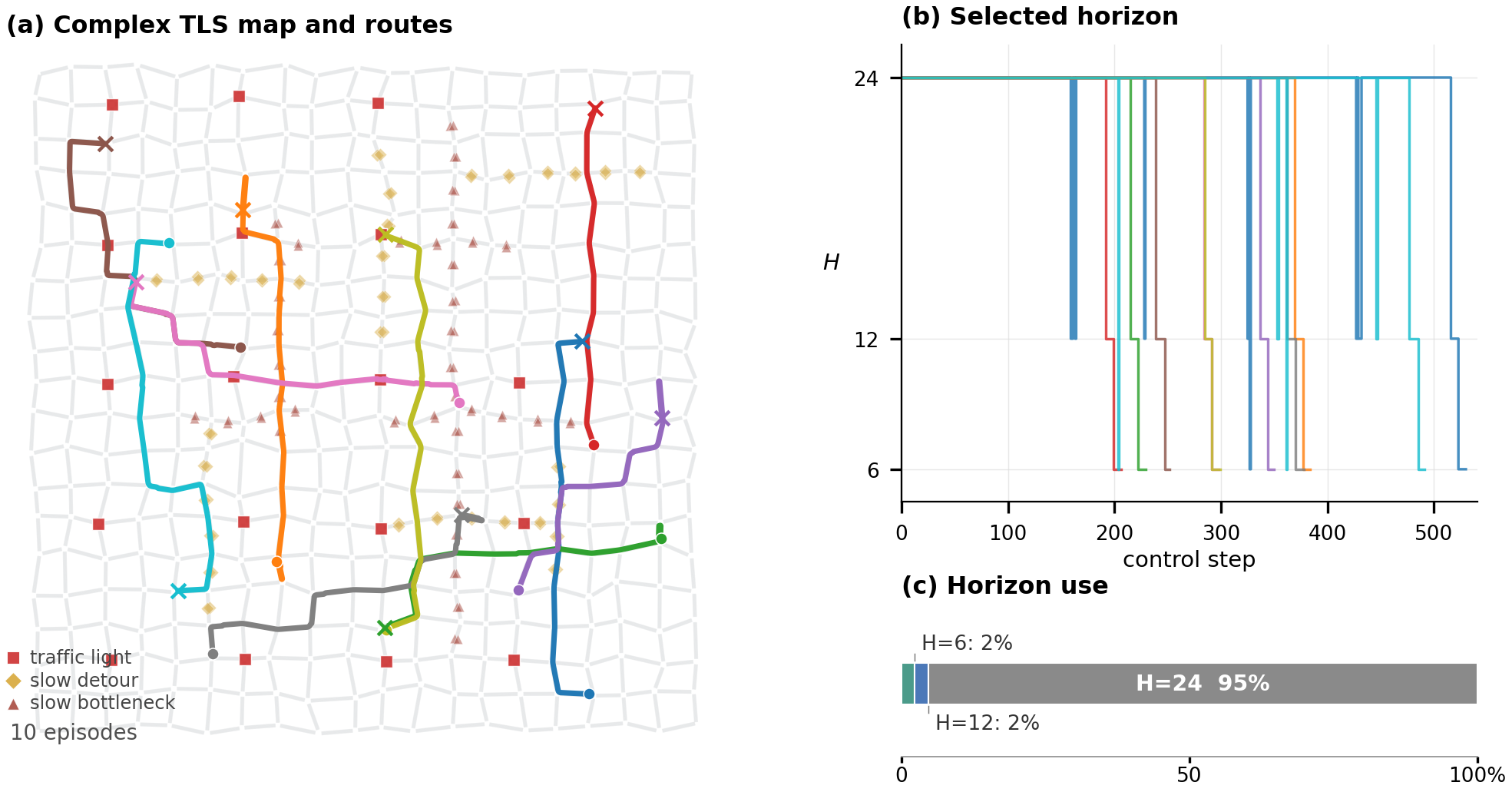}
\caption{Complex TLS multi-horizon evaluation from the fast-equivalent online
implementation. The figure shows ten recorded ego paths on the harder map, selected
horizons over control steps for the same episodes, the horizon-use distribution over all
$100$ episodes, and the dominance of the long horizon on a map with sparse traffic
lights, slow bottleneck corridors, and slow detour edges.}
\label{fig:sumo_multi_h_paths}
\end{figure}

Figure~\ref{fig:sumo_multi_h_paths} gives the qualitative explanation for the complex-map
result. Multi-H does not vote among three independent policies. It scores all safe
candidate segments together and executes the first action of the best segment. The
selected horizon therefore acts as an online diagnostic of temporal scale: longer
horizons dominate during ordinary progress, while shorter horizons appear near local
traffic or constraint-sensitive events. The fast-equivalent implementation gives the same
aggregate statistics as the original multi-horizon controller, showing that this
candidate-scoring step can be batched without changing the closed-loop behavior.

\paragraph{Comparison with rule, imitation, and RL baselines.}
Table~\ref{tab:sumo_rl_benchmarks} compares GDC with a rule safety controller, the weak
behavior source used for data collection, behavior cloning on the offline rollouts, and
PPO/SAC baselines. The comparison uses the same sparse-TLS and complex-TLS route suites.
GDC is the only method that reaches $100\%$ arrival on both maps with zero collisions and
zero mean known-constraint violation in the multi-horizon setting. On the harder complex
TLS benchmark, GDC multi-H improves arrival from $91\%$ for the rule controller,
$51\%$ for behavior cloning, $81\%$ for PPO, and $79\%$ for SAC to $100\%$, while also
reducing the mean episode length relative to all non-GDC baselines.

\begin{table}[!htbp]
\centering
\footnotesize
\setlength{\tabcolsep}{3.0pt}
\renewcommand{\arraystretch}{0.98}
\begin{tabular}{llrrrrrr}
\toprule
\textbf{Benchmark} & \textbf{Method} & \textbf{Prog.} & \textbf{Arr.} &
\textbf{Coll.} & \textbf{Mean viol.} & \textbf{Speed} & \textbf{Steps} \\
\midrule
Sparse TLS & GDC fixed-H12 & $1.000$ & $1.000$ & $0.00$ & $1.48{\times}10^{-7}$ & $10.97$ & $230.3$ \\
Sparse TLS & GDC multi-H & $\mathbf{1.000}$ & $\mathbf{1.000}$ & $0.00$ & $\mathbf{0.00}$ & $10.58$ & $\mathbf{233.9}$ \\
Sparse TLS & Rule safety & $1.000$ & $0.990$ & $0.00$ & $1.33{\times}10^{-1}$ & $11.05$ & $330.6$ \\
Sparse TLS & Weak behavior & $0.962$ & $0.840$ & $0.00$ & $1.44{\times}10^{-2}$ & $7.86$ & $449.8$ \\
Sparse TLS & Behavior cloning & $0.931$ & $0.760$ & $0.00$ & $3.55{\times}10^{-3}$ & $6.26$ & $537.1$ \\
Sparse TLS & PPO & $0.997$ & $0.970$ & $0.00$ & $1.27{\times}10^{-2}$ & $7.87$ & $451.4$ \\
Sparse TLS & SAC & $1.000$ & $1.000$ & $0.00$ & $1.82{\times}10^{-2}$ & $10.72$ & $346.3$ \\
\midrule
Complex TLS & GDC fixed-H12 & $1.000$ & $1.000$ & $0.00$ & $1.59{\times}10^{-5}$ & $\mathbf{9.25}$ & $\mathbf{317.3}$ \\
Complex TLS & GDC multi-H & $\mathbf{1.000}$ & $\mathbf{1.000}$ & $0.00$ & $\mathbf{0.00}$ & $8.96$ & $322.9$ \\
Complex TLS & Rule safety & $0.989$ & $0.910$ & $0.00$ & $1.48{\times}10^{-1}$ & $9.00$ & $441.2$ \\
Complex TLS & Weak behavior & $0.841$ & $0.550$ & $0.00$ & $1.44{\times}10^{-2}$ & $5.65$ & $546.9$ \\
Complex TLS & Behavior cloning & $0.770$ & $0.510$ & $0.00$ & $3.90{\times}10^{-3}$ & $4.74$ & $587.7$ \\
Complex TLS & PPO & $0.965$ & $0.810$ & $0.00$ & $1.29{\times}10^{-2}$ & $8.36$ & $461.7$ \\
Complex TLS & SAC & $0.960$ & $0.790$ & $0.00$ & $2.65{\times}10^{-2}$ & $8.51$ & $436.0$ \\
\bottomrule
\end{tabular}
\caption{SUMO controller benchmark comparison. GDC and non-SAC rows are evaluated over
$100$ episodes; the available sparse-TLS SAC run contains $90$ episodes. Prog. is
normalized route progress, Arr. is arrival rate, Coll. is mean collision count, Speed is
mean speed in $\mathrm{m/s}$, and Steps is mean episode length.}
\label{tab:sumo_rl_benchmarks}
\end{table}

\subsection{Application II: Parameterized bilevel optimization}
\label{sec:app_bilevel}

\subsubsection{Problem family and known structural knowledge}

For each parameter vector $\vartheta$, consider a nonconvex parameterized bilevel
energy over $y,x\in\R^d$ \cite{colson2007overview,dempe2020bilevel}. The lower-level
merit is a quartic-well objective
\begin{equation}
  f_{\mathrm{lower},\vartheta}(x;y)
  =
  \frac{1}{4d}\norm{x\odot x-\ell_\vartheta(y)}_2^2
  +\frac{\mu_x}{2d}\norm{x}_2^2
  +\frac{\lambda_x}{2(d-1)}\sum_{i=1}^{d-1}(x_{i+1}-x_i)^2,
  \qquad
  \ell_\vartheta(y)=a_\vartheta+M_\vartheta y ,
  \label{eq:bilevel_lower_nonconvex}
\end{equation}
where $M_\vartheta=M_0+\sum_{r=1}^R s_r u_r v_r^\top$ is a low-rank perturbation
of a fixed banded coupling. The upper objective is also nonconvex:
\begin{align}
  F_{\mathrm{upper},\vartheta}(y,x)
  &=
  \frac{\alpha_y}{d}\norm{y-g_y}_2^2
  +\frac{\alpha_x}{d}\norm{x-g_x}_2^2
  +\frac{\kappa_y}{d}\norm{y\odot y-r_y}_2^2
  +\frac{\kappa_x}{d}\norm{x\odot x-r_x}_2^2 \notag\\
  &\quad
  +\frac{\eta_{xy}}{d}\norm{x-y}_2^2
  +\frac{1}{d}x^\top C y
  +\frac{\lambda_y}{d-1}\sum_{i=1}^{d-1}(y_{i+1}-y_i)^2 .
  \label{eq:bilevel_upper_nonconvex}
\end{align}
The parameter vector $\vartheta$ contains the goals $(g_y,g_x)$, lower shifts
$a_\vartheta$, well radii $(r_y,r_x)$, low-rank scales $(s_r)_{r=1}^R$, and the ridge
parameter $\mu_x$. The quartic terms make both the lower and upper landscapes
nonconvex, with multiple attractive basins. Each instance defines the bilevel problem
\begin{equation}
  \min_{y\in[-2,2]^d}
  F_{\mathrm{upper},\vartheta}\bigl(y,x_\vartheta^\star(y)\bigr),
  \qquad
  x_\vartheta^\star(y)
  \in
  \argmin_{x\in[-2,2]^d}
  f_{\mathrm{lower},\vartheta}(x;y).
  \label{eq:bilevel_problem}
\end{equation}
The known structural knowledge consists of the level decomposition, analytic objectives,
gradients, box projection, step-size projection, and local lower-level update; it does
not include a globally certified lower response or upper optimum.

The action is a joint increment $u=(\Delta y,\Delta x)$. The explicit scaffold enforces
\begin{equation}
  y^+,x^+\in[-2,2]^d,\qquad
  \|\Delta y\|_2\leq \delta_y,\qquad
  \|\Delta x\|_2\leq \delta_x .
  \label{eq:bilevel_known_constraints}
\end{equation}

The trilevel scaling experiment uses the same nonconvex construction with state
$\omega=(z,y,x)$. The middle and lower objectives are quartic wells around parameterized targets
$m_\vartheta(z)$ and $\ell_\vartheta(y,z)$:
\begin{align}
  f_{\mathrm{middle},\vartheta}(y;z)
  &=
  \frac{1}{4d}\norm{y\odot y-m_\vartheta(z)}_2^2
  +\frac{\mu_y}{2d}\norm{y}_2^2
  +\frac{\lambda_y}{2(d-1)}\sum_{i=1}^{d-1}(y_{i+1}-y_i)^2, \notag\\
  f_{\mathrm{lower},\vartheta}(x;y,z)
  &=
  \frac{1}{4d}\norm{x\odot x-\ell_\vartheta(y,z)}_2^2
  +\frac{\mu_x}{2d}\norm{x}_2^2
  +\frac{\lambda_x}{2(d-1)}\sum_{i=1}^{d-1}(x_{i+1}-x_i)^2.
  \label{eq:trilevel_middle_lower_nonconvex}
\end{align}
The top objective adds quadratic goal terms, quartic well terms for $z,y,x$, pairwise
cross-level penalties, and indefinite bilinear couplings. These objectives define the
trilevel problem
\begin{equation}
\begin{aligned}
  \min_{z\in[-2,2]^d}\quad
  &F_{\mathrm{top},\vartheta}
  \bigl(z,y_\vartheta^\star(z),
  x_\vartheta^\star(z,y_\vartheta^\star(z))\bigr),\\
  y_\vartheta^\star(z)
  &\in
  \argmin_{y\in[-2,2]^d}
  f_{\mathrm{middle},\vartheta}(y;z),\\
  x_\vartheta^\star(z,y)
  &\in
  \argmin_{x\in[-2,2]^d}
  f_{\mathrm{lower},\vartheta}(x;y,z).
\end{aligned}
\label{eq:trilevel_problem}
\end{equation}
The top level depends on $(z,y,x)$, the middle level depends on $(y;z)$, and the lower
level depends on $(x;y,z)$. Box and step projections are applied separately to $z$, $y$,
and $x$.

\subsubsection{Progress model and online solver}

Weak solver traces are divided into segments of $H=8$ joint increments and conditioned on
$(\omega_t,u_{t-1},\vartheta)$. Segment labels use decrease in a fixed-weight scalar
progress score minus effort:
\begin{align}
  \Delta\rho_{\mathrm{bi}}(\tau_t,\vartheta)
  &=
  \Phi_{\vartheta}^{\mathrm{bi}}(\omega_t)
  -\Phi_{\vartheta}^{\mathrm{bi}}(\omega_{t+H})
  -\eta\sum_{k=0}^{H-1}\|u_{t+k}\|^2 .
  \label{eq:bilevel_certificate}
\end{align}
This scalar score is a training and selection certificate for candidate motion; it is not
a single-level relaxation used to define the bilevel or trilevel problem.
The training distribution is tilted by $\exp(\beta\Delta\rho_{\mathrm{bi}}(\tau,\vartheta))$. Online, GDC
generates joint segments, refines them using the learned score, projects their increments
into \,\eqref{eq:bilevel_known_constraints}, and compares them with a known-only local
update. Selection combines learned energy, predicted decrease in the scalar score, lower
merit, and a trust-region penalty. Only the first increment is applied; feasibility or
progress failure triggers the known-only fallback.

\subsubsection{Protocol and metrics}

Training uses $50{,}000$ parameterized traces, with goals, well locations, low-rank
couplings, offsets, and ridge parameters resampled per trace. We compare GDC with
known-only and nested local online solvers on held-out parameters. For evaluation only,
offline multistart search and SCIP branch-and-cut solves of the original nonconvex
bilevel and trilevel formulations provide reference values or incumbents; they are not
online runtime competitors. We report
\begin{equation}
  \operatorname{ngap}
  =
  \frac{\Phi_{\mathrm{GDC}}-\Phi_{\mathrm{ref}}}
       {\Phi_{\mathrm{init}}-\Phi_{\mathrm{ref}}}.
  \label{eq:bilevel_metrics}
\end{equation}
This experiment tests whether one learned progress field provides useful cross-level
directions on unseen nonconvex instances while the explicit scaffold maintains the known
box and step constraints.

\subsubsection{Numerical results}
\label{sec:sim_multilevel}

The multilevel evaluation treats GDC as an amortized online controller for parameterized
bilevel and trilevel nonconvex optimization families. The bilevel experiment uses the
family in Section~\ref{sec:app_bilevel}. The trilevel experiment extends the same test
from a state $\omega=(y,x)$ to $\omega=(z,y,x)$ with top, middle, and lower variables.
In both settings, each instance has its own parameter vector, and the controller receives
analytic objectives, gradients, local relaxation updates, and known feasibility
projection, but not a globally certified solution. The role of GDC is therefore
to learn useful joint motion across levels from weak solver traces, while the known
scaffold keeps the online updates inside the explicit box and step constraints.

For the tables and figures only, we evaluate the final iterates using fixed-weight
scalar criteria. These criteria are not the optimization problems solved by the online
methods, and they are not a relaxed single-level replacement for
\eqref{eq:bilevel_problem} or \eqref{eq:trilevel_problem}. For bilevel instances the
reported evaluation criterion is
\begin{equation}
  \Phi(y,x) =
  F_{\mathrm{upper}}(y,x)
  + \lambda_{\mathrm{lower}}\,f_{\mathrm{lower}}(x;y),
  \label{eq:bilevel_phi_eval}
\end{equation}
and for trilevel instances the reported evaluation criterion is
\begin{equation}
  \Phi(z,y,x) =
  F_{\mathrm{top}}(z,y,x)
  + \lambda_{\mathrm{middle}}\,f_{\mathrm{middle}}(y;z)
  + \lambda_{\mathrm{lower}}\,f_{\mathrm{lower}}(x;y,z).
  \label{eq:trilevel_phi_eval}
\end{equation}
Tables~\ref{tab:bilevel_phi_scaling} and~\ref{tab:trilevel_phi_scaling} report held-out
seed-$9001$ scaling results under this evaluation criterion. The normalized GDC gap is
$(\Phi_{\mathrm{GDC}}-\Phi_{\mathrm{ref}})/(\Phi_{\mathrm{init}}-\Phi_{\mathrm{ref}})$.
The reference value is an offline multistart search under the same evaluation criterion;
it is used only as an evaluation reference, not as a fair online runtime competitor.

\begin{table}[!htbp]
\centering
\small
\setlength{\tabcolsep}{4pt}
\begin{tabular}{rrrrrrr}
\toprule
$d$ & Ref. $\Phi$ & Known $\Phi$ & Nested $\Phi$ & GDC $\Phi$ & Norm. gap \% & GDC s \\
\midrule
$6$  & $0.122821439$ & $0.727128$ & $0.699750$ & $\mathbf{0.124122773}$ & $\mathbf{0.096}$ & $44.659$ \\
$8$  & $0.085741765$ & $0.620790$ & $0.601718$ & $\mathbf{0.088084970}$ & $\mathbf{0.194}$ & $57.622$ \\
$10$ & $0.118656182$ & $0.542130$ & $0.508923$ & $\mathbf{0.121497257}$ & $\mathbf{0.229}$ & $52.206$ \\
$12$ & $0.132315706$ & $1.005490$ & $0.914664$ & $\mathbf{0.134485621}$ & $\mathbf{0.179}$ & $83.117$ \\
\bottomrule
\end{tabular}
\caption{Bilevel evaluation-score scaling. GDC is the learned online controller; known
and nested are conventional online baselines; the reference column is an offline
evaluation reference under the same scalar criterion. Normalized gap is reported as
$100(\Phi_{\mathrm{GDC}}-\Phi_{\mathrm{ref}})/
(\Phi_{\mathrm{init}}-\Phi_{\mathrm{ref}})$; lower is better.}
\label{tab:bilevel_phi_scaling}
\end{table}

\begin{table}[!htbp]
\centering
\small
\setlength{\tabcolsep}{4pt}
\begin{tabular}{rrrrrrr}
\toprule
$d$ & Ref. $\Phi$ & Known $\Phi$ & Nested $\Phi$ & GDC $\Phi$ & Norm. gap \% & GDC s \\
\midrule
$4$  & $0.164844401$ & $0.572960$ & $0.658353$ & $\mathbf{0.168670984}$ & $\mathbf{0.345}$ & $79.898$ \\
$6$  & $0.178681018$ & $1.020626$ & $1.144793$ & $\mathbf{0.180060025}$ & $\mathbf{0.093}$ & $116.208$ \\
$8$  & $0.122788129$ & $0.951728$ & $1.072284$ & $\mathbf{0.131214128}$ & $\mathbf{0.640}$ & $138.865$ \\
$10$ & $0.178549407$ & $1.059038$ & $1.146033$ & $\mathbf{0.181326304}$ & $\mathbf{0.187}$ & $138.403$ \\
\bottomrule
\end{tabular}
\caption{Trilevel evaluation-score scaling. Each dimension reports the selected GDC
online evaluation with the lowest final evaluation score among available candidates.
Normalized gap is reported as a percentage; all four trilevel cases are below
$0.65\%$.}
\label{tab:trilevel_phi_scaling}
\end{table}

Table~\ref{tab:scip_phi_benchmark} reports the separate SCIP benchmark on the same
original nonconvex bilevel and trilevel formulations with a four-hour time limit per
case. SCIP is used here as an offline branch-and-cut benchmark. We therefore report its
incumbent evaluation score and optimality gap, and mark all time-limit cases as not globally
certified.

\begin{table}[!htbp]
\centering
\small
\setlength{\tabcolsep}{4pt}
\begin{tabular}{lllrrr}
\toprule
Problem & $d$ & Status & Gap & Incumbent $\Phi$ & Time (s) \\
\midrule
Bilevel  & $6$  & optimal   & $0.000000$ & $0.122821621$ & $22.6$ \\
Bilevel  & $8$  & timelimit & $0.000045$ & $0.085741916$ & $14401.7$ \\
Bilevel  & $10$ & timelimit & $0.000060$ & $0.118656277$ & $14402.9$ \\
Bilevel  & $12$ & timelimit & $0.000270$ & $0.132315706$ & $14400.0$ \\
Trilevel & $4$  & optimal   & $0.000000$ & $0.164844566$ & $327.8$ \\
Trilevel & $6$  & timelimit & $0.000065$ & $0.178681118$ & $14403.2$ \\
Trilevel & $8$  & timelimit & $0.001114$ & $0.122788129$ & $14400.0$ \\
Trilevel & $10$ & timelimit & $0.090036$ & $0.178549407$ & $14400.0$ \\
\bottomrule
\end{tabular}
\caption{SCIP branch-and-cut benchmark on the original nonconvex bilevel and trilevel
formulations with a four-hour time limit per case. Only rows with status ``optimal'' are
globally certified. Rows with status ``timelimit'' report the best incumbent found within
the limit.}
\label{tab:scip_phi_benchmark}
\end{table}

\begin{figure}[!htbp]
\centering
\begin{tikzpicture}
\begin{axis}[
  gdc axis,
  width=0.78\linewidth,
  height=0.36\linewidth,
  ybar=5pt,
  symbolic x coords={B6,B8,B10,B12,T4,T6,T8,T10},
  xtick=data,
  xticklabels={$B6$,$B8$,$B10$,$B12$,$T4$,$T6$,$T8$,$T10$},
  ymin=0,
  ymax=1.08,
  ylabel={wall time / four-hour budget},
  xlabel={problem instance},
  ytick={0,0.25,0.5,0.75,1},
  yticklabels={$0$,$25\%$,$50\%$,$75\%$,$4$h budget},
  enlarge x limits=0.08
]
\addplot+[fill=gdcblue!65, draw=gdcblue] coordinates
  {(B6,0.00310) (B8,0.00400) (B10,0.00363) (B12,0.00577)
   (T4,0.00555) (T6,0.00807) (T8,0.00964) (T10,0.00961)};
\addplot+[fill=gdcorange!32, draw=gdcorange!85] coordinates
  {(B6,0.00157) (B8,1.0) (B10,1.0) (B12,1.0)
   (T4,0.02276) (T6,1.0) (T8,1.0) (T10,1.0)};
\addplot+[black!55, dashed, sharp plot, mark=none, forget plot] coordinates {(B6,1.0) (T10,1.0)};
\node[font=\scriptsize\bfseries,text=gdcorange] at (axis cs:B8,1.04) {//};
\node[font=\scriptsize\bfseries,text=gdcorange] at (axis cs:B10,1.04) {//};
\node[font=\scriptsize\bfseries,text=gdcorange] at (axis cs:B12,1.04) {//};
\node[font=\scriptsize\bfseries,text=gdcorange] at (axis cs:T6,1.04) {//};
\node[font=\scriptsize\bfseries,text=gdcorange] at (axis cs:T8,1.04) {//};
\node[font=\scriptsize\bfseries,text=gdcorange] at (axis cs:T10,1.04) {//};
\node[font=\scriptsize,text=gdcblue,anchor=south,rotate=90] at (axis cs:B6,0.030) {$0.31\%$};
\node[font=\scriptsize,text=gdcblue,anchor=south,rotate=90] at (axis cs:B8,0.030) {$0.40\%$};
\node[font=\scriptsize,text=gdcblue,anchor=south,rotate=90] at (axis cs:B10,0.030) {$0.36\%$};
\node[font=\scriptsize,text=gdcblue,anchor=south,rotate=90] at (axis cs:B12,0.030) {$0.58\%$};
\node[font=\scriptsize,text=gdcblue,anchor=south,rotate=90] at (axis cs:T4,0.030) {$0.55\%$};
\node[font=\scriptsize,text=gdcblue,anchor=south,rotate=90] at (axis cs:T6,0.030) {$0.81\%$};
\node[font=\scriptsize,text=gdcblue,anchor=south,rotate=90] at (axis cs:T8,0.030) {$0.96\%$};
\node[font=\scriptsize,text=gdcblue,anchor=south,rotate=90] at (axis cs:T10,0.030) {$0.96\%$};
\draw[<->,line width=0.8pt,draw=black!58]
  (axis cs:B8,0.00435) -- node[midway,fill=white,inner sep=1.0pt,
  font=\scriptsize\bfseries,rotate=90] {$>249\times$} (axis cs:B8,0.98);
\draw[<->,line width=0.8pt,draw=black!58]
  (axis cs:B10,0.00395) -- node[midway,fill=white,inner sep=1.0pt,
  font=\scriptsize\bfseries,rotate=90] {$>275\times$} (axis cs:B10,0.98);
\draw[<->,line width=0.8pt,draw=black!58]
  (axis cs:B12,0.00625) -- node[midway,fill=white,inner sep=1.0pt,
  font=\scriptsize\bfseries,rotate=90] {$>173\times$} (axis cs:B12,0.98);
\node[font=\scriptsize\bfseries,text=black!58,anchor=west,rotate=90] at (axis cs:T4,0.050) {$4.1\times$};
\draw[<->,line width=0.8pt,draw=black!58]
  (axis cs:T6,0.00875) -- node[midway,fill=white,inner sep=1.0pt,
  font=\scriptsize\bfseries,rotate=90] {$>123\times$} (axis cs:T6,0.98);
\draw[<->,line width=0.8pt,draw=black!58]
  (axis cs:T8,0.0105) -- node[midway,fill=white,inner sep=1.0pt,
  font=\scriptsize\bfseries,rotate=90] {$>103\times$} (axis cs:T8,0.98);
\draw[<->,line width=0.8pt,draw=black!58]
  (axis cs:T10,0.0105) -- node[midway,fill=white,inner sep=1.0pt,
  font=\scriptsize\bfseries,rotate=90] {$>104\times$} (axis cs:T10,0.98);
\end{axis}
\end{tikzpicture}
\caption{Online GDC wall time versus SCIP branch-and-cut wall time on the same held-out
instances, normalized by the four-hour SCIP budget on a linear scale. Blue GDC bars are
intentionally near the baseline; blue labels report the fraction of the four-hour budget
used online. Orange SCIP bars at the budget line are time-limit cases, which provide
incumbents but are not globally certified. Vertical arrows report the SCIP/GDC wall-time
factor; capped SCIP cases are lower bounds.}
\label{fig:scip_time_cap}
\end{figure}

\begin{figure}[!htbp]
\centering
\begin{tikzpicture}
\begin{groupplot}[
  gdc axis,
  group style={group size=2 by 1, horizontal sep=0.95cm},
  width=0.49\linewidth,
  height=0.38\linewidth,
  ymode=log,
  xlabel={online iteration},
  ylabel={evaluation score},
  ymin=0.075,
  ymax=1.85,
  legend pos=north east
]
\nextgroupplot[title={Bilevel}, xmin=0, xmax=260]
\addplot+[color=gdcblue, dashed, mark=none, forget plot] coordinates {(0,0.122821621) (260,0.122821621)};
\addplot+[color=gdcblue, mark=*, line width=1.25pt] coordinates
{(0,1.474059) (11,1.195941) (23,1.052673) (35,0.746912) (47,0.320075)
 (59,0.214555) (71,0.179450) (83,0.160597) (95,0.148461) (107,0.142497)
 (119,0.131686) (131,0.124919) (143,0.124141) (155,0.124123) (167,0.124123)
 (175,0.124123)};
\addlegendentry{$d=6$}
\addplot+[color=gdcgreen, dashed, mark=none, forget plot] coordinates {(0,0.085741916) (260,0.085741916)};
\addplot+[color=gdcgreen, mark=square*, line width=1.25pt] coordinates
{(0,1.292742) (13,1.072039) (27,0.956160) (41,0.363539) (55,0.208154)
 (69,0.153005) (83,0.134475) (97,0.124683) (111,0.099632) (125,0.091220)
 (139,0.089278) (153,0.088465) (167,0.088194) (181,0.088104) (195,0.088085)
 (201,0.088085)};
\addlegendentry{$d=8$}
\addplot+[color=gdcorange, dashed, mark=none, forget plot] coordinates {(0,0.118656277) (260,0.118656277)};
\addplot+[color=gdcorange, mark=triangle*, line width=1.25pt] coordinates
{(0,1.357792) (12,1.178170) (25,1.055354) (38,0.472628) (51,0.188942)
 (64,0.157795) (77,0.142447) (90,0.135712) (103,0.131149) (116,0.126080)
 (129,0.122818) (142,0.121999) (155,0.121555) (168,0.121497) (181,0.121497)
 (185,0.121497)};
\addlegendentry{$d=10$}
\addplot+[color=gdcgray, dashed, mark=none, forget plot] coordinates {(0,0.132315706) (260,0.132315706)};
\addplot+[color=gdcgray, mark=diamond*, line width=1.25pt] coordinates
{(0,1.347819) (17,1.233074) (35,0.816394) (53,0.485806) (71,0.320795)
 (89,0.210838) (107,0.160358) (125,0.144479) (143,0.143873) (161,0.141327)
 (179,0.135801) (197,0.135247) (215,0.135247) (233,0.134486) (251,0.134486)
 (259,0.134486)};
\addlegendentry{$d=12$}

\nextgroupplot[title={Trilevel}, xmin=0, xmax=252, ylabel={}, yticklabels={}]
\addplot+[color=gdcblue, dashed, mark=none, forget plot] coordinates {(0,0.164844566) (252,0.164844566)};
\addplot+[color=gdcblue, mark=*, line width=1.25pt] coordinates
{(0,1.272554) (11,0.868102) (23,0.727338) (35,0.375338) (47,0.263750)
 (59,0.207837) (71,0.189031) (83,0.177492) (95,0.172806) (107,0.170315)
 (119,0.170012) (131,0.169127) (143,0.168682) (155,0.168671) (167,0.168671)};
\addlegendentry{$d=4$}
\addplot+[color=gdcgreen, dashed, mark=none, forget plot] coordinates {(0,0.178681118) (252,0.178681118)};
\addplot+[color=gdcgreen, mark=square*, line width=1.25pt] coordinates
{(0,1.654877) (15,1.335476) (31,1.191823) (47,0.748728) (63,0.460604)
 (79,0.290241) (95,0.223798) (111,0.199105) (127,0.192239) (143,0.190656)
 (159,0.184248) (175,0.181177) (191,0.180230) (207,0.180060) (223,0.180060)
 (229,0.180060)};
\addlegendentry{$d=6$}
\addplot+[color=gdcorange, dashed, mark=none, forget plot] coordinates {(0,0.122788129) (252,0.122788129)};
\addplot+[color=gdcorange, mark=triangle*, line width=1.25pt] coordinates
{(0,1.440241) (15,1.235314) (31,1.136094) (47,0.533969) (63,0.323757)
 (79,0.246822) (95,0.180801) (111,0.155977) (127,0.147457) (143,0.145244)
 (159,0.140132) (175,0.133627) (191,0.131600) (207,0.131291) (223,0.131214)
 (236,0.131214)};
\addlegendentry{$d=8$}
\addplot+[color=gdcgray, dashed, mark=none, forget plot] coordinates {(0,0.178549407) (252,0.178549407)};
\addplot+[color=gdcgray, mark=diamond*, line width=1.25pt] coordinates
{(0,1.665497) (17,1.460077) (35,1.049121) (53,0.623336) (71,0.366756)
 (89,0.293302) (107,0.250577) (125,0.222482) (143,0.204296) (161,0.187691)
 (179,0.183624) (197,0.182244) (215,0.181536) (233,0.181326) (251,0.181326)};
\addlegendentry{$d=10$}
\end{groupplot}
\end{tikzpicture}
\caption{Online convergence of GDC against SCIP branch-and-cut incumbents. Solid curves
show the GDC evaluation score over online iterations; dashed horizontal lines of the same
color show the corresponding SCIP incumbent for that dimension. SCIP is an offline
branch-and-cut reference, not an online competitor.}
\label{fig:gdc_online_curve}
\end{figure}

Across both multilevel families, GDC reaches evaluation scores very close to the offline
reference search and SCIP incumbents. The bilevel normalized gaps are all below
$0.23\%$, and the trilevel normalized gaps are all below $0.65\%$, the largest being
$0.64\%$ at $d=8$. The known and nested online baselines are much faster per
rollout, but they are trapped in substantially worse upper or top-level basins despite
satisfying the explicit feasibility scaffold. This is the intended role separation:
known structure maintains feasibility and local relaxation consistency, while GDC
supplies the missing cross-level progress direction.
The SCIP branch-and-cut results give complementary evidence that certification for the
original nonconvex bilevel and trilevel formulations becomes expensive as dimension
grows; except for the smallest bilevel and trilevel cases, SCIP does not certify global
optimality within the four-hour limit. On the time-limit cases, GDC completes online
evaluation in $52.2$--$138.9$ seconds, using only $0.36\%$--$0.96\%$ of the same
four-hour budget; the resulting SCIP/GDC wall-time factors are therefore conservative
lower bounds, ranging from more than $103\times$ to more than $275\times$.

Figure~\ref{fig:scip_time_cap} compares online GDC wall time with SCIP branch-and-cut
wall time after normalizing both by the four-hour SCIP budget; time-limit rows are
treated as incumbent-only offline references. Figure~\ref{fig:gdc_online_curve} shows the GDC online trajectories
for all reported bilevel and trilevel dimensions approaching their corresponding SCIP
incumbents.

\section{Discussion and Conclusion}
\label{sec:discussion}

GDC is intended for partial-physics settings, not for settings where either complete
model-based optimization or pure behavioral cloning is already sufficient. Its central
design choice is to keep the known scaffold explicit and use feasible-progress data only
for the missing directional structure. This distinguishes GDC from diffusion policies,
where sampling is the control law, and from standard safe-learning pipelines, where hard
constraints are often introduced as costs, shields, or post-hoc corrections.
The present results support two complementary claims. In the multilevel optimization
simulation, the known-only solver preserves the explicit feasibility scaffold but stalls
in poor nonconvex basins, while GDC learns a cross-level progress direction that reaches
near-reference $\Phi$ values. In SUMO, the learned score actively selects almost every action
while the explicit known-constraint layer maintains collision-free route completion and
improves arrival relative to rule, imitation, and RL baselines. These settings use different physics and different
meanings of progress, but the same separation of roles: known constraints determine what
may be executed, and the learned progress path determines which feasible direction is
useful.

\appendix

\section{Additional SUMO Diagnostics}
\label{sec:appendix_sumo_diagnostics}

Figure~\ref{fig:sumo_scalability_tls_paths} and
Figure~\ref{fig:sumo_scalability_long_paths} provide additional diagnostics for the
larger $60\times60$ complex TLS map. These figures are separate scenarios: the first
uses the recorded scalability routes, while the second uses newly recorded long-route
episodes. Table~\ref{tab:sumo_appendix_diagnostics} attaches each SUMO diagnostic
figure to the numerical conclusion it supports, including the two held-out transfer
figures in Section~\ref{sec:appendix_sumo_transfer}. Traffic-light constraints are
rendered as signal markers. Star markers are reserved for slow or other non-TLS
roadblocks; none are present in this TLS-only scalability map.
\begin{figure}[!htbp]
\centering
\includegraphics[width=0.92\linewidth]{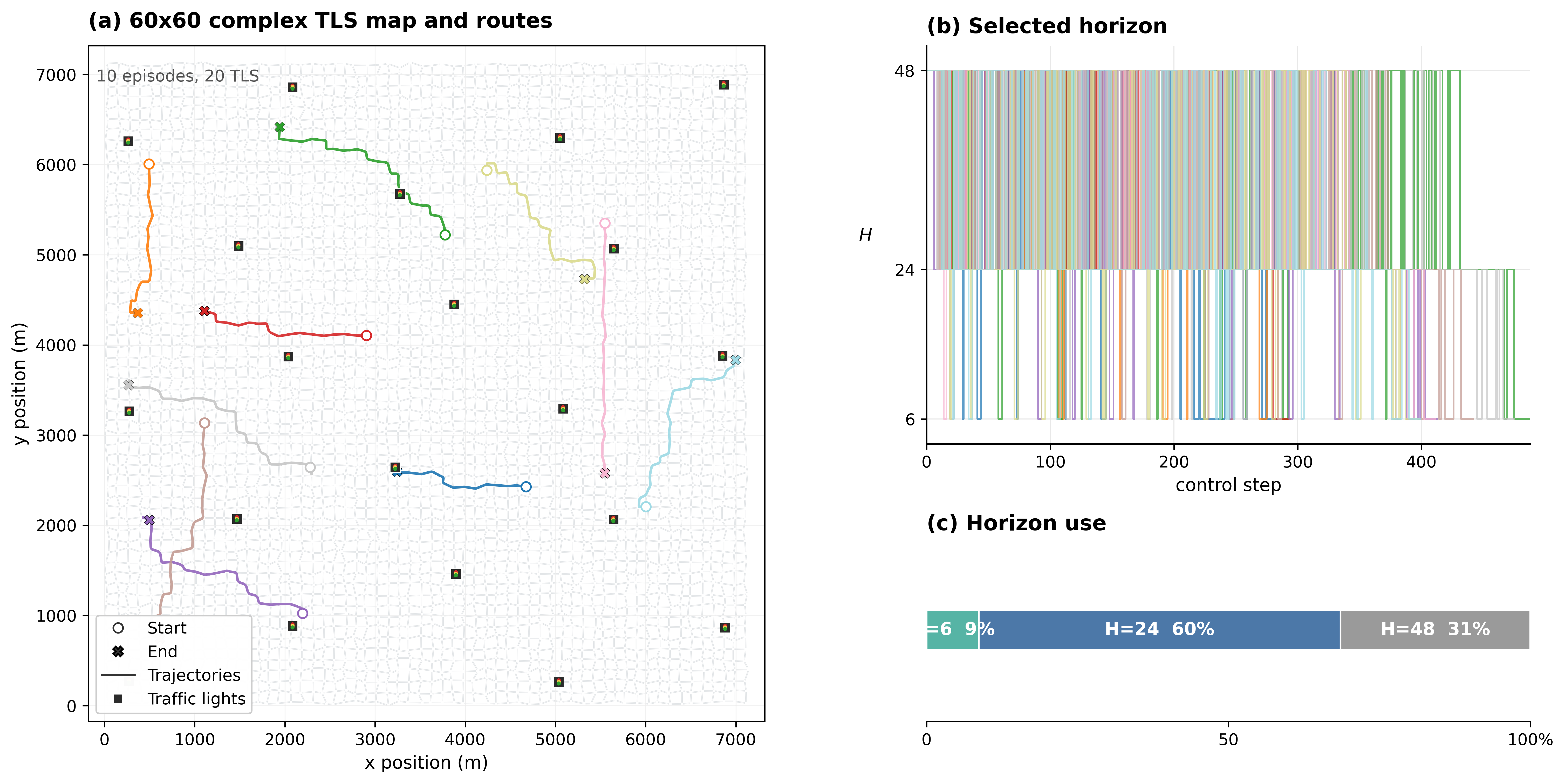}
\caption{Complex TLS scalability diagnostic on the $60\times60$ map. The figure shows
ten recorded ego paths, the selected planning horizon at each control step, and the
aggregate horizon-use distribution for this scenario.}
\label{fig:sumo_scalability_tls_paths}
\end{figure}
\subsection{SUMO Transfer Diagnostics}
\label{sec:appendix_sumo_transfer}

We also include representative qualitative transfer diagnostics for the two held-out
SUMO transfer settings. \emph{Transfer I: blocked-map transfer} keeps the base
perturbed road topology fixed and changes the deployment map through held-out road
closures and slow/bottleneck edges. This tests whether GDC can reuse the learned
progress direction when the feasible scaffold is locally cut or slowed, rather than
only replaying routes from the training map. \emph{Transfer II: graph-and-disturbance
transfer} changes the generated graph itself, including grid dimensions, lane count,
edge length, speed, traffic-light spacing, and geometric perturbation, while also
introducing new closures and bottlenecks. This harder setting tests whether the same
progress model remains useful when both the scaffold geometry and the local disturbances
move out of the original graph. In both settings, the online controller receives the
current known map constraints as $\Cknown$; transfer therefore concerns the learned
direction field on top of a new scaffold, not blind map inference. We show
representative maps rather than the full set of ten visualized maps per setting.

\begin{figure}[!htbp]
\centering
\includegraphics[width=0.92\linewidth]{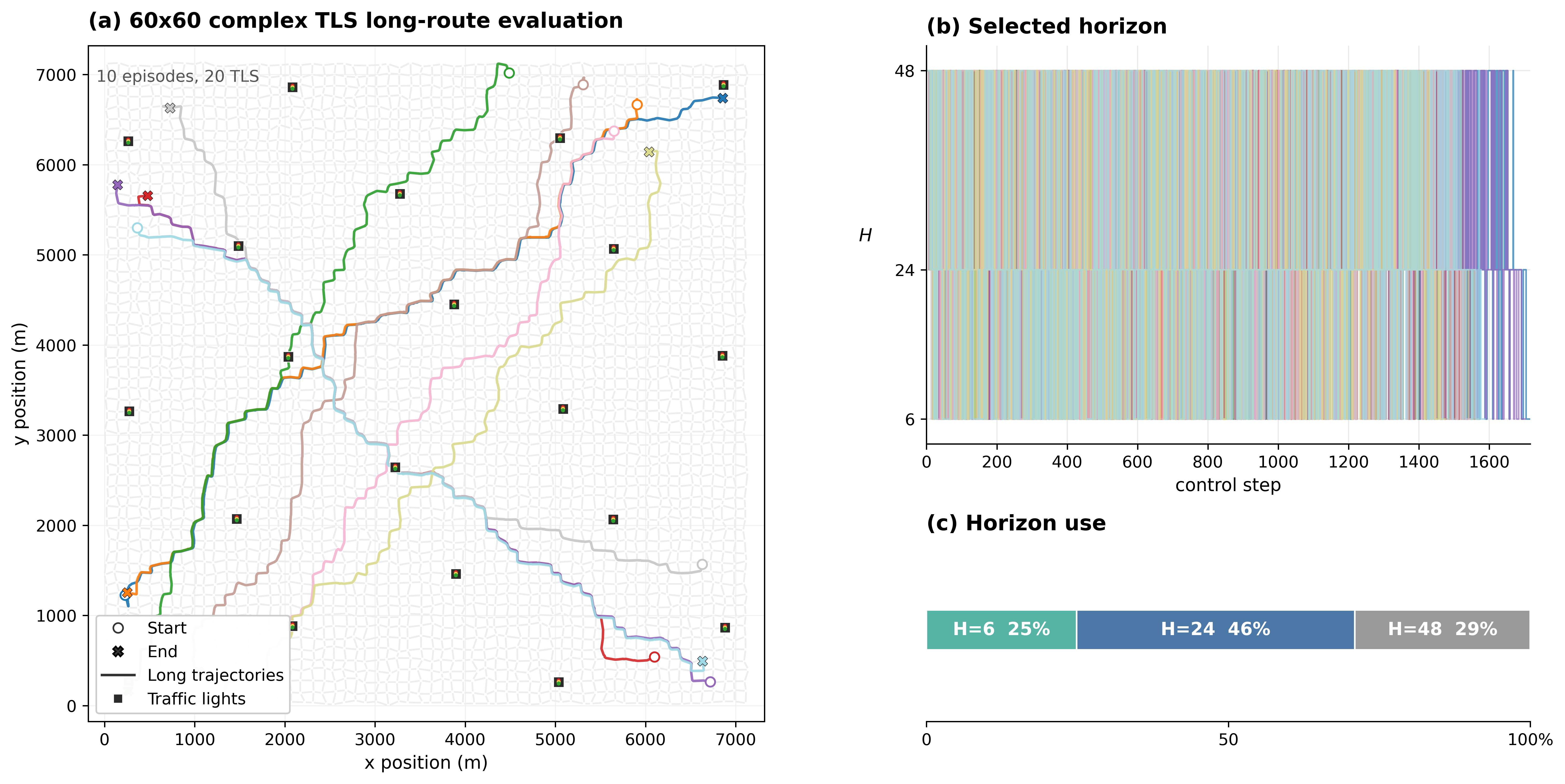}
\caption{Long-route complex TLS scalability diagnostic on the $60\times60$ map. The
ten trajectories use $7.0$--$8.1\,\mathrm{km}$ routes and complete in
$1{,}475$--$1{,}717$ control steps; all recorded episodes reach the route goal with
zero collisions and no timeout.}
\label{fig:sumo_scalability_long_paths}
\end{figure}
\begin{figure}[!htbp]
\centering
\begin{tabular}{@{}cc@{}}
\includegraphics[width=0.47\linewidth]{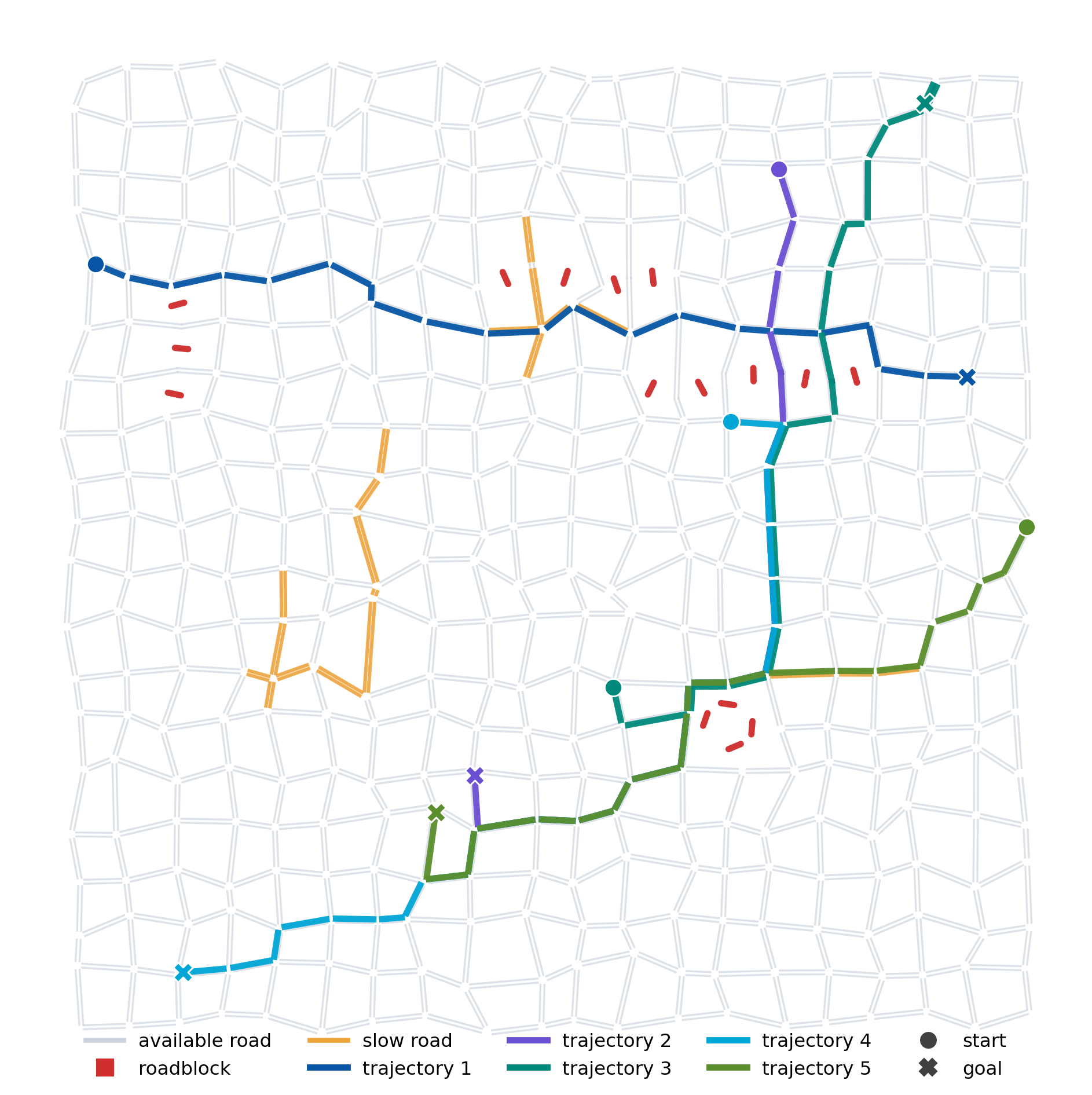} &
\includegraphics[width=0.47\linewidth]{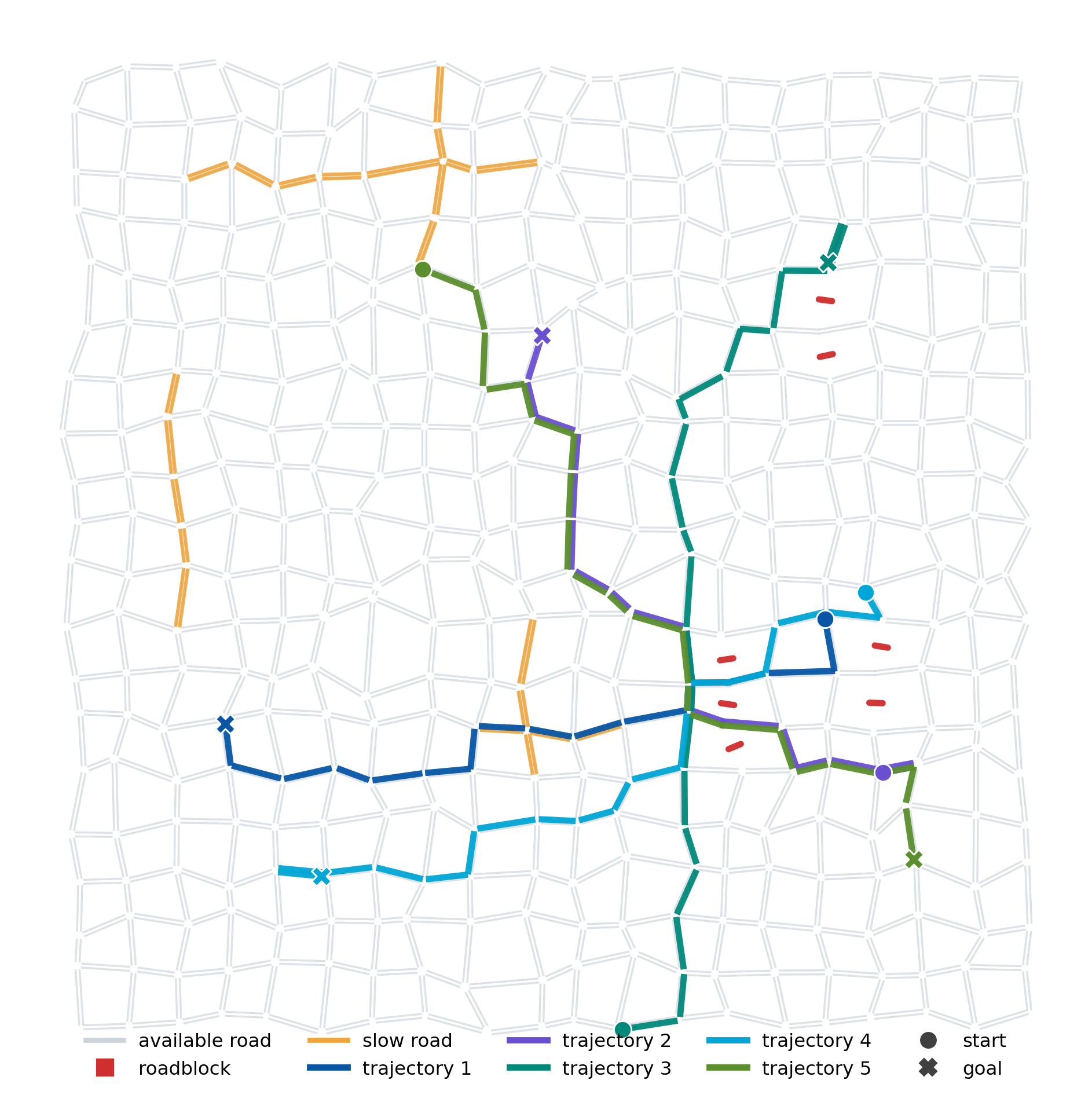} \\
\includegraphics[width=0.47\linewidth]{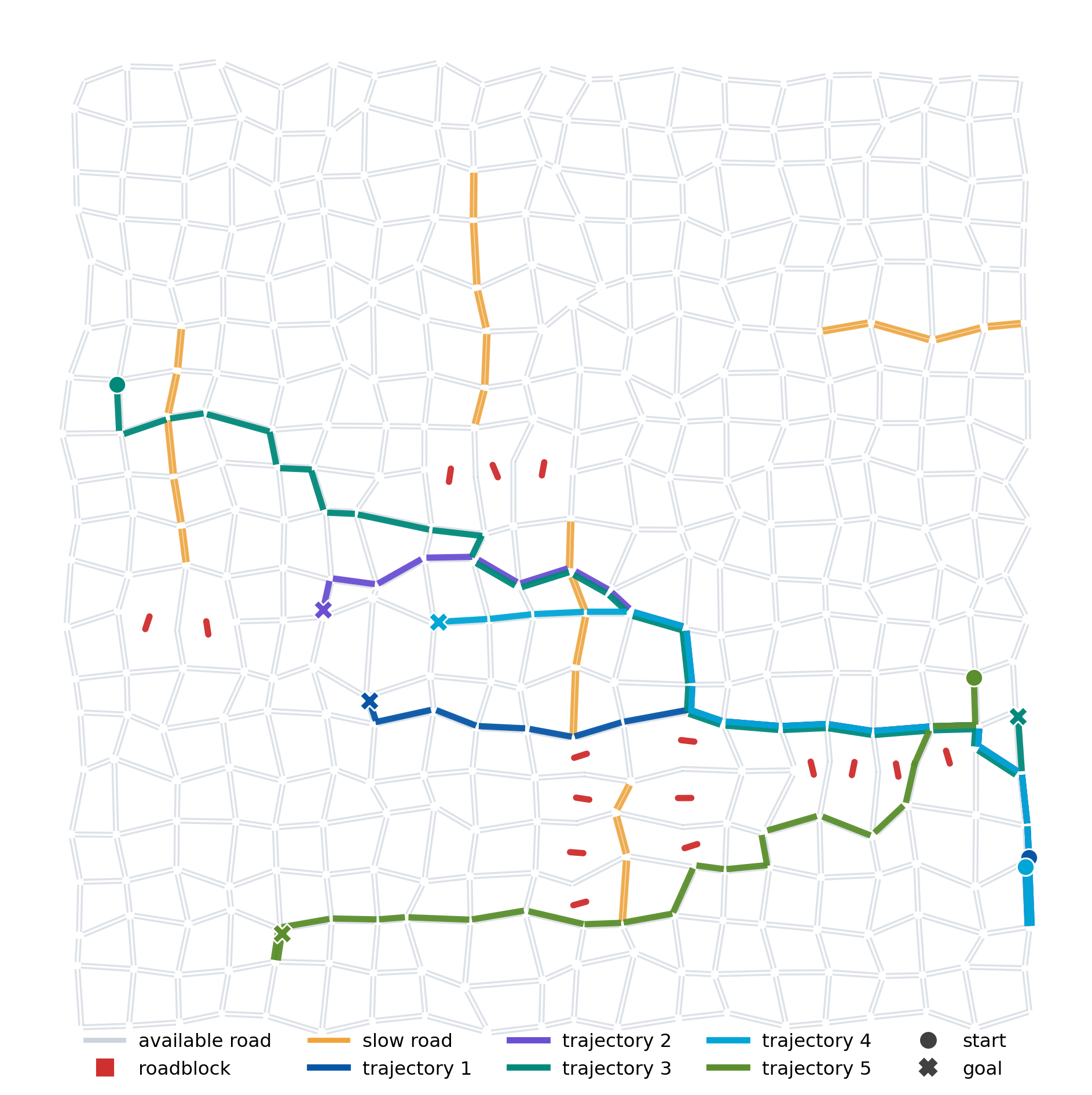} &
\includegraphics[width=0.47\linewidth]{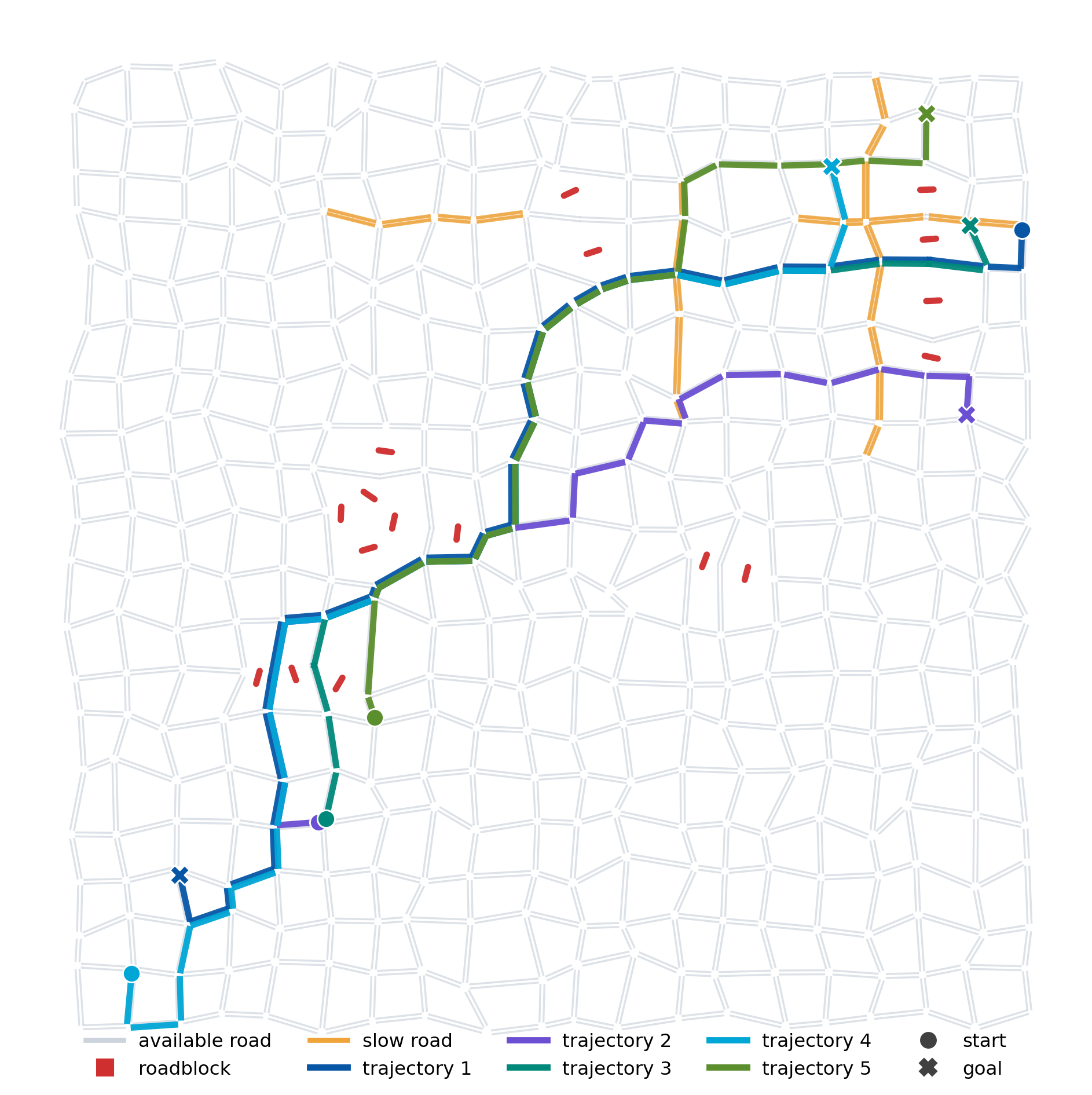}
\end{tabular}
\caption{Representative blocked-map transfer diagnostics. Four held-out maps are shown;
each panel overlays five long routes. Red marks closures, orange marks slow/bottleneck
roads, and the $1.7$--$2.4\,\mathrm{km}$ routes make transfer-time detours inspectable.}
\label{fig:sumo_transfer_blocked_representatives}
\end{figure}

\noindent\emph{Blocked-map transfer.} This setting holds the base perturbed topology
fixed and changes only held-out closures and slow roads. On the corresponding
$100$-map benchmark, GDC keeps arrival/progress at $1.000/1.000$, has zero known
violations, and uses nonfallback learned control on $99.7\%$ of decisions; the plotted
detours therefore summarize the same transfer behavior rather than only qualitative
overlays.

\begin{figure}[!htbp]
\centering
\includegraphics[width=0.96\linewidth]{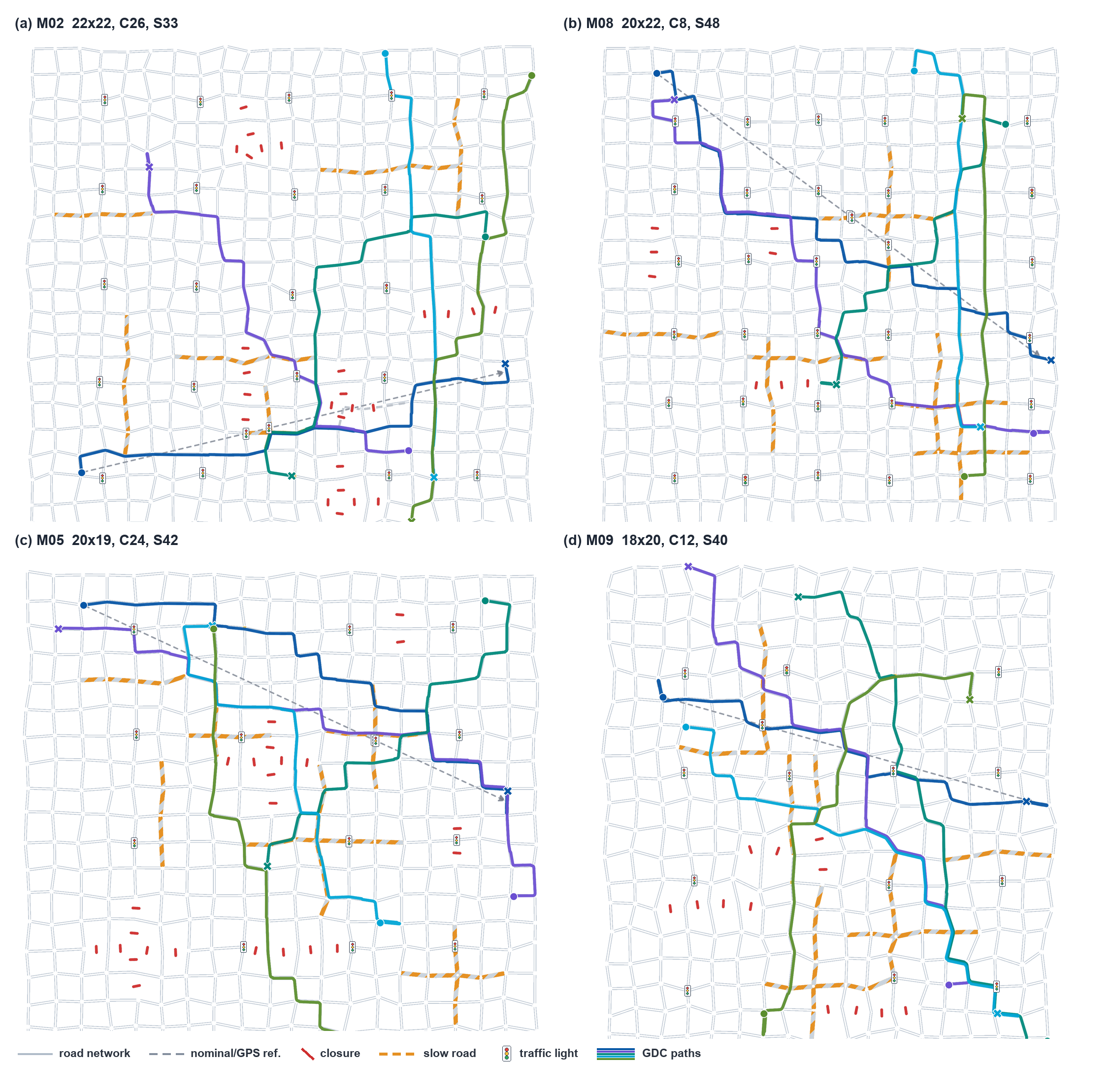}
\caption{Held-out graph-and-disturbance transfer examples. Panels use unseen graph
parameters and disturbances; red ticks mark closures, dashed orange marks slow roads,
icons mark traffic lights, and colored curves are successful long GDC routes.}
\label{fig:sumo_transfer_map_representatives}
\end{figure}

\noindent\emph{Graph-and-disturbance transfer.} This harder setting changes both the
generated graph and its local disturbances. The shown maps span $18\times19$ to
$22\times22$ grids, $8$--$26$ closure pairs, and $33$--$48$ bottleneck pairs; the
$20$ shown closed-loop traces still reach their route goals with zero collisions.

\begin{samepage}
\begin{center}
\phantomsection
\refstepcounter{table}
\label{tab:sumo_appendix_diagnostics}
\begingroup
\footnotesize
\setlength{\tabcolsep}{2.5pt}
\renewcommand{\arraystretch}{0.96}
\begin{tabular}{@{}>{\raggedright\arraybackslash}p{0.15\linewidth}
>{\raggedright\arraybackslash}p{0.16\linewidth}ccc
>{\raggedright\arraybackslash}p{0.23\linewidth}
>{\raggedright\arraybackslash}p{0.19\linewidth}@{}}
\toprule
\textbf{Diagnostic} & \textbf{Routes / maps} & \textbf{Arr./Prog.} &
\textbf{Coll.} & \textbf{Steps} & \textbf{Stressor / diagnostic signal} &
\textbf{Takeaway} \\
\midrule
Main complex TLS,
Fig.~\ref{fig:sumo_multi_h_paths} &
$100$ eps.; $1.15\,[0.85,1.44]\,\mathrm{km}$. &
$1.000/1.000$ & $0$ & $322.9$ &
$H{=}24$ on $95.3\%$ of decisions; mean speed
$8.96\,\mathrm{m/s}$. &
Long-horizon progress is used while the known scaffold preserves feasibility. \\
\addlinespace[0.2pt]
$60\times60$ TLS,
Fig.~\ref{fig:sumo_scalability_tls_paths} &
$10$ routes; $1.72\,[1.20,2.19]\,\mathrm{km}$. &
$1.000/1.000$ & $0$ & $361.1$ &
$H{=}24/48$ on $91.4\%$ of decisions; mean speed
$12.41\,\mathrm{m/s}$. &
Same controller scales to the larger traffic-light graph. \\
\addlinespace[0.2pt]
$60\times60$ long TLS,
Fig.~\ref{fig:sumo_scalability_long_paths} &
$10$ routes; $7.43\,[7.03,8.09]\,\mathrm{km}$. &
$1.000/1.000$ & $0$ & $1578.0$ &
$7.0$--$8.1\,\mathrm{km}$ routes; $H{=}24/48$ on $75.2\%$ of decisions. &
Tests kilometer-scale closed-loop rollout, beyond local completion. \\
\addlinespace[0.2pt]
Transfer I: blocked map,
Fig.~\ref{fig:sumo_transfer_blocked_representatives} &
$4$ shown maps, $20$ routes; $1.97\,[1.49,2.78]\,\mathrm{km}$. &
$1.000/1.000$ & $0$ & $252.0$ &
Fixed topology; held-out closures and slow roads; $100$-map benchmark:
zero known violation, $99.7\%$ nonfallback. &
Detours are backed by transfer metrics, not only overlays. \\
\addlinespace[0.2pt]
Transfer II: graph + disturbance,
Fig.~\ref{fig:sumo_transfer_map_representatives} &
$4$ shown maps, $20$ traces; $2.32\,[1.90,3.06]\,\mathrm{km}$. &
$1.000/1.000$ & $0$ & $579.9$ &
Unseen graph parameters plus disturbances: $18\times19$--$22\times22$ grids,
$8$--$26$ closure pairs, $33$--$48$ bottleneck pairs. &
Tests joint graph and disturbance transfer. \\
\bottomrule
\end{tabular}
\par\vspace{0.35em}
\begin{minipage}{0.96\linewidth}
\small\textbf{Table~\thetable:} Summary of the SUMO diagnostic figures. Bracketed route lengths report
minimum and maximum values. Transfer I fixes the base topology and changes
disturbances, while Transfer II changes graph parameters and disturbances together.
The transfer rows summarize the shown routes and the corresponding held-out outcomes.
\end{minipage}
\endgroup
\end{center}
\end{samepage}
 
\FloatBarrier

\section{Additional Theoretical Analysis}
\label{sec:theory}

The following results expand the guarantees stated in
Section~\ref{sec:formal-guarantees}. They analyze the fixed objects learned offline and
used online: the empirical manifold, the progress-tilted density, its score field, and the
induced latent control flow. These results are not additional offline training steps.

\subsection{Notation and standing assumptions for the proofs}
\label{sec:proof_notation}

For the proof sections, write
$c=(o,g,\Cknown,H^\star)$ for the complete conditioning context, including the selected
horizon when it is available. All densities on $\Mhat$ are densities with respect to the
Riemannian volume measure $\dd\volG$. Euclidean gradients in a latent chart are denoted
by $\nabla_z$, while $\grad_G$ denotes the Riemannian gradient. The corresponding
Riemannian norm and dual norm are
$\|v\|_G^2=v^\top G(z)v$ and $\|\alpha\|_{G^{-1}}^2=\alpha^\top G(z)^{-1}\alpha$.
For two probability measures $\mu,\nu$ on $\Mhat$, $\|\mu-\nu\|_{\TV}$ denotes total
variation distance with the convention
$\|\mu-\nu\|_{\TV}=\sup_A|\mu(A)-\nu(A)|=\frac12\|\mu-\nu\|_1$. The logarithmic score is the chart gradient
$s_\theta=\nabla_z\log p_\theta^Z$; the associated Riemannian score vector is
$G^{-1}s_\theta=\grad_G\log p_\theta^Z$.

\begin{assumption}[Standing regularity]
\label{ass:standing_theory}
The following conditions are used in the appendix whenever the corresponding object
appears.
\begin{enumerate}[label=(\alph*)]
  \item $\Mhat$ is a connected $C^2$ finite-dimensional manifold equipped with a $C^1$
        positive-definite metric $G$. The metric is uniformly elliptic on the
        data-supported region considered in the theorem. Either $\Mhat$ has no boundary,
        or the boundary condition is reflecting/no-flux for all gradient-flow and
        diffusion arguments.
  \item The potentials
        $\Phi_{\mathrm{known}}$, $\Phi_{\mathrm{data}}$, $\Phi_{\mathrm{task}}$, and
        their sums are $C^1$ on the relevant region and bounded below; in results using
        Hessians or strong convexity they are $C^2$ on a geodesically convex
        neighborhood that does not intersect the cut locus of the minimizer.
  \item The conditional latent path densities $p_\theta^Z(\cdot\mid c)$ and their ideal targets
        are strictly positive and $C^1$ on the data-supported region. Score errors are
        measured in the dual Riemannian norm unless an explicit Euclidean constant is
        stated.
  \item The decoder $D_\psi(\cdot,c)$ is $K_{D_\psi}$-Lipschitz on the trusted latent
        region, and its reconstruction error for the first decoded action is bounded by
        $\delta_{\mathrm{dec}}$.
  \item The known-feasible execution set
        $\mathcal U_{\mathrm{known}}(o_t,u_{t-1})$ is closed and nonempty whenever the
        projection layer is invoked. If no learned candidate is accepted, the fallback
        action is assumed to belong to this set.
  \item Progress certificates are bounded on every selected finite horizon:
        $|\Delta\rho(\tau,g)|\le R_\rho$. When clipped certificates are used, $R_\rho=c$.
\end{enumerate}
\end{assumption}

\begin{remark}[Scope of the guarantees]
The proofs are local to the data-supported known-feasible region where
$H^\star\neq\varnothing$. Outside this region the algorithm may still execute a
known-feasible fallback, but the progress-value and score-direction claims are not
invoked.
\end{remark}

\begin{remark}[Frozen context]
The stationary-distribution, free-energy, and Wasserstein statements, and every
$\limsup_{t\to\infty}$ bound, are stated for a fixed conditioning context
$c=(o,g,\Cknown,H^\star)$. The deployed controller re-evaluates $H_t^\star$ and $o_t$ at
each step, so these results describe the frozen-context limit that the online iteration
tracks between context changes, not a single time-homogeneous process.
\end{remark}

\subsection{Energy-based interpretation and implicit value function}
\label{sec:energy}

\subsubsection{Data energy function}

The marginal likelihood $p_\theta^Z(z \mid o, g, \Cknown,H^\star)$ induced by the probabilistic
path on $\Mhat$ defines an energy function over the learned empirical manifold.

\begin{definition}[Data energy]
  The \emph{data energy function} on $\Mhat$ is
  \begin{equation}
    \Phi_{\mathrm{data}}(z, o, g, \Cknown,H^\star)
    = -\log p_\theta^Z(z \mid o, g, \Cknown,H^\star).
    \label{eq:data_energy}
  \end{equation}
\end{definition}

Regions of low energy correspond to latent behaviors that are highly consistent with
feasible trajectory demonstrations and their progress certificates under observation $o$,
goal $g$, and the known constraint context.

\subsubsection{Score function on the empirical manifold}

\begin{definition}[Score function]
  The \emph{score function} of the learned distribution on $\Mhat$ is
  \begin{equation}
    s_\theta(z, o, g, \Cknown,H^\star)
    = \nabla_z \log p_\theta^Z(z \mid o, g, \Cknown,H^\star)
    = -\nabla_z \Phi_{\mathrm{data}}(z, o, g, \Cknown,H^\star).
    \label{eq:score}
  \end{equation}
\end{definition}

With a clean-behavior predictor, the score at noise level $\ell$ is induced by the
predicted clean latent:
\begin{equation}
  s_{\theta,\ell}^{\mathrm{diff}}(z_\ell, o, g, \Cknown,H^\star)
  \approx
  \frac{\sqrt{\bar\alpha_\ell}\,F_\theta(z_\ell,o,g,\Cknown,H^\star,\ell)-z_\ell}
       {1-\bar\alpha_\ell}.
  \label{eq:score_from_clean}
\end{equation}

\subsubsection{Implicit progress value theorem}

This is the central theoretical contribution connecting probabilistic paths to control.

The fundamental problem of directional ambiguity is that local sensing defines the
\emph{feasible set} but not the \emph{direction}. At an intersection, the camera
tells you that left, right, and straight are all traversable---but not which
leads to the goal. A local controller $u_t = f(o_t)$ has access only to feasibility,
not to directional intent.

GDC resolves this by replacing the local policy with a goal-conditioned latent
trajectory distribution
$\tau \sim p_\theta^\tau(\tau \mid z, g, \Cknown,H^\star)$, whose latent code $z$ is
selected and refined online.
The key observation is that this distribution is trained on known-feasible trajectory
segments with signed progress certificates. The objective upweights segments that improve
a goal-conditioned progress certificate without requiring every short segment to be
improving. Therefore, the distribution itself encodes directional information as a
statistical bias: at the intersection, right-turning segments receive more mass when they
increase expected progress under similar contexts, while necessary waiting or
repositioning behaviors need not be excluded.

This statistical bias is what the score function extracts:
$s_\theta(z) = \nabla_z \log p_\theta^Z(z \mid o, g, \Cknown,H^\star)$ points toward
latent regions whose decoded segments have higher probability under the progress-tilted
conditional distribution.
The known-constraint potential $\nabla\Phi_{\mathrm{known}}(z)$ removes options that
violate modeled physics; it does not choose among the remaining feasible options or
recover unmodeled physics. The score does.

We now formalize this intuition.

\begin{definition}[Data-induced progress value]
  \label{def:implicit_value}
  The \emph{data-induced progress value} at latent point $z \in \Mhat$ is
  \begin{equation}
    V_{\mathrm{prog}}(z,g) =
    -\log p_\theta^\tau(\text{progress} \mid z,g,\Cknown,H^\star),
    \label{eq:implicit_value}
  \end{equation}
  where
  \begin{equation}
    p_\theta^\tau(\text{progress} \mid z,g,\Cknown,H^\star)
    =
    \int p_\theta^\tau(\tau \mid z,g,\Cknown,H^\star)\,
    \mathbb{I}[\Delta \rho(\tau,g) > 0]\,
    \mathbb{I}[\tau \in \Cknown]\,\dd\tau
  \end{equation}
  is the probability of generating a known-feasible trajectory segment that improves
  the progress certificate $\rho$.
\end{definition}

\begin{theorem}[Score as progress-value gradient]
  \label{thm:score_value}
  Fix a context $c_0=(o,g,\Cknown,H^\star)$ satisfying
  Assumption~\ref{ass:standing_theory}, and write $H:=H^\star$. Let $\Gamma_H(z,c_0)$ be
  the set of known-feasible horizon-$H$ continuations starting from latent point $z$, with
  reference measure $\nu_z$; a continuation $\tau$ carries the latent rollout $(z_j)_{j<H}$
  used to evaluate $C_H$. Under the maximum-entropy model of feasible-progress demonstrations,
  suppose a trajectory continuation has unnormalized weight
  \begin{equation}
    \exp\!\left(-C_H(\tau;c_0)\right),
    \qquad
    C_H(\tau;c_0)=\sum_{j=0}^{H-1}\ell_{\mathrm{prog}}(z_j,u_j;c_0),
    \label{eq:boltzmann_rational}
  \end{equation}
  where lower $\ell_{\mathrm{prog}}$ corresponds to larger signed progress certificate
  improvement. Assume the partition function below is finite, positive, and $C^1$ in
  $z$ on the data-supported region. Define the soft progress value
  \begin{equation}
    V_{\mathrm{prog}}^{\mathrm{soft}}(z,c_0)
    =
    -\log Z_H(z,c_0),
    \qquad
    Z_H(z,c_0)=\int_{\Gamma_H(z,c_0)}
    \exp[-C_H(\tau;c_0)]\,\dd\nu_z(\tau).
    \label{eq:soft_value_partition}
  \end{equation}
  If the ideal progress-tilted latent density is
  $p^\star(z\mid c_0)=Z_{c_0}^{-1}Z_H(z,c_0)$, then
  \begin{equation}
    -\log p^\star(z\mid c_0)
    =
    V_{\mathrm{prog}}^{\mathrm{soft}}(z,c_0)+\log Z_{c_0}.
    \label{eq:soft_value}
  \end{equation}
  This identification is a modeling assumption: it holds when the empirical
  progress-weighted aggregate posterior $q_\beta^Z$ of \eqref{eq:latent_tilted_pushforward},
  whose reference measure is the known-feasible data measure, agrees with the
  maximum-entropy continuation density $Z_{c_0}^{-1}Z_H(\cdot,c_0)$; the theorem does not
  derive this agreement from the offline objective.
  Consequently, if the learned latent path has $C^1$ log-density error
  $\|\nabla_z\log p_\theta^Z-\nabla_z\log p^\star\|\le
  \varepsilon_{\mathrm{approx}}$ on the data-supported region, then
  \begin{equation}
    \boxed{s_\theta(z, o, g, \Cknown,H^\star)
    = \nabla_z \log p_\theta^Z(z \mid o, g, \Cknown,H^\star)
    = -\nabla_z V_{\mathrm{prog}}^{\mathrm{soft}}(z,c_0)
    + \mathcal{O}(\varepsilon_{\mathrm{approx}}),}
    \label{eq:score_is_value_gradient}
  \end{equation}
\end{theorem}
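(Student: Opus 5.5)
The plan has two parts: derive the identity \eqref{eq:soft_value} directly from the definitions, then pass to gradients and absorb the fitting error. Fix the context $c_0$ and work throughout on the data-supported region where Assumption~\ref{ass:standing_theory}(c) makes all densities strictly positive and $C^1$.

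\textbf{Step 1: the identity \eqref{eq:soft_value}.}
By hypothesis $Z_H(z,c_0)\in(0,\infty)$, so $V_{\mathrm{prog}}^{\mathrm{soft}}(z,c_0)=-\log Z_H(z,c_0)$ is well defined. Next I would check that the normalizer
\[
Z_{c_0}=\int Z_H(z,c_0)\,\dd\volG(z)
\]
is finite and positive, so that $p^\star$ is a genuine probability density. Positivity is immediate from $Z_H>0$. Finiteness should follow from bounded certificates (Assumption~\ref{ass:standing_theory}(f)) together with the restriction to a compact data-supported region; if not, I would simply add it as part of the modeling assumption. Taking logarithms of $p^\star=Z_{c_0}^{-1}Z_H$ then gives
\[
-\log p^\star=-\log Z_H+\log Z_{c_0}=V_{\mathrm{prog}}^{\mathrm{soft}}+\log Z_{c_0}.
\]

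\textbf{Step 2: gradients.}
Since $Z_H$ is $C^1$ and strictly positive, $\log Z_H$ is $C^1$. Because $Z_{c_0}$ does not depend on $z$, differentiating the identity gives
\[
\nabla_z\log p^\star(z\mid c_0)=-\nabla_z V_{\mathrm{prog}}^{\mathrm{soft}}(z,c_0).
\]
If one wants an explicit form, one can differentiate under the integral (dominated convergence, using the $C^1$ hypothesis) to get
\[
-\nabla_z V^{\mathrm{soft}}=\frac{\nabla_z Z_H}{Z_H},
\]
which is a Gibbs-weighted average over continuations. This expression is not needed for the claim itself.

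\textbf{Step 3: the error term.}
Write
\[
s_\theta=\nabla_z\log p^\star+\bigl(\nabla_z\log p_\theta^Z-\nabla_z\log p^\star\bigr).
\]
The bracketed term has norm at most $\varepsilon_{\mathrm{approx}}$ by hypothesis, which yields \eqref{eq:score_is_value_gradient} with an $\mathcal O$-constant equal to $1$ in the norm in which the error is measured. If the error is stated in the dual Riemannian norm $\|\cdot\|_{G^{-1}}$ instead of the Euclidean norm, uniform ellipticity of $G$ (Assumption~\ref{ass:standing_theory}(a)) converts between the two norms at the cost of a constant. That constant is absorbed into the $\mathcal O$.

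\textbf{Main obstacle.}
The only delicate point is the reference measure. The theorem identifies $q_\beta^Z$, which is a density with respect to a data measure, with $Z_{c_0}^{-1}Z_H$, whereas densities here are taken with respect to $\dd\volG$. If these reference measures differ by a density $\pi(z)$, the log-density picks up an extra term $\log\pi$, and the gradient identity would acquire $\nabla\log\pi$. My plan is to handle this exactly as the statement does: treat the identification (with both densities taken with respect to the same chosen latent reference measure) as the stated modeling assumption. I would also remark that, alternatively, any $C^1$ reference factor can be folded into $V^{\mathrm{soft}}$, since it only changes the value by a $z$-dependent additive term. I would not attempt to derive this agreement from the offline objective.
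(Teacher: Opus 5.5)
Your proposal is correct and follows the paper's proof: take logarithms of $p^\star=Z_{c_0}^{-1}Z_H$, differentiate using that $Z_{c_0}$ is $z$-independent and $Z_H$ is positive and $C^1$, then absorb the score-fitting error by a triangle-inequality bound. Your treatment of the reference measure as part of the stated modeling assumption matches the paper. One small correction to your extra remark on the normalizer: finiteness of $Z_{c_0}$ would come from continuity of $Z_H$ on a compact region of finite volume, not from bounded certificates, and your fallback of assuming it is in any case what the statement does implicitly.
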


\begin{proof}
  By definition of $p^\star$,
  \[
    \log p^\star(z\mid c_0)
    =
    \log Z_H(z,c_0)-\log Z_{c_0}
    =
    -V_{\mathrm{prog}}^{\mathrm{soft}}(z,c_0)-\log Z_{c_0}.
  \]
  The normalizer $Z_{c_0}$ depends only on the conditioning context, not on $z$.
  Differentiating with respect to $z$ therefore gives
  \[
    \nabla_z\log p^\star(z\mid c_0)
    =
    -\nabla_z V_{\mathrm{prog}}^{\mathrm{soft}}(z,c_0).
  \]
  The assumed $C^1$ regularity of $Z_H$ justifies the differentiation; one sufficient
  condition is a smoothly parameterized continuation set and dominated differentiation
  under the reference measures $\nu_z$. Finally,
  $s_\theta=\nabla_z\log p_\theta^Z$ and the assumed $C^1$ log-density approximation imply
  \[
    \left\|
    s_\theta(z,o,g,\Cknown,H^\star)
    +\nabla_z V_{\mathrm{prog}}^{\mathrm{soft}}(z,c_0)
    \right\|
    \le \varepsilon_{\mathrm{approx}},
  \]
  which is \eqref{eq:score_is_value_gradient}.
\end{proof}

\begin{remark}[Resolving directional ambiguity]
  Theorem~\ref{thm:score_value} explains why GDC resolves the directional ambiguity
  problem. The mechanism is \emph{not} frequency separation between sensor channels.
  Rather, the mechanism is that training with signed progress certificates creates a
  distribution whose selected-horizon samples have positive expected progress when the
  horizon selector \eqref{eq:selected_horizon} succeeds.

  At a decision point (e.g., an intersection), the observation $o_t$ defines the
  feasible set $\{$left, right, straight$\}$ but provides no directional preference.
  The path model $p_\theta^\tau(\tau \mid z, g, \Cknown,H^\star)$, trained with
  progress-weighted feasible segments, concentrates mass on directions that improve the
  progress certificate in expectation under similar contexts when
  $H^\star\neq\varnothing$. The score
  $s_\theta = \nabla_z \log p_\theta^Z$ converts this
  statistical bias into a deterministic gradient that points toward the locally better
  region. The known constraint potential
  $\nabla\Phi_{\mathrm{known}}$ removes explicitly modeled infeasible options; the score
  chooses among known-feasible options and approximates the implicit feasibility structure.

  Concretely, define $V_{\mathrm{prog}}(z,g) =
	  -\log p_\theta^\tau(\text{progress} \mid z,g,\Cknown,H^\star)$. Then if the path model is trained with
  signed progress certificates, the distribution satisfies a \emph{progress-bias property}:
  \begin{equation}
    \E_{\tau \sim p_\theta^\tau(\cdot \mid z, g,\Cknown,H^\star)}
    \!\left[\Delta \rho(\tau,g)\right] > 0.
    \label{eq:value_decrease_intro}
  \end{equation}
  The short-horizon distribution is biased toward improving the progress certificate in
  expectation; individual samples may still wait, backtrack, or reposition when such
  behavior appears in the data-supported strategy.
  The directional information is compressed into the distribution over short trajectories,
  because the training objective weights feasible segments by progress.
  This is why the controller can use the shortest reliable planning horizon rather than a
  long goal-reaching rollout: once \eqref{eq:selected_horizon} succeeds, the directional
  bias is already in the selected-horizon distribution.
\end{remark}

\begin{remark}[Honest caveats]
  The implicit $V_{\mathrm{prog}}(z,g)$ is a progress value of the \emph{data policy},
  not the optimal value function. It encodes ``how likely is local progress if you behave
  like the demonstrations,'' not ``how likely is final success under the best possible
  behavior.''
  The Boltzmann-rational assumption \eqref{eq:boltzmann_rational} is an idealization;
  real demonstrations are only approximately maximum-entropy. The connection is
  strongest when training data is diverse and covers the relevant state space.
\end{remark}

\subsubsection{Structural role separation in the control law}

  The GDC control law
  $\dot{z} = s_\theta(z, o, g, \Cknown,H^\star) - \nabla\Phi_{\mathrm{known}}(z)$ provides a
secondary benefit beyond the primary value-gradient mechanism: the two terms play
structurally distinct roles that prevent high-bandwidth sensor data from drowning
out the sparse directional signal.

In the control law:
\begin{equation}
  \dot{z} = \underbrace{s_\theta(z, o, g, \Cknown,H^\star)}_{\substack{\text{learned implicit information:}\\\text{where to go and how complex physics behaves}}}
  - \underbrace{\nabla_z \Phi_{\mathrm{known}}(z)}_{\substack{\text{known physics:}\\\text{what is explicitly constrained}}},
  \label{eq:role_separation}
\end{equation}
the score $s_\theta$ encodes the goal-conditioned directional information and
unmodeled feasibility patterns learned from feasible trajectories with progress
certificates, while
$\nabla\Phi_{\mathrm{known}}$ encodes explicit feasibility from known physics and
high-frequency sensors. The two signals enter the control law additively
but in structurally different roles:
the score selects \emph{which} known-feasible option to pursue and approximates
implicit effects; the potential gradient removes \emph{which} options violate
known constraints.
This separation ensures that high-bandwidth feasibility data (``the road is clear'')
cannot override the directional signal (``turn right''), because they act on
different components of the control vector.

However, this role separation is a \emph{consequence} of the primary mechanism
(Theorem~\ref{thm:score_value}), not the cause. The directional information
originates from the training data bias induced by progress certificates, which creates
the score. The structural separation merely preserves that information in the
control law.

\subsubsection{Total control potential on the empirical manifold}

Let $\Phi_{\mathrm{task}} : \Mhat \to \R$ encode task-specific objectives.

\begin{definition}[Total control potential]
  The \emph{total control potential} on $\Mhat$ is
  \begin{equation}
  \begin{aligned}
    \Phi_{\mathrm{GDC}}(z)
    &= \Phi_{\mathrm{known}}(z)
    + \Phi_{\mathrm{data}}(z, o, g, \Cknown,H^\star) \\
    &\quad + \Phi_{\mathrm{task}}(z) \\
    &= \Phi_{\mathrm{known}}(z)
    -\log p_\theta^Z(z \mid o, g, \Cknown,H^\star) \\
    &\quad + \Phi_{\mathrm{task}}(z).
  \end{aligned}
    \label{eq:total_potential}
  \end{equation}
\end{definition}

By Theorem~\ref{thm:score_value}, this is approximately, up to a $z$-independent constant:
\begin{equation}
  \Phi_{\mathrm{GDC}}(z) \approx
  \Phi_{\mathrm{known}}(z) + V_{\mathrm{prog}}^{\mathrm{soft}}(z,c_0) + \Phi_{\mathrm{task}}(z),
\end{equation}
so control minimizes the sum of explicit known physics, the data-induced approximation
of implicit information, and the task cost.

\subsection{Score-guided control as Riemannian gradient flow}
\label{sec:score_control}

\subsubsection{Riemannian gradient flow on the empirical manifold}

Rather than performing gradient flow in flat Euclidean space, GDC implements control as
\emph{Riemannian} gradient flow on $\Mhat$, respecting the intrinsic geometry of
the learned behavior manifold while explicitly enforcing known constraints.

\begin{definition}[Riemannian gradient flow control]
  The \emph{Riemannian gradient flow control dynamics} on $\Mhat$ are
  \begin{equation}
    \dot{z} = -\grad_G \Phi_{\mathrm{GDC}}(z)
    = -G(z)^{-1} \nabla_z \Phi_{\mathrm{GDC}}(z).
    \label{eq:riemannian_gradient_flow}
  \end{equation}
\end{definition}

\begin{proposition}[Score-guided Riemannian control]
  \label{prop:score_riemannian}
  Under the decomposition \eqref{eq:total_potential}, the Riemannian gradient flow
  \eqref{eq:riemannian_gradient_flow} is equivalent to
  \begin{equation}
    \dot{z} = G(z)^{-1}\!\left[
      s_\theta(z, o, g, \Cknown,H^\star)
      - \nabla_z \Phi_{\mathrm{known}}(z)
      - \nabla_z \Phi_{\mathrm{task}}(z)
    \right].
    \label{eq:score_guided_riemannian}
  \end{equation}
\end{proposition}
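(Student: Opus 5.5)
The plan is a direct substitution argument. We start from the definition \eqref{eq:riemannian_gradient_flow}, $\dot z=-G(z)^{-1}\nabla_z\Phi_{\mathrm{GDC}}(z)$, and expand $\nabla_z\Phi_{\mathrm{GDC}}$ using the decomposition \eqref{eq:total_potential}. Linearity of the Euclidean chart gradient splits the expansion into three terms:
\[
\nabla_z\Phi_{\mathrm{GDC}}(z)=\nabla_z\Phi_{\mathrm{known}}(z)+\nabla_z\Phi_{\mathrm{data}}(z,o,g,\Cknown,H^\star)+\nabla_z\Phi_{\mathrm{task}}(z).
\]
Each term exists under Assumption~\ref{ass:standing_theory}(b),(c). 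The potentials are $C^1$ on the relevant region, and $p_\theta^Z$ is strictly positive and $C^1$, so $\log p_\theta^Z$ is $C^1$ there.

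The key step is the data term. By the data-energy definition \eqref{eq:data_energy} and the score definition \eqref{eq:score}, we have $\nabla_z\Phi_{\mathrm{data}}=-\nabla_z\log p_\theta^Z=-s_\theta$. Substituting this into the expansion gives $\nabla_z\Phi_{\mathrm{GDC}}=\nabla_z\Phi_{\mathrm{known}}-s_\theta+\nabla_z\Phi_{\mathrm{task}}$.

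Next we multiply by $-G(z)^{-1}$. This step is well defined because $G(z)=J_D^\top J_D+\lambda_m I$ is positive definite by \eqref{eq:pullback_metric} with $\lambda_m>0$. Distributing the sign yields exactly \eqref{eq:score_guided_riemannian}.

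The equivalence runs in both directions because every step is an identity of vector fields. The two ODEs therefore have the same right-hand side on the data-supported region, and hence the same solutions wherever existence and uniqueness hold.

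There is no substantive obstacle here. The only points needing care are the following.
\begin{enumerate}
\item The context $c=(o,g,\Cknown,H^\star)$ is held frozen, as stated in the Frozen context remark. This ensures that $\nabla_z$ does not act on the conditioning variables and that the normalizer of $p_\theta^Z$ contributes no $z$-gradient.
\item The identity is stated in chart coordinates. If one wants it coordinate-free, one can invoke the earlier Invariance proposition: both sides transform as Riemannian gradients of scalar functions, since $G^{-1}s_\theta=\grad_G\log p_\theta^Z$.
\end{enumerate}
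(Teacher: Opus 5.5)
Your proposal is correct and follows the paper's own proof: expand $\nabla_z\Phi_{\mathrm{GDC}}$ using the decomposition, replace $\nabla_z\Phi_{\mathrm{data}}$ by $-s_\theta$, and multiply by $-G(z)^{-1}$. Your added remarks on the positive definiteness of $G$, the frozen context, and chart invariance are sound, and the paper leaves them implicit.
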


\begin{proof}
  Compute:
  \begin{align}
    \grad_G \Phi_{\mathrm{GDC}}(z) &= G(z)^{-1} \nabla_z \Phi_{\mathrm{GDC}}(z) \\
    &= G(z)^{-1}\!\left[
      -s_\theta(z, o, g, \Cknown,H^\star)
      + \nabla_z \Phi_{\mathrm{known}}(z)
      + \nabla_z \Phi_{\mathrm{task}}(z)
    \right].
  \end{align}
  Substituting into $\dot{z} = -\grad_G \Phi_{\mathrm{GDC}}(z)$ yields
  \eqref{eq:score_guided_riemannian}.
\end{proof}

\begin{remark}[From sampling to gradient flow]
  The mathematical distinction between direct generative policy sampling and GDC is
  precisely the transformation \eqref{eq:core_shift}. In diffusion policy, the score drives
  a \emph{sampling} process: $\dot{u} = s_\theta(u)$, where the dynamics \emph{are}
  the controller. In GDC, the score enters a \emph{structured control law}:
  \begin{equation}
    \dot{z} = G(z)^{-1}\!\left[
      s_\theta(z, o, g, \Cknown,H^\star)
      - \nabla_z \Phi_{\mathrm{known}}(z)
      - \nabla_z \Phi_{\mathrm{task}}(z)
    \right],
  \end{equation}
  where the Riemannian metric $G(z)$ respects manifold curvature,
  $\Phi_{\mathrm{known}}$ encodes modeled physics and known constraints, and
  $\Phi_{\mathrm{task}}$ encodes task objectives. Known constraints are enforced by the
  support restriction $\Mhat_{\mathrm{known}}$, anchor penalties, and candidate filtering.
  The score provides the implicit information (via the implicit value gradient,
  Theorem~\ref{thm:score_value}); the known scaffold provides explicit feasibility and
  stability.

  In short: diffusion policy performs sampling in action space; GDC defines a
  controlled Riemannian gradient flow where the path score provides direction
  and geometry enforces feasibility.
\end{remark}

\subsubsection{Singular perturbation structure for known constraints}

For stable online control, the total online potential on $\Mhat$ admits a
singular perturbation decomposition. Writing $\Phi_\lambda$ for the interpolated
potential \eqref{eq:interpolated_potential}, define
\begin{equation}
  V_\lambda(z) = \frac{1}{\epsilon_{\mathrm{s}}}\,\Phi_{\mathrm{scaf}}(z)
  + \lambda\,\Phi_{\mathrm{known}}(z)
  + (1-\lambda)\,\Phi_{\mathrm{data}}(z, o, g, \Cknown,H^\star)
  + \Phi_{\mathrm{task}}(z),
  \label{eq:singular_perturbation_potential}
\end{equation}
where:
\begin{itemize}[leftmargin=2em]
  \item $\Phi_{\mathrm{scaf}}(z) =
        d_G(z,\mathcal M_t)^2$ is the fast scaffold
        potential that keeps $z$ near the known-feasible action-segment support
        (fast timescale, $O(\epsilon_{\mathrm{s}})$).
  \item $\Phi_{\mathrm{known}}(z)$
        is the tangential known-physics potential that slides along $\Mhat$
        while respecting modeled constraints (slow timescale, $O(1)$).
  \item $\Phi_{\mathrm{data}}(z, o, g, \Cknown,H^\star)
        = -\log p_\theta^Z(z \mid o, g, \Cknown,H^\star)$ is the
        data-induced potential providing implicit physics and directional guidance
        from the probabilistic path model.
  \item $\Phi_{\mathrm{task}}(z)$ is the runtime task potential, entering at
        $O(1)$ alongside $\Phi_{\mathrm{known}}$.
\end{itemize}

The nominal gradient flow is $\dot z_{\mathrm{nom}} = -\grad_G V_\lambda(z)$, that is,
\begin{equation}
  \dot{z}_{\text{nom}} = -G(z)^{-1}\!\left[
  \frac{1}{\epsilon_{\mathrm{s}}}\nabla_z \Phi_{\mathrm{scaf}}(z)
  + \lambda\,\nabla_z \Phi_{\mathrm{known}}(z)
  - (1 - \lambda)\,s_\theta(z, o, g, \Cknown,H^\star)
  + \nabla_z \Phi_{\mathrm{task}}(z)
  \right],
  \label{eq:gdc_nominal_flow}
\end{equation}
using $s_\theta = -\nabla_z\Phi_{\mathrm{data}}$; the parameter $\lambda$ controls the
physics--data balance as in \eqref{eq:interpolated_potential}. This matches the online
latent flow \eqref{eq:online_latent_flow} used by the deployed controller.

\subsubsection{Discrete-time implementation}

For computational implementation, the flow \eqref{eq:gdc_nominal_flow} is discretized
on the learned map:
\begin{equation}
  z_{t+1} = z_t \oplus \bigl(\dot{z}_{\text{nom},t}\,\Delta t\bigr),
  \label{eq:gdc_discrete_update}
\end{equation}
where $\oplus$ denotes ordinary addition in a flat latent chart or a standard retraction
for curved coordinates. Feasibility is maintained by selecting and refining within the
anchored support $\Mhat_{\mathrm{known}}(o,g)$ and by the known-constraint potential
$\Phi_{\mathrm{known}}$.

\subsection{Langevin control dynamics on the empirical manifold}
\label{sec:langevin}

\subsubsection{Manifold Langevin SDE}

The deterministic Riemannian gradient flow \eqref{eq:riemannian_gradient_flow} is
augmented with a stochastic noise term to obtain the \emph{manifold Langevin SDE}:
\begin{equation}
  \dd z_t = -\grad_G \Phi_{\mathrm{GDC}}(z_t)\,\dd t
  + \sqrt{2\beta_{\mathrm{L}}^{-1}}\,G(z_t)^{-1/2}\,\dd W_t,
  \label{eq:manifold_langevin}
\end{equation}
where $W_t$ is a standard Brownian motion in $\R^{d_z}$, $\beta_{\mathrm{L}} > 0$ is an
inverse temperature parameter (a sampling knob, unrelated to the offline progress tilt
$\beta$), and $G(z)^{-1/2}$ ensures the noise respects the Riemannian
geometry.
Equation~\eqref{eq:manifold_langevin} is an intrinsic shorthand: the diffusion is the
Riemannian Langevin process whose generator acts on smooth test functions $\varphi$ as
\begin{equation}
  \mathcal L\varphi
  =
  -\inner{\grad_G\Phi_{\mathrm{GDC}}}{\grad_G\varphi}_G
  +\beta_{\mathrm{L}}^{-1}\Delta_G\varphi .
  \label{eq:langevin_generator}
\end{equation}
In local coordinates this generator includes the usual metric/Christoffel drift terms
associated with Brownian motion on $(\Mhat,G)$; the compact notation above suppresses
those coordinate artifacts.

\subsubsection{Stochastic score-guided control on the empirical manifold}

Incorporating the score decomposition, the stochastic control dynamics on
$\Mhat$ use $H_t^\star=H^\star(o_t,g,\Cknown)$ and are:
\begin{equation}
  \dd z_t = G(z_t)^{-1}\!\left[
    s_\theta(z_t, o_t, g, \Cknown,H_t^\star)
    - \nabla \Phi_{\mathrm{known}}(z_t)
    - \nabla \Phi_{\mathrm{task}}(z_t)
  \right]\dd t
  + \sqrt{2\beta_{\mathrm{L}}^{-1}}\,G(z_t)^{-1/2}\,\dd W_t.
  \label{eq:stochastic_manifold_control}
\end{equation}

The noise term plays the role of stochastic exploration on $\Mhat$:
at high temperature ($\beta_{\mathrm{L}} \to 0$), the dynamics explore broadly across behavior
modes on the manifold; at low temperature ($\beta_{\mathrm{L}} \to \infty$), the dynamics reduce
to the deterministic Riemannian gradient flow.

\subsubsection{Euler--Maruyama discretization}

The stochastic dynamics can be discretized by an Euler--Maruyama step in any chosen
latent chart:
\begin{equation}
  v_t = \Delta t\,G(z_t)^{-1}\!\left[
    s_\theta(z_t, o_t, g, \Cknown,H_t^\star)
    - \nabla \Phi_{\mathrm{known}}(z_t)
    - \nabla \Phi_{\mathrm{task}}(z_t)
  \right]
  + \sqrt{2\Delta t\beta_{\mathrm{L}}^{-1}}\,G(z_t)^{-1/2}\,\xi_t,
\end{equation}
\begin{equation}
  z_{t+1} = z_t + v_t,
  \label{eq:euler_maruyama}
\end{equation}
where $\xi_t \sim \mathcal{N}(0, I)$. If a particular implementation uses
group-valued coordinates, the additive update can be replaced by the corresponding
retraction, but this is not a core component of GDC.

\subsection{Fokker--Planck analysis on the empirical manifold}
\label{sec:fokker_planck}

\subsubsection{The Fokker--Planck equation on the manifold}

Let $p(z, t)$ denote the probability density of the process
\eqref{eq:stochastic_manifold_control} at time $t$ with respect to the Riemannian
volume form on $\Mhat$. The evolution satisfies:
\begin{equation}
  \frac{\partial p}{\partial t} = \dive_G\!\left(p\,\grad_G \Phi_{\mathrm{GDC}}\right)
  + \beta_{\mathrm{L}}^{-1}\,\Delta_G\,p,
  \label{eq:manifold_fokker_planck}
\end{equation}
where $\dive_G$ and $\Delta_G$ are the Riemannian divergence and Laplace--Beltrami
operators on $(\Mhat, G)$.

\subsubsection{Stationary distribution}

\begin{proposition}[Stationary distribution on $\Mhat$]
  \label{prop:stationary}
  Under Assumption~\ref{ass:standing_theory}, suppose
  $Z=\int_{\Mhat}\exp(-\beta_{\mathrm{L}}\Phi_{\mathrm{GDC}}(z))\,\dd\volG(z)<\infty$. Then the
  manifold Langevin control dynamics
  have the Gibbs stationary distribution
  \begin{equation}
    \pi^\star(z) = \frac{1}{Z}\,\exp(-\beta_{\mathrm{L}}\,\Phi_{\mathrm{GDC}}(z)),
    \qquad
    Z = \int_{\Mhat} \exp(-\beta_{\mathrm{L}}\,\Phi_{\mathrm{GDC}}(z))\,\dd\volG(z).
    \label{eq:manifold_gibbs}
  \end{equation}
  If the data-supported region is connected and the diffusion is uniformly elliptic and
  nonexplosive there, this stationary distribution is unique; in particular, this holds
  on a compact connected region with reflecting/no-flux boundary.
\end{proposition}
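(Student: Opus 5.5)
The plan is to verify stationarity in weak (generator) form and then obtain uniqueness from ellipticity and connectedness. First, with $\Phi=\Phi_{\mathrm{GDC}}$, I will rewrite the Fokker--Planck operator \eqref{eq:manifold_fokker_planck} in divergence form:
\[
\mathcal L^\ast p=\dive_G\!\left(p\,\grad_G\Phi+\beta_{\mathrm L}^{-1}\grad_G p\right)
=\beta_{\mathrm L}^{-1}\dive_G\!\left(e^{-\beta_{\mathrm L}\Phi}\grad_G\!\left(p\,e^{\beta_{\mathrm L}\Phi}\right)\right).
\]
This factorization only uses $\grad_G(p e^{\beta_{\mathrm L}\Phi})=e^{\beta_{\mathrm L}\Phi}(\grad_G p+\beta_{\mathrm L}p\,\grad_G\Phi)$, which is valid because $\Phi$ is $C^1$ (Assumption~\ref{ass:standing_theory}(b)). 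Substituting $p=\pi^\star$ makes $p e^{\beta_{\mathrm L}\Phi}$ equal to the constant $Z^{-1}$. The probability flux therefore vanishes identically, so $\mathcal L^\ast\pi^\star=0$. The hypothesis $Z<\infty$ ensures that $\pi^\star$ is a genuine probability density with respect to $\dd\volG$.

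To avoid needing second derivatives of $\Phi$, I will phrase this step weakly. For smooth compactly supported test functions $\varphi$ (or, in the reflecting case, those with vanishing normal derivative), I will use the generator \eqref{eq:langevin_generator} and Green's identity on $(\Mhat,G)$ to show
\[
\int\mathcal L\varphi\,\dd\pi^\star=-\beta_{\mathrm L}^{-1}\int\inner{\grad_G\varphi}{\grad_G 1}_G\,\dd\pi^\star=0 .
\]
More symmetrically, $\int\psi\,\mathcal L\varphi\,\dd\pi^\star=-\beta_{\mathrm L}^{-1}\int\inner{\grad_G\psi}{\grad_G\varphi}_G\,\dd\pi^\star$, so $\mathcal L$ is symmetric in $L^2(\pi^\star)$. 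The boundary terms in Green's identity are where Assumption~\ref{ass:standing_theory}(a) is needed: either $\Mhat$ has no boundary, or the no-flux condition kills the boundary integral of $\pi^\star\,\partial_n\varphi$. Invariance of $\pi^\star$ for the semigroup then follows once $C_c^\infty$ (or the Neumann core) is known to be a core for the generator.

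For uniqueness, I will rely on uniform ellipticity together with $C^1$ coefficients, which give a strong Feller property and irreducibility, for instance via positivity of the heat kernel for an elliptic operator on a connected domain. A standard Doob--Khasminskii argument then shows that an irreducible strong Feller Markov process has at most one invariant probability measure. Nonexplosiveness guarantees that the semigroup is conservative, so that $\pi^\star$ is a true invariant measure rather than only a solution of $\mathcal L^\ast p=0$. On a compact connected region with reflecting boundary, nonexplosion is automatic. In that case I would use a more elementary route: any invariant density $p$ is a weak solution of $\mathcal L^\ast p=0$ with no-flux boundary. Testing the equation against $h=p/\pi^\star$ yields the Dirichlet form identity $\int|\grad_G h|_G^2\,\dd\pi^\star=0$, so $h$ is constant by connectedness.

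The main obstacle is the analytic justification rather than the algebra. Specifically, one must check that the semigroup generated by $\mathcal L$ is the unique (Markov-unique) extension, and that the integration by parts is legitimate for noncompact $\Mhat$ with potentials that are only $C^1$. I would handle this by working in the weak formulation throughout and by invoking essential self-adjointness of $\mathcal L$ on $L^2(\pi^\star)$ under nonexplosion, as in Gaffney/Davies-type results. If that fails, I would restrict to the compact reflecting case, where everything reduces to a Neumann problem on a bounded domain.
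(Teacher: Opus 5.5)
Your existence step is the paper's. The paper writes the Fokker--Planck equation as $\partial_t p=-\dive_G J_p$ with $J_p=-p\,\grad_G\Phi_{\mathrm{GDC}}-\beta_{\mathrm{L}}^{-1}\grad_G p$ and checks $J_{\pi^\star}=0$; that is your factorization through $p\,e^{\beta_{\mathrm{L}}\Phi}$, read in strong rather than weak form. Your ``elementary'' compact-case uniqueness argument is also exactly the paper's uniqueness proof: set $h=p/\pi^\star$, test against $h$, obtain $\int\|\grad_G h\|_G^2\,\dd\pi^\star=0$, and conclude by connectedness and normalization.

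The real difference is in the noncompact case. The paper applies this reversibility/Dirichlet-form argument in every case. Your primary route for the noncompact, uniformly elliptic, nonexplosive case is instead strong Feller plus irreducibility, concluded via Doob--Khasminskii.

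\begin{itemize}
\item The paper's route is short and uses only the gradient structure. However, it presupposes that any competing invariant law has a smooth density, and it leaves the integration by parts at infinity unjustified when $\Mhat$ is noncompact. As written, it is airtight only in the compact reflecting setting.
\item Your route uses heavier Markov-process machinery, but it excludes every other invariant probability measure without assuming a density.
\item You also correctly observe that nonexplosion, i.e.\ conservativeness of the symmetric semigroup, is what upgrades infinitesimal invariance $\int\mathcal L\varphi\,\dd\pi^\star=0$ to genuine invariance. The paper skips this step, since it simply identifies ``stationary'' with $J_{\pi^\star}=0$.
\end{itemize}

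One small correction. With $\Phi_{\mathrm{GDC}}$ only $C^1$ and $G$ only $C^1$, the drift in a chart, including the Christoffel terms, is merely continuous. So cite strong Feller and irreducibility in the Stroock--Varadhan form, for nondegenerate diffusions with continuous diffusion matrix and locally bounded drift, rather than as consequences of $C^1$ coefficients.
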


\begin{proof}
  The Fokker--Planck equation can be written as a conservation law
  $\partial_t p=-\dive_G J_p$ with probability flux
  \[
    J_p=-p\,\grad_G\Phi_{\mathrm{GDC}}-\beta_{\mathrm{L}}^{-1}\grad_G p .
  \]
  For $\pi^\star=Z^{-1}\exp(-\beta_{\mathrm{L}}\Phi_{\mathrm{GDC}})$,
  \[
    \grad_G \pi^\star
    =
    -\beta_{\mathrm{L}} \pi^\star \grad_G\Phi_{\mathrm{GDC}}.
  \]
  Substitution gives $J_{\pi^\star}=0$, so $\partial_t \pi^\star=0$ and the reflecting
  boundary condition, when a boundary is present, is automatically satisfied. Existence
  follows from the assumed finiteness of $Z$. For uniqueness, let $p$ be any smooth
  invariant density in the same no-flux domain and write $p=h\pi^\star$. Since the
  generator \eqref{eq:langevin_generator} is reversible with respect to $\pi^\star$, the
  stationarity equation tested against $h$ gives
  $\int_{\Mhat}\|\grad_G h\|_G^2\,\pi^\star\,\dd\volG=0$. Hence $h$ is constant on each
  connected component. Connectedness and normalization force $h\equiv1$, so the
  stationary density is unique.
\end{proof}

\subsubsection{Implications for control}

The stationary distribution reveals:
\begin{equation}
  \pi^\star(z) \propto \exp\!\left(\beta_{\mathrm{L}}\,\log p_\theta^Z(z \mid o, g, \Cknown,H^\star)\right)
  \cdot \exp\!\left(-\beta_{\mathrm{L}}\,\Phi_{\mathrm{known}}(z)\right)
  \cdot \exp\!\left(-\beta_{\mathrm{L}}\,\Phi_{\mathrm{task}}(z)\right),
\end{equation}
which is the distribution over behaviors on $\Mhat$ that balances data consistency,
known-constraint satisfaction, and task optimality.

\subsubsection{Free energy and convergence}

Define the free energy functional on $\Mhat$:
\begin{equation}
  \mathcal{F}[p] = \E_p[\Phi_{\mathrm{GDC}}] - \beta_{\mathrm{L}}^{-1} H_G(p),
\end{equation}
where $H_G(p) = -\int_{\Mhat} p \log p\,\dd\volG$ is the differential
entropy with respect to the Riemannian volume form.

\begin{proposition}[Free energy descent on $\Mhat$]
  For smooth positive solutions $p_t$ of the Fokker--Planck evolution
  \eqref{eq:manifold_fokker_planck} satisfying the no-flux boundary condition when a
  boundary is present,
  \begin{equation}
    \begin{aligned}
    \frac{\dd}{\dd t}\mathcal{F}[p_t]
    &=
    -\int_{\Mhat} p_t
    \norm{\grad_G\!\left(\Phi_{\mathrm{GDC}}+\beta_{\mathrm{L}}^{-1}\log p_t\right)}_G^2
    \,\dd\volG \\
    &=
    -\beta_{\mathrm{L}}^{-2}\int_{\Mhat} p_t\,
    \norm{\grad_G \log\frac{p_t}{\pi^\star}}^2_G\,
    \dd\volG \le 0 .
    \end{aligned}
  \end{equation}
\end{proposition}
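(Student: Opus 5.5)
The plan is the standard free-energy dissipation computation for the Fokker--Planck equation \eqref{eq:manifold_fokker_planck}, written in Riemannian form. First I will put the evolution in conservation form, $\partial_t p_t=-\dive_G J_{p_t}$, with flux
\[
J_p=-p\,\grad_G\Phi_{\mathrm{GDC}}-\beta_{\mathrm{L}}^{-1}\grad_G p=-p\,\grad_G\bigl(\Phi_{\mathrm{GDC}}+\beta_{\mathrm{L}}^{-1}\log p\bigr).
\]
The second equality uses $\grad_G p=p\,\grad_G\log p$, which is valid because $p_t>0$. The flux is the same one used in the proof of Proposition~\ref{prop:stationary}.

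Next I will compute the first variation of $\mathcal F$. Since $\frac{\dd}{\dd t}(p\log p)=(\log p+1)\partial_t p$, I get
\[
\frac{\dd}{\dd t}\mathcal F[p_t]=\int_{\Mhat}\bigl(\Phi_{\mathrm{GDC}}+\beta_{\mathrm{L}}^{-1}\log p_t+\beta_{\mathrm{L}}^{-1}\bigr)\,\partial_t p_t\,\dd\volG .
\]
The constant $\beta_{\mathrm{L}}^{-1}$ drops out, because $\int\partial_t p_t\,\dd\volG=-\int\dive_G J\,\dd\volG=0$ by the divergence theorem together with the no-flux condition. I then substitute $\partial_t p_t=-\dive_G J$ and integrate by parts with the Riemannian divergence theorem. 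The boundary term $\int_{\partial\Mhat}(\cdots)\inner{J}{n}_G$ vanishes under the no-flux condition, or is absent when $\Mhat$ has no boundary. This leaves
\[
\int\inner{\grad_G(\Phi_{\mathrm{GDC}}+\beta_{\mathrm{L}}^{-1}\log p_t)}{J}_G\,\dd\volG .
\]
Inserting the expression for $J$ gives the first displayed equality.

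For the second equality, I use $\log\pi^\star=-\beta_{\mathrm{L}}\Phi_{\mathrm{GDC}}-\log Z$. This gives
\[
\Phi_{\mathrm{GDC}}+\beta_{\mathrm{L}}^{-1}\log p_t=\beta_{\mathrm{L}}^{-1}\log(p_t/\pi^\star)-\beta_{\mathrm{L}}^{-1}\log Z,
\]
and the constant disappears under $\grad_G$. Pulling out $\beta_{\mathrm{L}}^{-2}$ gives the relative-Fisher-information form. Nonpositivity is then immediate, since $p_t>0$ and $G$ is positive definite, so the integrand is nonnegative.

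The main obstacle is justifying the differentiation under the integral and the integration by parts on a possibly noncompact $\Mhat$. On a compact region with reflecting boundary these steps are fine for smooth positive solutions. In the noncompact case I would assume enough decay (for instance, finiteness of $\mathcal F[p_t]$, of the Fisher information, and of $Z$) to kill the terms at infinity. I would first try a cutoff argument: exhaust $\Mhat$ by compact sets, integrate against cutoff functions, and pass to the limit by dominated convergence. The smoothness granted in the statement would absorb the remaining regularity issues.
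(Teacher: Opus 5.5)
Your proposal is correct and follows essentially the same route as the paper. Both write the Fokker--Planck equation in divergence form with flux $-p\,\grad_G(\Phi_{\mathrm{GDC}}+\beta_{\mathrm{L}}^{-1}\log p)$, pair it with the first variation $\Phi_{\mathrm{GDC}}+\beta_{\mathrm{L}}^{-1}(\log p+1)$, integrate by parts using the no-flux condition, and rewrite the result via $\log(p_t/\pi^\star)$. Your extra care about differentiating under the integral and handling a noncompact $\Mhat$ goes beyond the paper, which simply takes these steps as granted for smooth positive solutions.
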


\begin{proof}
Using $H_G(p)=-\int p\log p\,\dd\volG$, the free energy is
\[
  \mathcal F[p]
  =
  \int_{\Mhat}p\Phi_{\mathrm{GDC}}\,\dd\volG
  +\beta_{\mathrm{L}}^{-1}\int_{\Mhat}p\log p\,\dd\volG .
\]
Its first variation is
\[
  \frac{\delta\mathcal F}{\delta p}
  =
  \Phi_{\mathrm{GDC}}+\beta_{\mathrm{L}}^{-1}(\log p+1).
\]
The additive constant has zero gradient. The Fokker--Planck equation is equivalently
\[
  \partial_t p
  =
  \dive_G\!\left(p\,\grad_G
  \left(\Phi_{\mathrm{GDC}}+\beta_{\mathrm{L}}^{-1}\log p\right)\right).
\]
Therefore, integrating by parts on a boundaryless manifold or using the no-flux boundary
condition,
\[
  \frac{\dd}{\dd t}\mathcal F[p_t]
  =
  \int_{\Mhat}\frac{\delta\mathcal F}{\delta p}\,\partial_t p_t\,\dd\volG
  =
  -\int_{\Mhat}p_t
  \norm{\grad_G\!\left(\Phi_{\mathrm{GDC}}+\beta_{\mathrm{L}}^{-1}\log p_t\right)}_G^2
  \dd\volG .
\]
Since
$\log(p_t/\pi^\star)=\log p_t+\beta_{\mathrm{L}}\Phi_{\mathrm{GDC}}+\log Z$, its gradient is
$\beta_{\mathrm{L}}\,\grad_G(\Phi_{\mathrm{GDC}}+\beta_{\mathrm{L}}^{-1}\log p_t)$, which gives the second
identity and nonpositivity.
\end{proof}

\subsection{Trajectory-level probabilistic path control}
\label{sec:trajectory}

\subsubsection{Receding-horizon trajectory control}

When inter-step constraints couple multiple time steps, trajectory-level planning
on the empirical manifold is preferable. This section extends GDC to the trajectory
manifold $\widehat{\mathcal{M}}^{H,*}$: known dynamics and constraints define
the scaffold, while feasible trajectory data with progress certificates complete the
implicit trajectory-level coupling structure.

\subsubsection{Trajectory path model on the empirical horizon manifold}

Define a trajectory of actions over horizon $H$:
$\tau = (u_t, \ldots, u_{t+H-1}) \in \mathcal{U}^{H}$.
The empirical trajectory manifold $\widehat{\mathcal{M}}^{H,*}$ is constructed by
encoding and decoding feasible horizon action sequences, while known dynamics are used
only where they are available to check or penalize decoded sequences:
\begin{equation}
  \mathcal{L}_{\text{horizon-known}}
  =
  \frac{1}{N}\sum_{k=1}^{N}
  \sum_r
  \left[
    f_r^{\mathrm{known}}\!\left(D_\psi^H(z_k,o_k,g_k,\Cknown),o_k,g_k\right)_+
  \right]^2,
\end{equation}
where $D_\psi^H$ decodes an $H$-step action sequence. If a known rollout map is available,
$f_r^{\mathrm{known}}$ may include its dynamic-consistency residuals; if not, it includes
only the available safety, actuator, timing, or resource constraints. Residual couplings
not captured by $f_{\mathrm{known}}$ are learned from feasible trajectory data through
$\Phi_{\mathrm{data}}$.

We train a conditional probabilistic path model on $\widehat{\mathcal{M}}^{H,*}$:
\begin{equation}
  p_\theta^\tau(\tau \mid o_t, g, \Cknown,H^\star)
\end{equation}
with the same clean-latent prediction objective \eqref{eq:path_embedding_loss}, replacing
$z$ with the trajectory latent code on $\widehat{\mathcal{M}}^{H,*}$ and using
$D_\psi^H$ to decode the selected clean latent code into an action sequence.

\subsubsection{Trajectory energy and score}

\begin{definition}[Trajectory energy on $\widehat{\mathcal{M}}^{H,*}$]
  \begin{equation}
    \Phi_\tau(\tau) =
    -\log p_\theta^\tau(\tau \mid o_t, g, \Cknown,H^\star)
    + \Phi_{\mathrm{known}}(\tau)
    + \Phi_{\mathrm{task}}(\tau),
  \end{equation}
  where $\Phi_{\mathrm{known}}(\tau)$ encodes trajectory-level known constraints and
  $\Phi_{\mathrm{task}}(\tau)$ encodes trajectory-level objectives.
\end{definition}

Let
$s_\theta^\tau(\tau,o_t,g,\Cknown,H^\star)
=\nabla_\tau\log p_\theta^\tau(\tau \mid o_t,g,\Cknown,H^\star)$.
The score-guided trajectory dynamics on $\widehat{\mathcal{M}}^{H,*}$ are:
\begin{equation}
  \dot{\tau} = G_\tau(\tau)^{-1}\!\left[
    s_\theta^\tau(\tau, o_t, g, \Cknown,H^\star)
    - \nabla_\tau \Phi_{\mathrm{known}}(\tau)
    - \nabla_\tau \Phi_{\mathrm{task}}(\tau)
  \right],
\end{equation}
where $G_\tau$ is the Riemannian metric on the trajectory manifold.

\subsubsection{Receding horizon execution}

Only the first action is executed:
$u_t = \tau_0^\star$,
where $\tau^\star = \argmin_\tau \Phi_\tau(\tau)$ on $\widehat{\mathcal{M}}^{H,*}$.
The horizon shifts and the process repeats, yielding a receding-horizon controller
that enforces known trajectory-level constraints and uses the path model's
distributional knowledge to approximate implicit trajectory information.

\subsubsection{Progress consistency across horizon}

The trajectory-level formulation provides the sharpest statement of why GDC
resolves directional ambiguity. Define the implicit progress value
$V_{\mathrm{prog}}(z,g) = -\log p_\theta^\tau(\text{progress} \mid z,g,\Cknown,H^\star)$ as in
Definition~\ref{def:implicit_value}.

\begin{proposition}[Progress bias under trajectory sampling]
  \label{prop:value_decrease}
  Let $q_\beta(\tau\mid z,g,\Cknown,H^\star)$ be the progress-tilted known-feasible target
  distribution at a selected horizon. Suppose that, on a data-supported region
  $U\subset\Mhat$, there is a margin $\gamma>0$ such that
  \[
    \E_{q_\beta(\cdot\mid z,g,\Cknown,H^\star)}[\Delta\rho(\tau,g)]\ge\gamma
    \qquad \text{for all }z\in U .
  \]
  Suppose also that the learned trajectory path satisfies
  \[
    \left\|
    p_\theta^\tau(\cdot\mid z,g,\Cknown,H^\star)
    -q_\beta(\cdot\mid z,g,\Cknown,H^\star)
    \right\|_{\TV}
    \le \delta
    \qquad \text{for all }z\in U,
  \]
  with $2R_\rho\delta<\gamma$. Then, for latent points within $U$,
  \begin{equation}
    \boxed{\E_{\tau \sim p_\theta^\tau(\cdot \mid z, g, \Cknown,H^\star)}
    \!\left[\Delta \rho(\tau,g)\right] > 0.}
    \label{eq:value_decrease}
  \end{equation}
  That is, the learned short-horizon trajectory distribution is biased toward
  known-feasible progress in expectation.
\end{proposition}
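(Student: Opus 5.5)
The plan is a direct total-variation perturbation bound on the expectation of a bounded function. Fix $z\in U$ and abbreviate $P=p_\theta^\tau(\cdot\mid z,g,\Cknown,H^\star)$ and $Q=q_\beta(\cdot\mid z,g,\Cknown,H^\star)$. Let $f(\tau)=\Delta\rho(\tau,g)$. By Assumption~\ref{ass:standing_theory}(f), $|f|\le R_\rho$ on every selected finite horizon, so $f$ is bounded and measurable on the common segment space. Both expectations are therefore finite, and it suffices to bound $|\E_P f-\E_Q f|$.

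The key step is the standard duality inequality for total variation under the paper's convention $\|\mu-\nu\|_{\TV}=\sup_A|\mu(A)-\nu(A)|=\tfrac12\|\mu-\nu\|_1$:
\[
  \left|\E_P f-\E_Q f\right|
  =\left|\int f\,\dd(P-Q)\right|
  \le \|f\|_\infty\,\|P-Q\|_1
  = 2\|f\|_\infty\,\|P-Q\|_{\TV}
  \le 2R_\rho\delta .
\]
To justify this, I would take a common dominating measure (for example $P+Q$) and write $P-Q$ through its density difference. Alternatively, one can use the Hahn decomposition $P-Q=(P-Q)^+-(P-Q)^-$. The two parts have equal mass $\|P-Q\|_{\TV}$ because $P$ and $Q$ are both probability measures, and integrating $f$ against each part contributes at most $R_\rho\|P-Q\|_{\TV}$. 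A slightly sharper constant is available by using $\sup f-\inf f\le 2R_\rho$ instead, but it is not needed.

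Combining with the margin gives
\[
  \E_P f\ge \E_Q f-2R_\rho\delta\ge \gamma-2R_\rho\delta>0,
\]
using the hypothesis $2R_\rho\delta<\gamma$. Since $z\in U$ was arbitrary, this proves \eqref{eq:value_decrease} uniformly on $U$, with the quantitative margin $\gamma-2R_\rho\delta$.

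The only real obstacle is bookkeeping rather than analysis. Both $P$ and $Q$ must live on the same measurable space of horizon-$H^\star$ segments, and the bound $R_\rho$ must apply to the unclipped certificate that appears in \eqref{eq:value_decrease}. I would state explicitly that $P$ is regarded as a law on that space, since it is a decoded law that may be supported on a lower-dimensional image. I would also point out that Assumption~\ref{ass:standing_theory}(f) covers unclipped certificates on selected horizons. If only clipped certificates were bounded, the same argument would apply to $\tilde r$ with $R_\rho=c$.
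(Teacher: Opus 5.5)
Your proposal is correct and matches the paper's proof. Both arguments fix $z\in U$, use $|\Delta\rho|\le R_\rho$ from Assumption~\ref{ass:standing_theory}(f) with the variational characterization of total variation to get $|\E_p[\Delta\rho]-\E_q[\Delta\rho]|\le 2R_\rho\|p-q\|_{\TV}\le 2R_\rho\delta$, and conclude $\E_p[\Delta\rho]\ge\gamma-2R_\rho\delta>0$; your Hahn-decomposition justification of the factor $2$ and your note that $p$ and $q$ must live on a common segment space are valid details that the paper leaves implicit.
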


\begin{proof}
  Fix $z\in U$ and abbreviate $p=p_\theta^\tau(\cdot\mid z,g,\Cknown,H^\star)$ and
  $q=q_\beta(\cdot\mid z,g,\Cknown,H^\star)$. Since
  $|\Delta\rho(\tau,g)|\le R_\rho$ by Assumption~\ref{ass:standing_theory},
  the total-variation variational characterization gives
  \[
    \left|
    \E_p[\Delta\rho(\tau,g)]-\E_q[\Delta\rho(\tau,g)]
    \right|
    \le
    2R_\rho\|p-q\|_{\TV}
    \le 2R_\rho\delta .
  \]
  Therefore
  \[
    \E_p[\Delta\rho(\tau,g)]
    \ge
    \E_q[\Delta\rho(\tau,g)]-2R_\rho\delta
    \ge
    \gamma-2R_\rho\delta
    >0 .
  \]
  The conclusion is conditional on data support and on the fitting error bound; it does
  not assert global optimality or final-goal success for every sampled short segment.
\end{proof}

\begin{remark}[Why this resolves directional ambiguity with short horizons]
  Equation \eqref{eq:value_decrease} is the key formula. It says: even though
  each planned trajectory is only $H$ steps long, the \emph{distribution} over such
  trajectories is biased toward local progress. The directional information is not computed by planning over a long
  horizon; it is compressed into the \emph{distribution} over short trajectories because
  the training objective uses signed progress certificates. At the intersection, if
  right-turning snippets have higher expected progress under similar goals and contexts,
  the distribution concentrates more mass on right-turning short trajectories while still
  allowing waiting or yielding behavior when supported by the data.

  This is fundamentally different from model-predictive control, which must plan
  over a long enough horizon to ``see'' the goal. GDC does not need a long horizon
  to produce a useful local direction because the progress bias is already baked into
  the distribution. It still does not claim that every sampled short segment reaches
  the final goal.
\end{remark}

\subsection{Stability analysis}
\label{sec:stability}

\subsubsection{Lyapunov stability of Riemannian gradient flow}

\begin{theorem}[Lyapunov stability on $\Mhat$]
  \label{thm:lyapunov}
  Let $\Phi_{\mathrm{GDC}} : \Mhat \to \R$ be continuously differentiable and bounded below,
  with $z^\star$ an interior strict local minimum of the data-supported region. Consider
  the Riemannian gradient flow \eqref{eq:riemannian_gradient_flow}. Then:
  \begin{enumerate}[label=(\roman*)]
    \item $z^\star$ is a Lyapunov stable equilibrium on $\Mhat$.
    \item Every trajectory in a sublevel set $\{z \in \Mhat : \Phi_{\mathrm{GDC}}(z) \le c\}$
          remains in that set.
    \item If $\Phi_{\mathrm{GDC}}$ is $m$-strongly geodesically convex near $z^\star$, then
          $z^\star$ is exponentially stable with rate $m$:
          \begin{equation}
            d_G(z_t, z^\star) \le e^{-mt}\,d_G(z_0, z^\star),
          \end{equation}
          where $d_G$ is the geodesic distance on $(\Mhat, G)$.
  \end{enumerate}
\end{theorem}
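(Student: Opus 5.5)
The plan is to use $V(z)=\Phi_{\mathrm{GDC}}(z)-\Phi_{\mathrm{GDC}}(z^\star)$ as a Lyapunov function for the flow \eqref{eq:riemannian_gradient_flow}. The basic computation is the chain rule along any $C^1$ solution:
\[
\frac{\dd}{\dd t}\Phi_{\mathrm{GDC}}(z_t)=\nabla_z\Phi_{\mathrm{GDC}}(z_t)^\top\dot z_t=-\nabla_z\Phi_{\mathrm{GDC}}^\top G^{-1}\nabla_z\Phi_{\mathrm{GDC}}=-\|\grad_G\Phi_{\mathrm{GDC}}(z_t)\|_G^2\le 0 .
\]
Since $\Phi_{\mathrm{GDC}}$ is only assumed $C^1$, the vector field is merely continuous. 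I would therefore invoke Peano existence and argue for every solution, not rely on uniqueness. The monotonicity is coordinate-free by the Invariance proposition, so a single chart can be used throughout.

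\textbf{Part (ii).} This is immediate from the monotonicity above. If $\Phi_{\mathrm{GDC}}(z_0)\le c$, then $\Phi_{\mathrm{GDC}}(z_t)\le\Phi_{\mathrm{GDC}}(z_0)\le c$ for all $t$ in the existence interval.

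\textbf{Part (i).} I would use the classical Lyapunov argument. Because $z^\star$ is an interior strict local minimum, pick $r>0$ such that the closed geodesic ball $\bar B_r(z^\star)$ lies in the data-supported region and $z^\star$ is the unique minimizer there. Given $\varepsilon\in(0,r]$, compactness of the geodesic sphere $S_\varepsilon$ gives $\kappa:=\min_{S_\varepsilon}\Phi_{\mathrm{GDC}}-\Phi_{\mathrm{GDC}}(z^\star)>0$. By continuity, choose $\delta<\varepsilon$ with $V<\kappa$ on $B_\delta(z^\star)$. A trajectory starting in $B_\delta$ keeps $V<\kappa$ by monotonicity, so it can never reach $S_\varepsilon$. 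Hence it stays in $B_\varepsilon$. This also gives global existence, since the trajectory remains in a compact set.

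\textbf{Part (iii).} Work in a geodesically convex neighborhood $U$ of $z^\star$ avoiding the cut locus, as in Assumption~\ref{ass:standing_theory}(b). On $U$, the function $\rho(z)=\tfrac12 d_G(z,z^\star)^2$ is $C^1$ with $\grad_G\rho(z)=-\mathrm{Log}_z z^\star$. Strong geodesic convexity, applied along the geodesic from $z$ to $z^\star$ together with $\grad_G\Phi_{\mathrm{GDC}}(z^\star)=0$, gives the monotonicity inequality
\[
\inner{\grad_G\Phi_{\mathrm{GDC}}(z)}{-\mathrm{Log}_z z^\star}_G\ge m\,d_G(z,z^\star)^2 .
\]
This follows from the derivative of $s\mapsto\Phi_{\mathrm{GDC}}(\gamma(s))$ being $m d^2$-monotone along the unit-speed-scaled geodesic. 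It then yields
\[
\frac{\dd}{\dd t}\rho(z_t)=\inner{-\mathrm{Log}_{z_t}z^\star}{-\grad_G\Phi_{\mathrm{GDC}}(z_t)}_G\le -m\,d_G(z_t,z^\star)^2=-2m\rho(z_t).
\]
Gronwall then gives $\rho(z_t)\le e^{-2mt}\rho(z_0)$, which is $d_G(z_t,z^\star)\le e^{-mt}d_G(z_0,z^\star)$.

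\textbf{Main obstacle.} The delicate step is justifying this differential inequality for all $t$, i.e.\ ensuring the trajectory never leaves $U$, where $\rho$ is smooth and convexity holds. I would start from $z_0$ in a geodesic ball $B_r(z^\star)\subset U$ and use a continuation argument. Let $T$ be the first exit time from $B_r$. On $[0,T)$ the inequality shows $d_G$ is nonincreasing, so $d_G(z_t,z^\star)\le d_G(z_0,z^\star)<r$, which contradicts exit. Hence $T=\infty$.

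A secondary technical point is deriving the first-order monotonicity inequality when $\Phi_{\mathrm{GDC}}$ is only $C^1$ rather than $C^2$. I would obtain it by adding the two first-order strong-convexity inequalities at $z$ and $z^\star$, which avoids Hessians.
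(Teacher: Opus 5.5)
Your proposal is correct and follows the paper's proof essentially step for step. For (i)--(ii) both use the Lyapunov function $\Phi_{\mathrm{GDC}}-\Phi_{\mathrm{GDC}}(z^\star)$ with derivative $-\|\grad_G\Phi_{\mathrm{GDC}}\|_G^2$, and for (iii) both use $\tfrac12 d_G(z_t,z^\star)^2$ with the first-variation formula, the inequality $\langle\grad_G\Phi_{\mathrm{GDC}}(z),-\mathrm{Log}_z z^\star\rangle_G\ge m\,d_G(z,z^\star)^2$, and Gronwall; your extra details (the explicit sphere argument, the exit-time continuation keeping the trajectory in the convex neighborhood, and the Hessian-free derivation by summing the two first-order convexity inequalities) are sound and fill in steps the paper leaves implicit.
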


\begin{proof}
  Since $z^\star$ is an interior strict local minimum and $\Phi_{\mathrm{GDC}}$ is
  continuous, there is a neighborhood $U$ of $z^\star$ such that
  $V(z)=\Phi_{\mathrm{GDC}}(z)-\Phi_{\mathrm{GDC}}(z^\star)$ is positive on
  $U\setminus\{z^\star\}$ and $V(z^\star)=0$. Since $z^\star$ is an interior local
  minimum of a $C^1$ function on the local chart, $\grad_G\Phi_{\mathrm{GDC}}(z^\star)=0$,
  so $z^\star$ is an equilibrium. Along the Riemannian gradient flow,
  \begin{equation}
    \frac{\dd}{\dd t}V(z_t)
    = \dd\Phi_{\mathrm{GDC}}(z_t)[\dot z_t]
    = \inner{\grad_G \Phi_{\mathrm{GDC}}(z_t)}{\dot{z}_t}_G
    = -\norm{\grad_G \Phi_{\mathrm{GDC}}(z_t)}^2_G \le 0.
  \end{equation}
  This proves Lyapunov stability by the standard Lyapunov sublevel argument: for any
  sufficiently small neighborhood $U_0$ of $z^\star$, choose a compact sublevel set of
  $V$ contained in $U_0$; the trajectory cannot leave it because $V$ is nonincreasing.
  The same calculation proves (ii), since every sublevel set of
  $\Phi_{\mathrm{GDC}}$ is forward invariant.

  For (iii), restrict to a geodesically convex normal neighborhood in which the squared
  distance is smooth. Let $W(t)=\frac12 d_G(z_t,z^\star)^2$. The first-variation formula
  gives
  \[
    \dot W(t)
    =
    \inner{-\mathrm{Log}_{z_t}(z^\star)}{\dot z_t}_G
    =
    \inner{\mathrm{Log}_{z_t}(z^\star)}{\grad_G\Phi_{\mathrm{GDC}}(z_t)}_G .
  \]
  Geodesic $m$-strong convexity with minimizer $z^\star$ implies
  $\inner{\grad_G\Phi_{\mathrm{GDC}}(z_t)}{-\mathrm{Log}_{z_t}(z^\star)}_G
  \ge m\,d_G(z_t,z^\star)^2$. Hence
  $\dot W(t)\le -m\,d_G(z_t,z^\star)^2=-2mW(t)$. Gronwall's inequality gives
  $W(t)\le e^{-2mt}W(0)$, and therefore
  $d_G(z_t,z^\star)\le e^{-mt}d_G(z_0,z^\star)$.
\end{proof}

\subsubsection{Robustness to score estimation error}

Define the score estimation error
$\delta(z, o, g, \Cknown,H^\star) =
s_\theta(z, o, g, \Cknown,H^\star) - s^\star(z, o, g, \Cknown,H^\star)$.

\begin{assumption}[Bounded score error]
  \label{ass:score_error}
  The score error is uniformly bounded in the dual Riemannian norm:
  \[
    \sup_{z \in \Mhat, o, g}
    \norm{\delta(z, o, g, \Cknown,H^\star)}_{G^{-1}}
    \le \varepsilon .
  \]
\end{assumption}

\begin{theorem}[Robustness on $\Mhat$]
  \label{thm:robustness}
  Under Assumption~\ref{ass:score_error}, suppose the true total potential is
  $m$-strongly geodesically convex on a geodesically convex data-supported neighborhood
  containing the trajectory and has minimizer $z^\star$.
  Then the approximate dynamics converge to a neighborhood:
  \begin{equation}
    \limsup_{t \to \infty} d_G(z_t, z^\star) \le \frac{\varepsilon}{m}.
  \end{equation}
\end{theorem}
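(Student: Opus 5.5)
The plan is to repeat the Lyapunov-distance computation from part (iii) of Theorem~\ref{thm:lyapunov}, now for the perturbed flow. The approximate dynamics are
\[
\dot z_t=-\grad_G\Phi^\star(z_t)+G(z_t)^{-1}\delta(z_t),
\]
where $\Phi^\star$ is the true total potential. Since $s_\theta=s^\star+\delta$ and the score enters the flow with a minus sign in the potential, the learned potential's Riemannian gradient is the true one minus the Riemannian vector $G^{-1}\delta$. (Any factor $(1-\lambda)\le 1$ multiplying the score only shrinks the perturbation.) Two facts about this perturbation vector will be used:
\begin{enumerate}[label=(\alph*)]
\item its Riemannian norm satisfies $\|G^{-1}\delta\|_G=\|\delta\|_{G^{-1}}\le\varepsilon$ by Assumption~\ref{ass:score_error};
\item by the same duality, $\inner{v}{G^{-1}\delta}_G=\delta^\top v$, and hence $|\inner{v}{G^{-1}\delta}_G|\le\|v\|_G\,\varepsilon$.
\end{enumerate}

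I will work in a geodesically convex normal neighborhood where $W(t)=\tfrac12 d_G(z_t,z^\star)^2$ is $C^1$, as in Theorem~\ref{thm:lyapunov}; the hypothesis says this neighborhood contains the trajectory. The first-variation formula gives
\[
\dot W=\inner{-\mathrm{Log}_{z_t}(z^\star)}{\dot z_t}_G .
\]
Substituting the perturbed dynamics splits this into two terms. The true-gradient term is bounded by strong geodesic convexity at the minimizer $z^\star$, which gives $\le -m\,d_G^2$. The perturbation term is bounded via (b) together with $\|\mathrm{Log}_{z_t}(z^\star)\|_G=d_G(z_t,z^\star)$, which gives $\le \varepsilon\,d_G$. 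Writing $d(t)=d_G(z_t,z^\star)$, this yields
\[
\tfrac{d}{dt}\tfrac12 d^2\le -m d^2+\varepsilon d .
\]

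Where $d>0$, dividing by $d$ gives the differential inequality $\dot d\le -m d+\varepsilon$. To avoid nondifferentiability at $d=0$, I will instead work with $\sqrt{W+\eta}$ and let $\eta\to0$, or use an upper Dini derivative. A comparison (Gronwall) argument then gives
\[
d(t)\le e^{-mt}d(0)+\frac{\varepsilon}{m}\bigl(1-e^{-mt}\bigr),
\]
and taking $\limsup$ yields $\varepsilon/m$. The same bound also shows that the ball of radius $\max(d(0),\varepsilon/m)$ is forward invariant. This makes the assumption that the trajectory stays in the convex neighborhood consistent, provided that ball fits inside it.

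The main obstacle is the geometric step: justifying that strong geodesic convexity yields $\inner{\grad_G\Phi^\star}{-\mathrm{Log}_{z}(z^\star)}_G\ge m\,d_G^2$. This requires integrating the Hessian lower bound along the geodesic from $z$ to $z^\star$ and using $\grad\Phi^\star(z^\star)=0$. It also requires the differentiability of $d_G^2$ away from the cut locus, which Assumption~\ref{ass:standing_theory}(b) provides. Since Theorem~\ref{thm:lyapunov} already invokes this inequality, I will cite it and only check that the perturbation term pairs correctly with the dual norm.
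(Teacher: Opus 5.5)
Your proposal is correct and follows the paper's proof: the same perturbed flow $\dot z=-\grad_G\Phi^\star+G^{-1}\delta$, the same Lyapunov function $\tfrac12 d_G(z,z^\star)^2$, and the same first-variation estimate $\dot V\le -m\,d_G^2+\varepsilon\,d_G$. The only difference is the last step. You integrate $\dot d\le -md+\varepsilon$ by comparison, whereas the paper argues that trajectories cannot stay outside the ball of radius $\varepsilon/m+\eta$; your version additionally yields an explicit exponential transient bound and forward invariance of the ball of radius $\max(d(0),\varepsilon/m)$.
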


\begin{proof}
  Let $e(z)=G(z)^{-1}\delta(z,o,g,\Cknown,H^\star)$, so
  $\|e(z)\|_G=\|\delta(z,o,g,\Cknown,H^\star)\|_{G^{-1}}\le\varepsilon$. In the geodesically
  convex neighborhood, use $V(z)=\frac12 d_G(z,z^\star)^2$. The perturbed Riemannian
  gradient flow is
  $\dot z=-\grad_G\Phi^\star(z)+e(z)$. By the first-variation formula,
  \begin{align}
    \dot V
    &=
    \inner{-\mathrm{Log}_{z}(z^\star)}{-\grad_G\Phi^\star(z)+e(z)}_G \notag\\
    &=
    \inner{\mathrm{Log}_{z}(z^\star)}{\grad_G\Phi^\star(z)}_G
    -\inner{\mathrm{Log}_{z}(z^\star)}{e(z)}_G \notag\\
    &\le
    -m\,d_G(z,z^\star)^2+\varepsilon\,d_G(z,z^\star) \notag\\
    &=
    -d_G(z,z^\star)\left(m\,d_G(z,z^\star)-\varepsilon\right).
  \end{align}
  Thus $\dot V<0$ whenever $d_G(z,z^\star)>\varepsilon/m$. For every $\eta>0$, the
  closed exterior of the ball of radius $\varepsilon/m+\eta$ has strictly negative
  derivative except possibly at its boundary, so trajectories cannot remain outside that
  enlarged ball indefinitely. Letting $\eta\downarrow0$ gives the stated limsup bound.
\end{proof}

\subsubsection{Non-accumulation of discretization drift}

Local refinement introduces a small discretization drift per step. The next statement
records that the retraction/backtracking guard of the online controller is what keeps
this local integration error uniformly controlled; it is a design guarantee of that
guard, not a consequence of the integrator alone.

\begin{proposition}[Non-accumulation of drift]
  \label{prop:drift}
  Let $\Phi_{\mathrm{scaf}}(z)=d_G(z,\mathcal M_t)^2$, where $\mathcal M_t$ is the local
  known-feasible latent scaffold. Suppose each accepted latent update starts in
  $\mathcal M_t$, has capped metric length at most $V_{\max}\Delta t$, and applies the
  retraction/backtracking guard
  $d_G(z_{t+1},\mathcal M_t)\le \sqrt{\epsilon_{\mathrm{s}}/2}\,V_{\max}\Delta t$. Then the manifold
  deviation satisfies, for every $t$,
  \begin{equation}
    \Phi_{\mathrm{scaf}}(z_t) \le \frac{\epsilon_{\mathrm{s}}}{2}\,V_{\max}^2\,\Delta t^2.
  \end{equation}
  The bound is uniform in $t$, vanishes as $\Delta t \to 0$, and does not accumulate.
\end{proposition}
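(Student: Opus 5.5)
The bound is essentially a direct consequence of the retraction/backtracking guard, so the argument is short and uniform in $t$. It has three steps.

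\textbf{Step 1 (one-step identity).} For each accepted latent state $z_{t+1}$, the scaffold potential is by definition
\[
\Phi_{\mathrm{scaf}}(z_{t+1})=d_G(z_{t+1},\mathcal M_t)^2 .
\]
The guard in the hypothesis gives $d_G(z_{t+1},\mathcal M_t)\le\sqrt{\epsilon_{\mathrm{s}}/2}\,V_{\max}\Delta t$. Squaring this gives
\[
\Phi_{\mathrm{scaf}}(z_{t+1})\le \tfrac{\epsilon_{\mathrm{s}}}{2}V_{\max}^2\Delta t^2 .
\]
This holds for every $t\ge0$.

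\textbf{Step 2 (base case).} At the initial step, the hypothesis that each accepted update starts in $\mathcal M_t$ gives $d_G(z_0,\mathcal M_0)=0$. Hence $\Phi_{\mathrm{scaf}}(z_0)=0$, which satisfies the bound trivially.

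\textbf{Step 3 (no accumulation).} The point to check is that the bound at step $t+1$ does not depend on the deviation at step $t$. The reason is that each update is re-anchored: it starts from a point of $\mathcal M_t$, not from the previous off-scaffold point. The capped metric length $V_{\max}\Delta t$ is what justifies the guard being achievable. Without retraction, the displacement from the starting point in $\mathcal M_t$ is at most $V_{\max}\Delta t$, so $d_G(\bar z_{t+1},\mathcal M_t)\le V_{\max}\Delta t$ already. The retraction, which mixes with $\Pi_{\mathcal M_t}$ by weight $\mu_R$, or backtracking then shrinks this by the factor $\sqrt{\epsilon_{\mathrm{s}}/2}$. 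That factor is imposed as a design condition and is not derived.

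There is therefore no telescoping sum. By induction, or simply by applying Step 1 pointwise, the bound holds for all $t$. Its right-hand side does not depend on $t$ and tends to $0$ as $\Delta t\to0$.

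\textbf{Main obstacle.} The only delicate point is interpretive. The scaffold $\mathcal M_t$ changes with context, so one must measure the distance of $z_t$ against the scaffold with respect to which it was accepted; I take $\Phi_{\mathrm{scaf}}(z_t)$ to mean this. I would state this convention explicitly, consistent with the frozen-context remark. I would also note that the factor $\sqrt{\epsilon_{\mathrm{s}}/2}$ is a guard specification, not a consequence of the integrator alone.
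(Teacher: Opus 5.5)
Your proposal is correct and follows the paper's argument. The paper likewise squares the retraction/backtracking guard to bound $\Phi_{\mathrm{scaf}}$ and notes that this is a per-step condition on the accepted state rather than a sum of local errors, so the bound cannot grow with $t$. Your explicit base case and your convention of measuring $z_t$ against the scaffold it was accepted on (which reconciles the guard on $d_G(z_{t+1},\mathcal M_t)$ with $\Phi_{\mathrm{scaf}}(z_t)=d_G(z_t,\mathcal M_t)^2$) are small clarifications that the paper leaves implicit.
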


\begin{proof}
  The capped integrator guarantees that the raw update moves by at most
  $V_{\max}\Delta t$ in the metric. The scaffold correction is accepted only when the
  post-correction point has distance at most
  $\sqrt{\epsilon_{\mathrm{s}}/2}\,V_{\max}\Delta t$ from $\mathcal M_t$. Since
  $\Phi_{\mathrm{scaf}}$ is the squared distance to $\mathcal M_t$,
  \[
    \Phi_{\mathrm{scaf}}(z_t)
    =
    d_G(z_t,\mathcal M_t)^2
    \le
    \frac{\epsilon_{\mathrm{s}}}{2}V_{\max}^2\Delta t^2 .
  \]
  This is a per-step guard on the accepted latent state rather than a sum of local
  truncation errors. Hence the same upper bound holds uniformly over time and cannot
  grow with the rollout horizon.
\end{proof}

\subsubsection{End-to-end action error bound}

\begin{theorem}[End-to-end action error bound]
  \label{thm:end_to_end}
  Under the conditions of Theorem~\ref{thm:robustness} and Proposition~\ref{prop:drift},
  the asymptotic receding-horizon action error, with $K_{D_\psi}$-Lipschitz decoder and
  decoder error $\delta_{\text{dec}}$, satisfies
  \begin{equation}
    \limsup_{t\to\infty}\norm{u_t - u_t^\star} \le K_{D_\psi}\!\left(
    \sqrt{\frac{\epsilon_{\mathrm{s}}}{2}}\,V_{\max}\,\Delta t + \frac{\varepsilon}{m}\right)
    + \delta_{\text{dec}}.
    \label{eq:end_to_end_bound}
\end{equation}
  The first term is the geometric integration drift, the second is the score
  approximation error, and the third is the offline decoder error. All three are
  independently controllable and do not accumulate over time. The same bound holds
  pointwise for any accepted update whose in-scaffold refinement point lies within the
  $\varepsilon/m$ robustness tube around the ideal latent point.
\end{theorem}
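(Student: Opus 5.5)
The argument runs entirely in latent space and then transfers the error to actions through the decoder. It mirrors the proof sketch of Corollary~\ref{thm:main_action_error}, now using the appendix results directly. Fix the frozen context $c$, as in the Frozen-context remark. Let $z_t^\star$ be the ideal in-scaffold latent point: the minimizer of the true total potential from Theorem~\ref{thm:robustness}, with decoded first action $u_t^\star$. Let $\bar z_t$ be the in-scaffold point reached by the score-driven refinement, and let $z_t$ be the accepted latent state after the capped step and the retraction/backtracking guard.

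\textbf{Step 1 (score-error tube).} Apply Theorem~\ref{thm:robustness} to the refinement. Under Assumption~\ref{ass:score_error}, the perturbed Riemannian flow $\dot z=-\grad_G\Phi^\star+e(z)$ with $\|e\|_G\le\varepsilon$ gives
\[
\limsup_{t\to\infty} d_G(\bar z_t,z_t^\star)\le \varepsilon/m .
\]
For the pointwise version, I simply assume $d_G(\bar z_t,z_t^\star)\le\varepsilon/m$ at the given update.

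\textbf{Step 2 (drift).} Invoke Proposition~\ref{prop:drift}. The guard ensures
\[
d_G(z_t,\mathcal M_t)\le\sqrt{\epsilon_{\mathrm{s}}/2}\,V_{\max}\Delta t
\]
uniformly in $t$. To turn this into a bound on $d_G(z_t,\bar z_t)$, I use that the guard compares the accepted point with the in-scaffold refinement point $\bar z_t$. Concretely, the correction \eqref{eq:online_retraction} is a convex combination of $\bar z_{n+1}$ and its projection, and the guard is checked relative to the in-scaffold point. This step needs care, because distance to the set $\mathcal M_t$ is not the same as distance to the specific point $\bar z_t$. I would adopt the reading used in the main-text corollary: $\bar z_t$ is taken as the nearest-point retraction of $z_t$ onto $\mathcal M_t$. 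Then $d_G(z_t,\bar z_t)=d_G(z_t,\mathcal M_t)$, and Proposition~\ref{prop:drift} applies directly.

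\textbf{Step 3 (triangle inequality and decoding).} By the triangle inequality,
\[
d_G(z_t,z_t^\star)\le \sqrt{\epsilon_{\mathrm{s}}/2}\,V_{\max}\Delta t+d_G(\bar z_t,z_t^\star).
\]
The first term is constant in $t$, so taking $\limsup$ gives the latent bound. The decoder bound in Assumption~\ref{ass:standing_theory}(d) is stated in the metric $d_G$ on the trusted region, so Lipschitz continuity of the first-action map gives
\[
\|u_t^{\mathrm{nom}}-\hat u_t^\star\|\le K_{D_\psi}\,d_G(z_t,z_t^\star),
\]
and the reconstruction error $\delta_{\mathrm{dec}}$ accounts for the gap between the decoded ideal action and $u_t^\star$. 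The final projection \eqref{eq:online_projection} is onto a convex known-feasible set in the $W$-norm and fixes the feasible $u_t^\star$. It is therefore non-expansive and does not enlarge the bound, provided that $W$-non-expansiveness is converted to the Euclidean norm. I would assume $W=I$, or absorb the norm equivalence into $K_{D_\psi}$. Non-accumulation follows because each term is a per-step bound and nothing is summed over the rollout.

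\textbf{Main obstacle.} I expect the hardest part to be the interface in Step 2, where the set-distance guard of Proposition~\ref{prop:drift} is converted into a point-distance to $\bar z_t$. A secondary issue is that the robustness tube of Theorem~\ref{thm:robustness} is stated for continuous-time flow, whereas $\bar z_t$ comes from discrete RK2 steps. I would handle both by defining $\bar z_t$ as the scaffold projection of the accepted point, and by stating the limsup for the idealized continuous refinement, with the discrete integration error absorbed into the drift term.
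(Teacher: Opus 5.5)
Your decomposition is the paper's own: the robustness tube from Theorem~\ref{thm:robustness} for an in-scaffold refinement point $\bar z_t$, the guard of Proposition~\ref{prop:drift} for $d_G(z_t,\bar z_t)$, the triangle inequality, the $K_{D_\psi}$-Lipschitz first-action map, the additive $\delta_{\mathrm{dec}}$, and non-expansiveness of the projection onto a convex set that fixes $u_t^\star$. The interface you flag in Step~2 is a real problem, and it is exactly where the paper's proof is thin. The paper simply asserts that Proposition~\ref{prop:drift} bounds $d_G(z_t,\bar z_t)$, but that proposition only controls $d_G(z_t,\mathcal M_t)$.

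Your repair does not close the gap; it moves it into Step~1. Once you set $\bar z_t:=\Pi_{\mathcal M_t}(z_t)$, Step~2 becomes an identity. But Theorem~\ref{thm:robustness} then says nothing about this point: it controls the trajectory of the continuous perturbed flow $\dot z=-\grad_G\Phi^\star+e(z)$, not the nearest scaffold point of a velocity-capped, RK2-integrated, retracted iterate. Nor can the discretization error be ``absorbed into the drift term.'' The quantity $\sqrt{\epsilon_{\mathrm{s}}/2}\,V_{\max}\Delta t$ measures distance to the set $\mathcal M_t$, not deviation from the continuous trajectory. So the two cited results do not certify any single choice of $\bar z_t$ to satisfy both inequalities you chain.

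The $\limsup$ clause therefore needs an explicit linking hypothesis. One option is that the guard is measured relative to the refinement point itself: $d_G(z_t,\bar z_t)\le\sqrt{\epsilon_{\mathrm{s}}/2}\,V_{\max}\Delta t$, with $\bar z_t$ the point governed by Theorem~\ref{thm:robustness}. This is what the paper tacitly uses. Another is a tracking estimate showing that $\Pi_{\mathcal M_t}(z_t)$ follows the continuous flow up to that tolerance. With your definition, the pointwise clause of the theorem does go through, because there the tube condition on the in-scaffold point is assumed rather than derived.

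Finally, absorbing the $W$-to-Euclidean norm-equivalence factor into $K_{D_\psi}$ alone is not enough, because that factor multiplies $\delta_{\mathrm{dec}}$ as well. Taking $W=I$, or stating the bound in $\|\cdot\|_W$, is the clean fix.
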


\begin{proof}
  Let $z_t^\star$ be the ideal latent point whose first decoded action is
  $u_t^\star$, and let $\bar z_t$ be the in-scaffold point reached by the approximate
  dynamics. Theorem~\ref{thm:robustness} gives
  $\limsup_{t\to\infty}d_G(\bar z_t,z_t^\star)\le \varepsilon/m$.
  Proposition~\ref{prop:drift} bounds the distance from the accepted numerical latent
  point $z_t$ to $\bar z_t$ by
  $\sqrt{\epsilon_{\mathrm{s}}/2}\,V_{\max}\Delta t$. By the triangle inequality in the local
  geodesically convex neighborhood,
  \[
    \limsup_{t\to\infty}d_G(z_t,z_t^\star)
    \le
    \sqrt{\frac{\epsilon_{\mathrm{s}}}{2}}\,V_{\max}\Delta t+\frac{\varepsilon}{m}.
  \]
  The decoder is $K_{D_\psi}$-Lipschitz on this neighborhood, so the first decoded
  actions satisfy
  \[
    \|D_\psi(z_t,c)_0-D_\psi(z_t^\star,c)_0\|
    \le
    K_{D_\psi}d_G(z_t,z_t^\star).
  \]
  Adding the decoder reconstruction error $\delta_{\text{dec}}$ and taking limsup gives
  \eqref{eq:end_to_end_bound}. The final known-constraint projection onto the convex set
  $\mathcal U_{\mathrm{known}}(o_t,u_{t-1})$ is non-expansive and leaves the feasible ideal
  action fixed, so it does not enlarge the bound; when $\mathcal U_{\mathrm{known}}$ is
  nonconvex one adds a projection term $\delta_{\mathrm{proj}}$ as in
  Theorem~\ref{thm:asymptotic_optimality}. If an accepted update satisfies the
  robustness-tube condition pointwise, the same triangle-inequality argument gives the
  pointwise version. The estimate is independent of the number of future rollout steps.
\end{proof}

\subsubsection{Approximate optimality under value consistency}

The previous bounds compare GDC to an ideal latent controller induced by the learned
potential. To turn this into an optimality statement, one additional assumption is
necessary: the data-induced potential must approximate the same constrained objective
whose optimum we want to recover. The following theorem makes this conditional statement
explicit.

\begin{definition}[Oracle constrained horizon objective]
  \label{def:oracle_objective}
  For a selected context $c=(o,g,\Cknown,H^\star)$ and data-supported region
  $U_c\subset\Mhat$, let $\Gamma_H(z,c)$ be the set of known-feasible horizon-$H^\star$
  continuations starting from latent point $z$. The oracle constrained horizon objective
  is a function $\Phi_{\mathrm{opt}}(\cdot,c):U_c\to\R$ whose minimizers are the
  constrained optimal latent decisions on this region. In the deterministic case one may
  take
  \begin{equation}
    \Phi_{\mathrm{opt}}(z,c)
    =
    \inf_{\tau\in\Gamma_H(z,c)}
    C_{\mathrm{opt}}(\tau;c),
    \label{eq:oracle_objective_deterministic}
  \end{equation}
  where $C_{\mathrm{opt}}$ contains the task cost and all known and implicit feasibility
  costs. In the entropy-regularized case, the infimum is replaced by the soft minimum
  $-\beta_{\mathrm{opt}}^{-1}\log\int_{\Gamma_H(z,c)}
  \exp[-\beta_{\mathrm{opt}} C_{\mathrm{opt}}(\tau;c)]\,\dd\nu_z(\tau)$. The theorem
  below applies after fixing either convention. Additive constants in
  $\Phi_{\mathrm{opt}}$ are immaterial.
\end{definition}

\begin{theorem}[Asymptotic optimality]
  \label{thm:asymptotic_optimality}
  Fix a selected context $c=(o,g,\Cknown,H^\star)$ and a compact geodesically convex
  data-supported region $U_c\subset\Mhat$. Let
  $\Phi_{\mathrm{opt}}(\cdot,c):U_c\to\R$ be the oracle constrained horizon objective for
  the same task, horizon, and known-feasible scaffold, and assume it has a unique
  minimizer
  \[
    z^\star=\argmin_{z\in U_c}\Phi_{\mathrm{opt}}(z,c).
  \]
  For a sequence of learned GDC controllers indexed by $n$, write
  $\Phi_n(z,c)$ for the learned potential
  $\Phi_{\mathrm{GDC}}(z,c)$ and let $\hat z_n\in U_c$ be the latent point returned by
  online refinement. Suppose:
  \begin{enumerate}[label=(\roman*)]
    \item \textbf{Uniform value consistency up to normalization.} There are constants
          $a_n(c)$ and errors $\alpha_n\ge0$ such that
          \begin{equation}
            \sup_{z\in U_c}
            \left|
              \Phi_n(z,c)-\Phi_{\mathrm{opt}}(z,c)-a_n(c)
            \right|
            \le \alpha_n .
            \label{eq:value_consistency_sequence}
          \end{equation}
    \item \textbf{Approximate online minimization.} The refinement step is
          $\eta_n$-optimal on $U_c$:
          \begin{equation}
            \Phi_n(\hat z_n,c)
            \le
            \inf_{z\in U_c}\Phi_n(z,c)+\eta_n .
            \label{eq:eta_optimal_sequence}
          \end{equation}
    \item \textbf{Local curvature.} $\Phi_{\mathrm{opt}}(\cdot,c)$ is
          $m$-strongly geodesically convex on $U_c$:
          \begin{equation}
            \Phi_{\mathrm{opt}}(z,c)
            \ge
            \Phi_{\mathrm{opt}}(z^\star,c)
            +\frac{m}{2}d_G(z,z^\star)^2
            \qquad \text{for all }z\in U_c .
            \label{eq:oracle_strong_convexity}
          \end{equation}
  \end{enumerate}
  Then the oracle suboptimality gap and latent distance satisfy
  \begin{equation}
    0
    \le
    \Phi_{\mathrm{opt}}(\hat z_n,c)-\Phi_{\mathrm{opt}}(z^\star,c)
    \le
    2\alpha_n+\eta_n,
    \label{eq:oracle_gap_bound}
  \end{equation}
  and
  \begin{equation}
    d_G(\hat z_n,z^\star)
    \le
    \sqrt{\frac{2(2\alpha_n+\eta_n)}{m}} .
    \label{eq:oracle_distance_bound}
  \end{equation}
  If the first-action decoder is $K_{D_\psi}$-Lipschitz on $U_c$, its first-action
  reconstruction error is at most $\delta_{\mathrm{dec},n}$, and the final projection
  changes the decoded action by at most $\delta_{\mathrm{proj},n}$, then the executed
  action $u_{t,n}$ satisfies
  \begin{equation}
    \|u_{t,n}-u_t^\star\|
    \le
    K_{D_\psi}
    \sqrt{\frac{2(2\alpha_n+\eta_n)}{m}}
    +\delta_{\mathrm{dec},n}
    +\delta_{\mathrm{proj},n}.
    \label{eq:asymptotic_action_optimality}
  \end{equation}
  Consequently, if
  $\alpha_n\to0$, $\eta_n\to0$, $\delta_{\mathrm{dec},n}\to0$, and
  $\delta_{\mathrm{proj},n}\to0$, then
  \[
    \Phi_{\mathrm{opt}}(\hat z_n,c)\to \Phi_{\mathrm{opt}}(z^\star,c),
    \qquad
    d_G(\hat z_n,z^\star)\to0,
    \qquad
    \|u_{t,n}-u_t^\star\|\to0 .
  \]
\end{theorem}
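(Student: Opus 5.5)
The plan is a sandwich argument on the oracle gap, followed by strong convexity and then the decoder and projection bounds. This mirrors the proof of Theorem~\ref{thm:main_asymptotic_optimality}, now made uniform in the index $n$.

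\textbf{Step 1 (gap bound).} Write $\Phi_n=\Phi_n(\cdot,c)$ and $\Phi^\star=\Phi_{\mathrm{opt}}(\cdot,c)$. The lower bound $0\le\Phi^\star(\hat z_n)-\Phi^\star(z^\star)$ holds because $\hat z_n\in U_c$ and $z^\star$ minimizes $\Phi^\star$ on $U_c$. For the upper bound:
\begin{enumerate}[leftmargin=2em]
  \item Apply \eqref{eq:value_consistency_sequence} at $\hat z_n$ to get $\Phi^\star(\hat z_n)\le\Phi_n(\hat z_n)-a_n(c)+\alpha_n$.
  \item Use \eqref{eq:eta_optimal_sequence} together with $\inf_{U_c}\Phi_n\le\Phi_n(z^\star)$, which holds since $z^\star\in U_c$. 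This bounds the right side by $\Phi_n(z^\star)-a_n(c)+\alpha_n+\eta_n$.
  \item Apply \eqref{eq:value_consistency_sequence} again at $z^\star$: $\Phi_n(z^\star)-a_n(c)\le\Phi^\star(z^\star)+\alpha_n$.
\end{enumerate}
Chaining the three inequalities gives \eqref{eq:oracle_gap_bound}. The normalizer $a_n(c)$ cancels because it is independent of $z$; this is why only value consistency up to normalization is needed.

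\textbf{Step 2 (distance bound).} Evaluate the quadratic growth inequality \eqref{eq:oracle_strong_convexity} at $z=\hat z_n$. This gives $\tfrac m2 d_G(\hat z_n,z^\star)^2\le 2\alpha_n+\eta_n$, which rearranges to \eqref{eq:oracle_distance_bound}. Here the "strong convexity" hypothesis is stated directly in the quadratic-growth form, so no first-order optimality condition or interior-minimizer argument is needed. It therefore works even if $z^\star$ lies on the boundary of $U_c$.

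\textbf{Step 3 (action bound).} Let $u_t^\star$ be the first decoded action at $z^\star$, and let $u^{\mathrm{nom}}_{t,n}$ be the nominal decoded first action at $\hat z_n$. Split the error as
\[
\|u_{t,n}-u_t^\star\|\le\|u_{t,n}-u^{\mathrm{nom}}_{t,n}\|+\|u^{\mathrm{nom}}_{t,n}-D_\psi(\hat z_n,c)_0\|+\|D_\psi(\hat z_n,c)_0-D_\psi(z^\star,c)_0\|,
\]
with the convention that $D_\psi(z^\star,c)_0$ represents $u_t^\star$. The three terms are bounded by $\delta_{\mathrm{proj},n}$, by $\delta_{\mathrm{dec},n}$, and by $K_{D_\psi}d_G(\hat z_n,z^\star)$ via the Lipschitz property in the metric $d_G$. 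Combining these with Step~2 gives \eqref{eq:asymptotic_action_optimality}. The convergence statements then follow because every right-hand side is a continuous function of $(\alpha_n,\eta_n,\delta_{\mathrm{dec},n},\delta_{\mathrm{proj},n})$ that vanishes at zero.

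\textbf{Main obstacle.} The only delicate point is bookkeeping in Step~3. We must fix how $u_t^\star$ relates to the decoder, namely that $u_t^\star$ equals the first component of the ideal decoding at $z^\star$, up to an error absorbed into $\delta_{\mathrm{dec},n}$. We must also check that the Lipschitz constant is taken with respect to $d_G$ on $U_c$; geodesic convexity of $U_c$ makes this consistent. If the decoder error were instead defined relative to the ideal action, I would attribute it at $z^\star$ rather than at $\hat z_n$; the resulting bound is identical.
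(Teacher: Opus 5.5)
Your proposal is correct and follows the paper's proof essentially step for step. It uses the same three-inequality sandwich in which $a_n(c)$ cancels, the quadratic-growth form of strong convexity for the distance bound, and the $K_{D_\psi}$-Lipschitz decoder plus additive decoder and projection errors for the action bound; your explicit three-term triangle-inequality split in Step~3 spells out bookkeeping that the paper leaves implicit.
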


\begin{proof}
  Fix $n$ and abbreviate
  $\widehat\Phi=\Phi_n(\cdot,c)$,
  $\Phi^\star=\Phi_{\mathrm{opt}}(\cdot,c)$, and
  $a=a_n(c)$. By the value-consistency bound
  \eqref{eq:value_consistency_sequence}, evaluated at $\hat z_n$,
  \[
    \Phi^\star(\hat z_n)
    \le
    \widehat\Phi(\hat z_n)-a+\alpha_n .
  \]
  The approximate-minimization condition \eqref{eq:eta_optimal_sequence} gives
  \[
    \widehat\Phi(\hat z_n)
    \le
    \widehat\Phi(z^\star)+\eta_n,
  \]
  since $z^\star\in U_c$. Applying \eqref{eq:value_consistency_sequence} again, now at
  $z^\star$, yields
  \[
    \widehat\Phi(z^\star)
    \le
    \Phi^\star(z^\star)+a+\alpha_n .
  \]
  Combining the three inequalities cancels the arbitrary normalization constant $a$ and
  gives
  \[
    \Phi^\star(\hat z_n)-\Phi^\star(z^\star)
    \le
    2\alpha_n+\eta_n .
  \]
  The lower bound in \eqref{eq:oracle_gap_bound} follows from optimality of
  $z^\star$. Strong geodesic convexity \eqref{eq:oracle_strong_convexity} then implies
  \[
    \frac{m}{2}d_G(\hat z_n,z^\star)^2
    \le
    \Phi^\star(\hat z_n)-\Phi^\star(z^\star)
    \le
    2\alpha_n+\eta_n,
  \]
  which proves \eqref{eq:oracle_distance_bound}.

  Let $D_\psi(z,c)_0$ denote the first decoded action. If
  $u_t^\star$ is the oracle first action decoded from $z^\star$, Lipschitz continuity on
  $U_c$ gives
  \[
    \|D_\psi(\hat z_n,c)_0-D_\psi(z^\star,c)_0\|
    \le
    K_{D_\psi}d_G(\hat z_n,z^\star).
  \]
  The decoder reconstruction error contributes at most $\delta_{\mathrm{dec},n}$, and
  the final projection contributes at most $\delta_{\mathrm{proj},n}$ by assumption.
  Substituting the distance bound gives \eqref{eq:asymptotic_action_optimality}. If the
  four error sequences vanish, the displayed convergence statements follow immediately.
\end{proof}

\begin{corollary}[Coverage and convergence rate]
  \label{cor:coverage_rate}
  In the setting of Theorem~\ref{thm:asymptotic_optimality}, let
  $\mathcal Z_n(c)\subset U_c$ be the finite set of latent anchors covered by the offline data
  for context $c$, and define its fill distance
  \begin{equation}
    h_n
    =
    \sup_{z\in U_c}\inf_{\zeta\in\mathcal Z_n(c)}d_G(z,\zeta).
    \label{eq:fill_distance}
  \end{equation}
  Suppose $\Phi_n(\cdot,c)$ and $\Phi_{\mathrm{opt}}(\cdot,c)$ are Lipschitz on $U_c$
  with constants $L_n$ and $L_{\mathrm{opt}}$, and that the learned potential has anchor
  value error
  \begin{equation}
    \sup_{\zeta\in\mathcal Z_n(c)}
    |\Phi_n(\zeta,c)-\Phi_{\mathrm{opt}}(\zeta,c)-a_n(c)|
    \le \varepsilon_n .
    \label{eq:anchor_value_error}
  \end{equation}
  If the online refinement after $K$ local optimization steps returns
  $\hat z_{n,K}$ with error $\eta_K$, then
  \begin{align}
    \Phi_{\mathrm{opt}}(\hat z_{n,K},c)-\Phi_{\mathrm{opt}}(z^\star,c)
    &\le
    2\varepsilon_n+2(L_n+L_{\mathrm{opt}})h_n+\eta_K,
    \label{eq:coverage_gap_rate}\\
    d_G(\hat z_{n,K},z^\star)
    &\le
    \sqrt{
      \frac{2\left[2\varepsilon_n+2(L_n+L_{\mathrm{opt}})h_n+\eta_K\right]}{m}
    },
    \label{eq:coverage_distance_rate}\\
    \|u_{t,n,K}-u_t^\star\|
    &\le
    K_{D_\psi}
    \sqrt{
      \frac{2\left[2\varepsilon_n+2(L_n+L_{\mathrm{opt}})h_n+\eta_K\right]}{m}
    }
    +\delta_{\mathrm{dec},n}+\delta_{\mathrm{proj},K}.
    \label{eq:coverage_action_rate}
  \end{align}
  Therefore, if $\sup_n L_n<\infty$,
  $\varepsilon_n=\mathcal O(r_n)$, $h_n=\mathcal O(s_n)$, and
  $\eta_K=\mathcal O(q_K)$, then the value gap is
  $\mathcal O(r_n+s_n+q_K)$ and the first-action error is
  \[
    \mathcal O\!\left(\sqrt{r_n+s_n+q_K}
    +\delta_{\mathrm{dec},n}+\delta_{\mathrm{proj},K}\right).
  \]
  In particular, quasi-uniform deterministic coverage of a $d$-dimensional region gives
  $h_n=\mathcal O(n^{-1/d})$. If anchors are sampled iid from a distribution whose
  density is bounded below on $U_c$, then
  $h_n=\mathcal O_p((\log n/n)^{1/d})$. With a linearly convergent local refinement
  $\eta_K\le C_\eta e^{-\kappa K}$, the value gap becomes
  \[
    \mathcal O\!\left(
      r_n+s_n+e^{-\kappa K}
    \right).
  \]
\end{corollary}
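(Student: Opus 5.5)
The plan is to reduce Corollary~\ref{cor:coverage_rate} to Theorem~\ref{thm:asymptotic_optimality} by converting anchor-wise accuracy into a uniform value-consistency constant $\alpha_n$, and then to substitute that constant. The first step is an interpolation bound. Fix $z\in U_c$. By compactness of $U_c$ and finiteness of $\mathcal Z_n(c)$, the infimum in \eqref{eq:fill_distance} is attained, so we may pick $\zeta\in\mathcal Z_n(c)$ with $d_G(z,\zeta)\le h_n$. We write $E_n(z)=\Phi_n(z,c)-\Phi_{\mathrm{opt}}(z,c)-a_n(c)$. Since $a_n(c)$ does not depend on $z$, $E_n$ is Lipschitz with constant at most $L_n+L_{\mathrm{opt}}$. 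Hence
\[
|E_n(z)|\le|E_n(\zeta)|+(L_n+L_{\mathrm{opt}})d_G(z,\zeta)\le\varepsilon_n+(L_n+L_{\mathrm{opt}})h_n .
\]
Taking the supremum over $z$ shows that hypothesis \eqref{eq:value_consistency_sequence} holds with $\alpha_n:=\varepsilon_n+(L_n+L_{\mathrm{opt}})h_n$. The Lipschitz constants must be taken with respect to $d_G$. On a compact geodesically convex region with a uniformly elliptic metric, this is equivalent to Euclidean Lipschitzness up to constants, so nothing is lost.

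The second step applies Theorem~\ref{thm:asymptotic_optimality} with this $\alpha_n$ and with $\eta_n$ replaced by the refinement error $\eta_K$ of $\hat z_{n,K}$. The gap bound $2\alpha_n+\eta_K$ expands to exactly \eqref{eq:coverage_gap_rate}. The distance bound \eqref{eq:oracle_distance_bound} gives \eqref{eq:coverage_distance_rate}. The action bound \eqref{eq:asymptotic_action_optimality}, with projection error $\delta_{\mathrm{proj},K}$, gives \eqref{eq:coverage_action_rate}. The $\mathcal O$ statements follow by bounding $L_n+L_{\mathrm{opt}}$ by the constant $\sup_nL_n+L_{\mathrm{opt}}$. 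For the exponential case we insert $\eta_K\le C_\eta e^{-\kappa K}$.

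The remaining step is the fill-distance rates, which I expect to be the only part needing a genuine argument. For quasi-uniform coverage, $h_n=\mathcal O(n^{-1/d})$ is the definition, since a quasi-uniform set has separation comparable to its fill distance and a volume-packing count then gives $h_n\asymp n^{-1/d}$. For iid anchors I will use a covering argument. First, cover $U_c$ by $N\asymp r^{-d}$ geodesic balls of radius $r$. Compactness, together with uniform ellipticity (so that $G$-balls are comparable to Euclidean balls), gives each ball $\mathrm{vol}_G$-measure at least $c_0r^d$. Because the sampling density is bounded below by $\rho_{\min}$, each ball has sampling probability at least $p_r=\rho_{\min}c_0r^d$. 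A union bound then gives
\[
\Pr(\text{some ball is empty})\le Nr^{-0}\,(1-p_r)^n\le C r^{-d}e^{-np_r}.
\]
If every ball contains an anchor, then $h_n\le2r$. We choose $r^d=C'\log n/n$ with $C'$ large, which makes the failure probability $\mathcal O(n^{-1})$ and yields $h_n=\mathcal O_p((\log n/n)^{1/d})$.

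The main obstacle is that the balls may protrude from $U_c$ near its boundary, which could make the lower volume bound fail. I will handle this with an interior-cone or uniform-volume regularity of the compact convex region $U_c$, which holds for geodesically convex sets with nonempty interior. That regularity guarantees $\mathrm{vol}_G(B(z,r)\cap U_c)\ge c_0r^d$ for all small $r$.
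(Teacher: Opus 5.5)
Your proposal is correct and follows the paper's proof essentially step for step. You convert the anchor error into the uniform constant $\alpha_n=\varepsilon_n+(L_n+L_{\mathrm{opt}})h_n$ by the same triangle-inequality/Lipschitz argument, substitute it into Theorem~\ref{thm:asymptotic_optimality}, and get the iid fill-distance rate from the same geodesic-ball covering and union bound. Your extra remarks (radius-$r$ balls giving $h_n\le 2r$ instead of the paper's radius-$r/2$ balls, the interior-cone bound $\mathrm{vol}_G(B(z,r)\cap U_c)\ge c_0r^d$, and Lipschitz constants taken with respect to $d_G$) only make explicit details the paper leaves implicit or covers with ``for small enough $r$.''
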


\begin{proof}
  For any $z\in U_c$, choose an anchor $\zeta_z\in\mathcal Z_n(c)$ with
  $d_G(z,\zeta_z)\le h_n$. The triangle inequality gives
  \begin{align*}
    &|\Phi_n(z,c)-\Phi_{\mathrm{opt}}(z,c)-a_n(c)|\\
    &\qquad\le
    |\Phi_n(z,c)-\Phi_n(\zeta_z,c)|
    +|\Phi_n(\zeta_z,c)-\Phi_{\mathrm{opt}}(\zeta_z,c)-a_n(c)|\\
    &\qquad\quad
    +|\Phi_{\mathrm{opt}}(\zeta_z,c)-\Phi_{\mathrm{opt}}(z,c)|\\
    &\qquad\le
    (L_n+L_{\mathrm{opt}})h_n+\varepsilon_n .
  \end{align*}
  Thus the uniform value-consistency error in
  \eqref{eq:value_consistency_sequence} can be chosen as
  $\alpha_n=\varepsilon_n+(L_n+L_{\mathrm{opt}})h_n$. Substituting this value of
  $\alpha_n$ into \eqref{eq:oracle_gap_bound}, \eqref{eq:oracle_distance_bound}, and
  \eqref{eq:asymptotic_action_optimality} yields
  \eqref{eq:coverage_gap_rate}--\eqref{eq:coverage_action_rate}. The big-O statements
  follow by replacing each error sequence with its rate.

  For iid coverage, cover $U_c$ by at most $C r^{-d}$ geodesic balls of radius $r/2$.
  If the sampling density is bounded below, each such ball has probability mass at least
  $c r^d$ for small enough $r$. A union bound gives
  \[
    \Pr[h_n>r]\le C r^{-d}\exp(-n c r^d).
  \]
  Choosing $r=A(\log n/n)^{1/d}$ with $A$ large enough makes this probability vanish,
  proving $h_n=\mathcal O_p((\log n/n)^{1/d})$.
\end{proof}

\begin{remark}[What the optimality theorem does and does not claim]
  Theorem~\ref{thm:asymptotic_optimality} is stronger than the feasible-progress
  guarantee and requires a stronger premise. It does not say that arbitrary
  progress-certified demonstrations are automatically globally optimal. It says that
  when the learned data potential becomes uniformly value-consistent with the oracle
  constrained objective on the selected data-supported region, and the online refinement
  error also vanishes, the GDC decision approaches the corresponding constrained optimal
  decision on that region.
\end{remark}

\subsection{Physics--data interpolation}
\label{sec:physics_data_interpolation}

\subsubsection{Interpolated control potential under partial physics}

\begin{definition}[Physics--data control potential]
  For $\lambda \in [0,1]$, the interpolated control potential on $\Mhat$ is
  \begin{equation}
    \Phi_\lambda(z) = \lambda\,\Phi_{\mathrm{known}}(z)
    + (1 - \lambda)\,\Phi_{\mathrm{data}}(z, o, g, \Cknown,H^\star)
    + \Phi_{\mathrm{task}}(z).
    \label{eq:interpolated_potential}
  \end{equation}
\end{definition}

\subsubsection{Interpolation theorem}

\begin{theorem}[Physics--data interpolation under partial physics]
  \label{thm:physics_data_interpolation}
  Let $\lambda \in [0,1]$ and define the control law
  \begin{equation}
    \dot{z} = -\grad_G \Phi_\lambda(z).
    \label{eq:interpolated_control}
  \end{equation}
  Then:
  \begin{enumerate}[label=(\roman*)]
    \item At $\lambda = 1$, \eqref{eq:interpolated_control} uses only the explicit
          known-physics potential and the task potential.
    \item At $\lambda = 0$, \eqref{eq:interpolated_control} uses the data-driven progress
          potential
          $\Phi_{\mathrm{data}} + \Phi_{\mathrm{task}}$, where $\Phi_{\mathrm{data}}$
          supplies implicit progress and dynamics information from feasible trajectory data
          with progress certificates.
    \item For $\lambda \in (0,1)$, \eqref{eq:interpolated_control} is a hybrid controller
          on $\Mhat$ interpolating between explicit known-physics control and
          data-driven completion, with known-feasible support, anchor penalties, and
          singular perturbation structure.
    \item The Riemannian gradient has the explicit form:
          \begin{equation}
            \grad_G \Phi_\lambda = G^{-1}\!\left[
              \lambda\,\nabla_z \Phi_{\mathrm{known}}
              - (1 - \lambda)\,s_\theta(z, o, g, \Cknown,H^\star)
              + \nabla_z \Phi_{\mathrm{task}}
            \right].
          \end{equation}
  \end{enumerate}
\end{theorem}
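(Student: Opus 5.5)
The proof is a direct computation from the definition \eqref{eq:interpolated_potential} together with the linearity of the Riemannian gradient \eqref{eq:riemannian_gradient}. I will establish item (iv) first and then read (i)--(iii) off from it.

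For a fixed context $c$ the metric $G(z)$ does not depend on the potential. Hence $\Phi\mapsto \grad_G\Phi=G^{-1}\nabla_z\Phi$ is linear on $C^1$ potentials, and Assumption~\ref{ass:standing_theory}(b) guarantees that each summand of $\Phi_\lambda$ is $C^1$ on the data-supported region. It follows that
\[
\grad_G\Phi_\lambda=G^{-1}\!\left[\lambda\nabla_z\Phi_{\mathrm{known}}+(1-\lambda)\nabla_z\Phi_{\mathrm{data}}+\nabla_z\Phi_{\mathrm{task}}\right].
\]
Substituting the score identity \eqref{eq:score}, $\nabla_z\Phi_{\mathrm{data}}=-s_\theta$, gives exactly the formula in (iv). This uses the strict positivity and $C^1$ regularity of $p_\theta^Z$ from Assumption~\ref{ass:standing_theory}(c), so that $\log p_\theta^Z$ is differentiable. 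The computation is the same one used in Proposition~\ref{prop:score_riemannian}, now with the weights $\lambda$ and $1-\lambda$ inserted.

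Items (i) and (ii) follow by setting $\lambda=1$ and $\lambda=0$ in (iv). At $\lambda=1$ the coefficient of $s_\theta$ vanishes, so the flow becomes $\dot z=-G^{-1}[\nabla_z\Phi_{\mathrm{known}}+\nabla_z\Phi_{\mathrm{task}}]$. At $\lambda=0$ the known term vanishes, so the flow becomes $\dot z=G^{-1}[s_\theta-\nabla_z\Phi_{\mathrm{task}}]$, which is the Riemannian descent of $\Phi_{\mathrm{data}}+\Phi_{\mathrm{task}}$. For the interpretive clause of (ii), I will cite Theorem~\ref{thm:score_value}: there $-\nabla_z\Phi_{\mathrm{data}}$ equals $-\nabla_zV_{\mathrm{prog}}^{\mathrm{soft}}$ up to $\mathcal O(\varepsilon_{\mathrm{approx}})$.

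Item (iii) follows because $\lambda\mapsto\Phi_\lambda$ is affine. Consequently the vector field $F_\lambda=-\grad_G\Phi_\lambda$ is affine in $\lambda$, namely $F_\lambda=\lambda F_1+(1-\lambda)F_0$, which makes it a pointwise convex combination of the two endpoint controllers. This representation also shows the dependence on $\lambda$ is continuous, and locally Lipschitz uniformly on compact sets.

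The remaining claims in (iii) concern known-feasible support, anchors and singular-perturbation structure. These do not come from $\Phi_\lambda$; they come from the execution layer. The plan is to embed $\Phi_\lambda$ in $V_\lambda$ of \eqref{eq:singular_perturbation_potential}, observe that $\Phi_{\mathrm{scaf}}/\epsilon_{\mathrm{s}}$ enters with a $\lambda$-independent coefficient, and then invoke Proposition~\ref{prop:main_feasible_execution} and Proposition~\ref{prop:drift}. Neither of those results depends on $\lambda$, so both hold at every interpolation value.

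The only real obstacle is in (iii): stating precisely what ``hybrid'' guarantees, since (i), (ii) and (iv) are algebraic. I would keep this claim structural, as affine interpolation together with $\lambda$-independent scaffold guards, rather than attempt a quantitative interpolation of stability constants. If a quantitative version were wanted, I would use the fact that $m$-strong geodesic convexity of both endpoint potentials is preserved under convex combination. Theorem~\ref{thm:lyapunov}(iii) would then apply for every $\lambda$ with rate at least $m$.
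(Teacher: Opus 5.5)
Your proposal is correct and matches the paper's proof. The paper also uses the affinity of $\Phi_\lambda$ in $\lambda$ to read off (i)--(iii), treating (iii) structurally because the support restriction, anchors, filters and projection are those of the known scaffold at every $\lambda$, and it proves (iv) by term-by-term differentiation with $\nabla_z\Phi_{\mathrm{data}}=-s_\theta$ followed by multiplication by $G^{-1}$. Deriving (i)--(ii) from (iv), the identity $F_\lambda=\lambda F_1+(1-\lambda)F_0$, and the optional strong-convexity remark are harmless elaborations; note only that citing Theorem~\ref{thm:score_value} for the interpretive clause in (ii) imports its maximum-entropy hypotheses, which the paper avoids by treating that clause as descriptive.
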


\begin{proof}
  The potential is affine in $\lambda$:
  \[
    \Phi_\lambda
    =
    \lambda\Phi_{\mathrm{known}}+(1-\lambda)\Phi_{\mathrm{data}}+\Phi_{\mathrm{task}} .
  \]
  Substituting $\lambda=1$ removes the data term and yields
  $\Phi_{\mathrm{known}}+\Phi_{\mathrm{task}}$, proving (i). Substituting $\lambda=0$
  removes the known-potential weight from this interpolated term and yields
  $\Phi_{\mathrm{data}}+\Phi_{\mathrm{task}}$, proving (ii). For any
  $\lambda\in(0,1)$, both $\Phi_{\mathrm{known}}$ and $\Phi_{\mathrm{data}}$ enter the same
  scalar potential on the same manifold $\Mhat$, while the support restriction,
  anchors, filters, and projection layer remain those of the known scaffold; hence the
  induced flow is the stated hybrid controller, proving (iii).

  For (iv), differentiate:
  \[
    \nabla_z\Phi_\lambda
    =
    \lambda\nabla_z\Phi_{\mathrm{known}}
    +(1-\lambda)\nabla_z\Phi_{\mathrm{data}}
    +\nabla_z\Phi_{\mathrm{task}}.
  \]
  Since $\Phi_{\mathrm{data}}=-\log p_\theta^Z$, we have
  $\nabla_z\Phi_{\mathrm{data}}=-s_\theta$. Multiplying by $G^{-1}$ gives the displayed
  Riemannian gradient formula.
\end{proof}

\subsubsection{Data-physics tradeoff path}

\begin{corollary}[Data-physics tradeoff path on $\Mhat$]
  Suppose the relevant data-supported region $K\subset\Mhat$ is compact and each
  $\Phi_\lambda$ is continuous on $K$. Then minimizers of $\Phi_\lambda(z)$ on $K$ exist
  for every $\lambda\in[0,1]$. If the minimizer is unique for each $\lambda$, the minima trace
  a continuous data-physics tradeoff path
  \begin{equation}
    \mathcal{P} = \left\{
      z^\star(\lambda) = \argmin_{z \in K}
      \Phi_\lambda(z)
      \;\middle|\;
      \lambda \in [0,1]
    \right\}.
  \end{equation}
\end{corollary}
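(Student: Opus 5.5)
Existence follows from Weierstrass, and continuity of the path from a Berge-type compactness argument. The first step is existence. For each fixed $\lambda\in[0,1]$, the function $\Phi_\lambda$ is continuous on the compact set $K$ by hypothesis, so it attains its infimum and $\argmin_{z\in K}\Phi_\lambda(z)\neq\varnothing$. Under the uniqueness hypothesis this defines a map $\lambda\mapsto z^\star(\lambda)$.

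The second step is continuity in $\lambda$, and here I would use the affine structure recorded in Theorem~\ref{thm:physics_data_interpolation}. Write
\[
\Phi_\lambda=\Phi_{\mathrm{data}}+\Phi_{\mathrm{task}}+\lambda\bigl(\Phi_{\mathrm{known}}-\Phi_{\mathrm{data}}\bigr).
\]
Continuity of $\Phi_0$ and $\Phi_1$ on $K$ gives continuity of $\Phi_{\mathrm{known}}-\Phi_{\mathrm{data}}$ on $K$, so it is bounded there by some $M<\infty$. Hence $\sup_{z\in K}|\Phi_\lambda(z)-\Phi_{\lambda'}(z)|\le M|\lambda-\lambda'|$, i.e.\ $\Phi_\lambda$ depends uniformly continuously on $\lambda$, and $(\lambda,z)\mapsto\Phi_\lambda(z)$ is jointly continuous on $[0,1]\times K$.

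Now take $\lambda_k\to\lambda$ and suppose $z^\star(\lambda_k)\not\to z^\star(\lambda)$. By compactness of $K$ we can pass to a subsequence converging to some $\bar z\neq z^\star(\lambda)$. For every $z\in K$ we have $\Phi_{\lambda_k}(z^\star(\lambda_k))\le\Phi_{\lambda_k}(z)$. Letting $k\to\infty$ and using joint continuity gives $\Phi_\lambda(\bar z)\le\Phi_\lambda(z)$. So $\bar z$ is a minimizer of $\Phi_\lambda$, which contradicts uniqueness. Therefore $z^\star$ is continuous on $[0,1]$, and $\mathcal P$, as the continuous image of an interval, is a continuous path (indeed compact and connected).

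The main obstacle is the joint continuity in $(\lambda,z)$. The corollary only assumes each $\Phi_\lambda$ is continuous, so the argument hinges on the affine-in-$\lambda$ form of $\Phi_\lambda$ to upgrade pointwise continuity to uniform-in-$z$ continuity in $\lambda$. This upgrade needs $\Phi_{\mathrm{known}}$ and $\Phi_{\mathrm{data}}$ to be finite on $K$, which holds on the data-supported region by Assumption~\ref{ass:standing_theory}(c), since $p_\theta^Z>0$ there.
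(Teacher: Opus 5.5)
Your proof is correct and follows the paper's route: Weierstrass gives existence, the affine dependence on $\lambda$ gives joint continuity of $(\lambda,z)\mapsto\Phi_\lambda(z)$, and uniqueness then gives continuity of the minimizer. Your subsequence argument is a by-hand proof of the special case of Berge's maximum theorem that the paper cites (upper hemicontinuity of the argmin plus single-valuedness). Two small points: before invoking compactness, first pass to a subsequence with $d_G(z^\star(\lambda_k),z^\star(\lambda))\ge r>0$, so that the limit $\bar z$ is genuinely different from $z^\star(\lambda)$; and the appeal to Assumption~\ref{ass:standing_theory}(c) is unnecessary, since real-valued continuity of $\Phi_0$ and $\Phi_1$ on $K$ is already part of the hypothesis.
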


\begin{proof}
  Existence follows from the Weierstrass theorem because $\Phi_\lambda$ is continuous on
  the compact region $K$. The joint map $(z,\lambda)\mapsto\Phi_\lambda(z)$ is continuous since
  it is affine in $\lambda$ and continuous in $z$. Berge's maximum theorem implies that
  the argmin correspondence is nonempty, compact-valued, and upper hemicontinuous. When
  the minimizer is unique for every $\lambda$, this correspondence is single-valued; a
  single-valued upper hemicontinuous argmin map on a compact parameter interval is
  continuous. Therefore $z^\star(\lambda)$ traces the stated continuous tradeoff path.
\end{proof}

\subsection{Online control used in the theory}

The stability and interpolation results below analyze the online controller defined in
Section~\ref{sec:online}. In particular, the learned models are frozen, the controller
operates in latent space, known constraints are checked after decoding through $D_\psi$,
and only the first action of the selected segment is executed.

\subsection{Wasserstein gradient flow interpretation}
\label{sec:wasserstein}

\subsubsection{GDC as Wasserstein gradient flow on the empirical manifold}

\begin{theorem}[GDC as Wasserstein gradient flow]
  \label{thm:wasserstein}
  Under Assumption~\ref{ass:standing_theory}, the Fokker--Planck evolution
  \eqref{eq:manifold_fokker_planck} on $\Mhat$
  is the Wasserstein gradient flow of the free energy:
  \begin{equation}
    \mathcal{F}[p] = \E_p\!\left[
      -\log p_\theta^Z(z \mid o, g, \Cknown,H^\star)
      + \Phi_{\mathrm{known}}(z)
      + \Phi_{\mathrm{task}}(z)
    \right]
    - \beta_{\mathrm{L}}^{-1} H_G(p),
  \end{equation}
  where the Wasserstein metric is computed with respect to geodesic distances on
  $(\Mhat, G)$.
\end{theorem}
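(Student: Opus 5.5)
Following Jordan--Kinderlehrer--Otto (JKO) \cite{jordan1998variational}, we will show that the Fokker--Planck equation \eqref{eq:manifold_fokker_planck} is exactly the continuity equation driven by minus the Wasserstein gradient of $\mathcal F$. The approach is formal Otto calculus on $(\Mhat,G)$. We work in the frozen context $c$ and use the standing regularity and no-flux conventions of Assumption~\ref{ass:standing_theory}.

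The first step is to identify $\mathcal F$ with the free energy already used in the free-energy descent proposition. This follows from the decomposition \eqref{eq:total_potential}:
\[
\Phi_{\mathrm{GDC}}=-\log p_\theta^Z+\Phi_{\mathrm{known}}+\Phi_{\mathrm{task}}.
\]
We then recall Otto's Riemannian structure on the space $\mathcal P_2(\Mhat)$ of densities with respect to $\dd\volG$. A tangent vector at $p$ is written $\partial_t p=-\dive_G(p\,\grad_G\psi)$, with metric
\[
g_p(\partial_t p,\partial_t p)=\int p\|\grad_G\psi\|_G^2\,\dd\volG .
\]
By the Benamou--Brenier formula on a Riemannian manifold (or McCann--Otto), this metric induces the $W_2$ distance built from $d_G$.

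The second step computes the Wasserstein gradient. The first variation is $\delta\mathcal F/\delta p=\Phi_{\mathrm{GDC}}+\beta_{\mathrm L}^{-1}(\log p+1)$, as computed in the free-energy proposition. For any tangent vector generated by $\psi$, integration by parts with no flux gives
\[
\frac{\dd}{\dd s}\mathcal F
=\int \frac{\delta\mathcal F}{\delta p}\,\partial_s p\,\dd\volG
=\int p\,\bigl\langle \grad_G\tfrac{\delta\mathcal F}{\delta p},\grad_G\psi\bigr\rangle_G\,\dd\volG .
\]
Riesz identification in $g_p$ then gives
\[
\grad_{W}\mathcal F[p]=-\dive_G\bigl(p\,\grad_G(\Phi_{\mathrm{GDC}}+\beta_{\mathrm L}^{-1}\log p)\bigr).
\]
Using $p\,\grad_G\log p=\grad_G p$ and $\dive_G\grad_G=\Delta_G$, the equation $\partial_t p=-\grad_W\mathcal F[p]$ becomes exactly \eqref{eq:manifold_fokker_planck}. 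This establishes the identity at the level of smooth positive solutions.

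The third step makes the statement rigorous beyond the formal calculation. On a compact data-supported region with reflecting boundary, we would invoke the JKO minimizing-movement scheme
\[
p^{k+1}\in\argmin_p\ \mathcal F[p]+\tfrac{1}{2h}W_2^2(p,p^k).
\]
Its Euler--Lagrange equation, derived by pushing forward along $\mathrm{Exp}(s\,\grad_G\varphi)$, is a discrete weak form of \eqref{eq:manifold_fokker_planck}. Compactness as $h\to0$ then yields a weak solution. Alternatively, we would cite Ambrosio--Gigli--Savaré/Erbar: because $\Phi_{\mathrm{GDC}}$ is $C^2$ and the region is compact, the potential energy is $\lambda$-convex along geodesics (bounded Hessian); the entropy is $K$-convex under a Ricci lower bound, which holds automatically on a compact $C^2$ region. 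The EVI gradient flow therefore exists, is unique, and coincides with the Fokker--Planck solution.

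We expect this last step to be the main obstacle. Assumption~\ref{ass:standing_theory} guarantees only $C^1$ regularity of $G$ and says little about boundary geometry or Ricci bounds, so a fully rigorous metric-space statement needs extra hypotheses. Our plan is to state the theorem in Otto's formal sense, proved by the computation in the second step, and to add a remark that rigor holds under compactness, a $C^2$ metric, and a $C^2$ potential via the results above.
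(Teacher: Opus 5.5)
Your Step 2 is essentially the paper's proof, and it is correct at the same formal level. The paper likewise computes the first variation $\Phi_{\mathrm{GDC}}+\beta_{\mathrm{L}}^{-1}(\log p+1)$ and writes the gradient flow as $\partial_t p=\dive_G(p\,\grad_G\,\delta\mathcal F/\delta p)$; you derive this form by Riesz identification in the Otto metric, whereas the paper simply asserts it. Both then use $p\,\grad_G\log p=\grad_G p$ to recover \eqref{eq:manifold_fokker_planck}.

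Your Step 3 goes beyond the paper, which makes no rigorous metric-space claim. If you keep it as a remark, two corrections are needed. A Ricci lower bound requires a $C^2$ metric, not just a compact $C^2$ region. The displacement-convexity/EVI theory on a region with boundary also needs the region to be geodesically convex, which is not automatic and is missing from your list of extra hypotheses.
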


\begin{proof}
  Write
  \[
    \mathcal F[p]
    =
    \int_{\Mhat}p(z)\Phi_{\mathrm{GDC}}(z)\,\dd\volG
    +\beta_{\mathrm{L}}^{-1}\int_{\Mhat}p(z)\log p(z)\,\dd\volG .
  \]
  Its first variation is
  \[
    \frac{\delta\mathcal F}{\delta p}
    =
    \Phi_{\mathrm{GDC}}+\beta_{\mathrm{L}}^{-1}(\log p+1).
  \]
  In the $2$-Wasserstein geometry induced by the Riemannian metric $G$, the gradient-flow
  equation for $\mathcal F$ is the continuity equation
  \[
    \partial_t p
    =
    \dive_G\!\left(p\,\grad_G\frac{\delta\mathcal F}{\delta p}\right).
  \]
  Dropping the additive constant in the first variation gives
  \[
    \partial_t p
    =
    \dive_G\!\left(p\,\grad_G\Phi_{\mathrm{GDC}}\right)
    +\beta_{\mathrm{L}}^{-1}\dive_G\!\left(p\,\grad_G\log p\right).
  \]
  Since $p\,\grad_G\log p=\grad_G p$, the second term is
  $\beta_{\mathrm{L}}^{-1}\Delta_G p$. This is exactly \eqref{eq:manifold_fokker_planck}.
\end{proof}

\emph{GDC is the steepest descent in probability space over $\Mhat$,
simultaneously minimizing the expected control potential (known physics, data-induced
implicit information, and task cost) and maximizing behavioral entropy on the manifold.}

\FloatBarrier


\end{document}